\def\eqref#1{equation~\ref{#1}}
\def\ceil#1{\lceil #1 \rceil}
\def\1{\bm{1}}
\def\vtheta{{\bm{\theta}}}
\def\vn{{\bm{n}}}
\DeclareMathAlphabet{\mathsfit}{\encodingdefault}{\sfdefault}{m}{sl}
\SetMathAlphabet{\mathsfit}{bold}{\encodingdefault}{\sfdefault}{bx}{n}
\newcommand{\R}{\mathbb{R}}
\def\rch{\operatorname{rch}} % reach of a manifold
\def\vol{\operatorname{vol}} % volume
\def\surf{\operatorname{surf}} %surface area
\def\diam{\operatorname{diam}}
\def\nn{\operatorname{nn}} % nearest neighbor
\def\Vor{\operatorname{Vor}}
\def\Del{\operatorname{Del}}
\newtheorem{theorem}{Theorem} % section
\newtheorem{lemma}[theorem]{Lemma}
\newtheorem{cor}[theorem]{Corollary}
\title{On the Geometry of Adversarial Examples}
\author{Marc Khoury \footnote{khoury@eecs.berkeley.edu}\\
University of California, Berkeley
\and
Dylan Hadfield-Menell \footnote{dhm@eecs.berkeley.edu}\\
University of California, Berkeley
}
\begin{document}

\maketitle

\begin{abstract}
Adversarial examples are a pervasive phenomenon of machine learning models where seemingly imperceptible perturbations to the input lead to misclassifications for otherwise statistically accurate models. We propose a geometric framework, drawing on tools from the manifold reconstruction literature, to analyze the high-dimensional geometry of adversarial examples. In particular, we highlight the importance of \emph{codimension}: for low-dimensional data manifolds embedded in high-dimensional space there are many directions off the manifold in which to construct adversarial examples. Adversarial examples are a natural consequence of learning a decision boundary that classifies the low-dimensional data manifold well, but classifies points near the manifold incorrectly. Using our geometric framework we prove (1) a tradeoff between robustness under different norms, (2) that adversarial training in balls around the data is sample inefficient, and (3) sufficient sampling conditions under which nearest neighbor classifiers and ball-based adversarial training are robust.

\textbf{keywords:} adversarial examples, high-dimensional geometry, robustness, generalization

\end{abstract}

\thispagestyle{empty}
\setcounter{page}{0}
\newpage

\section{Introduction}
\label{sec:intro}

Deep learning at scale has led to breakthroughs on important problems
in computer vision~(\cite{Krizhevsky12}), natural language processing~(\cite{Wu16}),
and robotics~(\cite{Levine15}). Shortly thereafter, the
interesting phenomena of \emph{adversarial examples} was observed. A seemingly
ubiquitous property of machine learning models where perturbations of
the input that are imperceptible to humans reliably lead to confident
incorrect classifications (\cite{Szegedy13,Goodfellow14}). 
%There has been substantial subsequent work to understand this apparent brittleness and provide methods to train models that are robust to such perturbations.
What has ensued is a standard story from the security literature: a
game of cat and mouse where defenses are proposed only to be quickly
defeated by stronger attacks (\cite{Athalye18}). This has led
researchers to develop methods which are provably robust under
specific attack models (\cite{Madry17, Wong18a, Sinha18, Raghunathan18}). As machine
learning proliferates into society, including security-critical
settings like health care~(\cite{Esteva17}) or autonomous
vehicles~(\cite{Codevilla18}), it is crucial to develop methods that
allow us to understand the vulnerability of our models and design
appropriate counter-measures.

In this paper, we propose a geometric framework for analyzing the
phenomenon of adversarial examples. We leverage the observation that
datasets encountered in practice exhibit low-dimensional structure
despite being embedded in very high-dimensional input spaces. This
property is colloquially referred to as the ``Manifold Hypothesis'':
the idea that low-dimensional structure of `real' data leads to
tractable learning. We model data as being sampled from class-specific
low-dimensional manifolds embedded in a high-dimensional space. We consider a threat model where an adversary
may choose \emph{any} point on the data manifold to perturb by
$\epsilon$ in order to fool a classifier. In order to be robust to
such an adversary, a classifier must be correct everywhere in an
$\epsilon$-tube around the data manifold. Observe that, even though
the data manifold is a low-dimensional object, this tube has the same
dimension as the entire space the manifold is embedded in. Our
analysis argues that adversarial examples are a natural consequence of learning a decision boundary
that classifies all points on a low-dimensional data manifold correctly, but
classifies many points near the manifold incorrectly.
The high \emph{codimension}, the difference between
the dimension of the data manifold and the dimension of the embedding
space, is a key source of the pervasiveness of adversarial examples.

Our paper makes the following contributions. First, we develop a geometric framework,
inspired by the manifold reconstruction literature, that formalizes
the manifold hypothesis described above and our attack model. 
Second, we highlight the role \emph{codimension}
plays in vulnerability to adversarial examples. As the codimension increases,
there are an increasing number of directions off the data manifold in which to construct
adversarial perturbations. Prior work has attributed vulnerability
to adversarial examples to input dimension (\cite{Gilmer18}). 
This is the first work that investigates the
role of \emph{codimension} in adversarial examples. Interestingly, we find that different 
classification algorithms are less sensitive to changes in codimension.
Third, we apply this framework to prove the following results: 
(1) we show that the choice of norm to restrict
an adversary is important in that there exists a tradeoff between being
robust to different norms: we present a classification problem where
improving robustness under the $\|\cdot \|_{\infty}$ norm requires a
loss of $\Omega(1 - 1/\sqrt{d})$ in robustness to the
$\|\cdot\|_{2}$ norm; (2) we show that a common approach, training
against adversarial examples drawn from balls around the training set,
is insufficient to learn robust decision boundaries with
realistic amounts of data; and (3) we show that nearest neighbor classifiers do
not suffer from this insufficiency, due to geometric properties of their
decision boundary away from data, and thus represent a potentially
robust classification algorithm. Finally we provide experimental 
evidence on synthetic datasets and MNIST that support our theoretical results.

\section{Related Work}

This paper approaches the problem of adversarial examples using techniques and intuition from the manifold reconstruction literature. Both fields have a great deal of prior work, so we focus on only the most related papers here.

\subsection{Adversarial Examples}
Some previous work has considered the relationships between adversarial examples and high dimensional geometry. \cite{Franceschi18} explore the robustness of classifiers to random noise in terms of distance to the decision boundary, under the assumption that the decision boundary is locally flat. The work of \cite{Gilmer18} experimentally evaluated the setting of two concentric under-sampled $499$-spheres embedded in $\R^{500}$, and concluded that adversarial examples occur on the data manifold. In contrast, we present a geometric framework for proving robustness guarantees for learning algorithms, that makes no assumptions on the decision boundary. We carefully sample the data manifold in order to highlight the importance of \emph{codimension}; adversarial examples exist \emph{even} when the manifold is perfectly classified. Additionally we explore the importance of the spacing between the constituent data manifolds, sampling requirements for learning algorithms, and the relationship between model complexity and robustness. 

\cite{Wang18} explore the robustness of $k$-nearest neighbor classifiers to adversarial examples. In the setting where the Bayes optimal classifier is uncertain about the true label of each point, they show that $k$-nearest neighbors is not robust if $k$ is a small constant. They also show that if $k \in \Omega(\sqrt{dn\log{n}})$, then $k$-nearest neighbors is robust. Using our geometric framework we show a complementary result: in the setting where each point is certain of its label, $1$-nearest neighbors is robust to adversarial examples.

The decision and medial axes defined in Section~\ref{sec:geom} are maximum margin decision boundaries. Hard margin SVMs define define a linear separator with maximum margin, maximum distance from the training data (\cite{Cortes95}). Kernel methods allow for maximum margin decision boundaries that are non-linear by using additional features to project the data into a higher-dimensional feature space (\cite{Taylor04}). The decision and medial axes generalize the notion of maximum margin to account for the arbitrary curvature of the data manifolds. There have been attempts to incorporate maximum margins into deep learning (\cite{Sun16,Liu16,Liang17,Elsayed18}), often by designing loss functions that encourage large margins at either the output (\cite{Sun16}) or at any layer (\cite{Elsayed18}). In contrast, the decision axis is defined on the input space and we use it as an analysis tool for proving robustness guarantees. 

\subsection{Manifold Reconstruction}
Manifold reconstruction is the problem of discovering the structure of a $k$-dimensional manifold embedded in $\R^d$, given \emph{only} a set of points sampled from the manifold. A large vein of research in manifold reconstruction develops algorithms that are \emph{provably good}: if the points sampled from the underlying manifold are sufficiently dense, these algorithms are guaranteed to produce a geometrically accurate representation of the unknown manifold with the correct topology. The output of these algorithms is often a \emph{simplicial complex}, a set of simplices such as triangles, tetrahedra, and higher-dimensional variants, that approximate the unknown manifold. In particular these algorithms output subsets of the Delaunay triangulation, which along with their geometric dual the Voronoi diagram, have properties that aid in proving geometric and topological guarantees (\cite{Edelsbrunner97}).

The field first focused on curve reconstruction in $\R^2$ (\cite{Amenta98}) and subsequently in $\R^3$ (\cite{Dey99}). Soon after algorithms were developed for surface reconstruction in $\R^3$, both in the noise-free setting (\cite{Amenta99, Amenta02}) and in the presence of noise (\cite{Dey04}). We borrow heavily from the analysis tools of these early works, including the medial axis and the reach. However we emphasize that we have adapted these tools to the learning setting. To the best of our knowledge, our work is the first to consider the medial axis under different norms.

In higher-dimensional embedding spaces (large $d$), manifold reconstruction algorithms face the \emph{curse of dimensionality}. In particular, the Delaunay triangulation, which forms the bedrock of algorithms in low-dimensions, of $n$ vertices in $\R^d$ can have up to $\Theta(n^{\ceil{d/2}})$ simplices. To circumvent the curse of dimensionality, algorithms were proposed that compute subsets of the Delaunay triangulation restricted to the $k$-dimensional tangent spaces of the manifold at each sample point (\cite{Boissonnat14}). Unfortunately, progress on higher-dimensional manifolds has been limited due to the presence of so-called ``sliver'' simplices, poorly shaped simplices that cause in-consistences between the local triangulations constructed in each tangent space (\cite{Cheng05, Boissonnat14}). Techniques that provably remove sliver simplices have prohibitive sampling requirements (\cite{Cheng00, Boissonnat14}). Even in the special case of surfaces ($k=2$) embedded in high dimensions ($d > 3$), algorithms with practical sampling requirements have only recently been proposed (\cite{Khoury16}). Our use of tubular neighborhoods as a tool for analysis is borrowed from \cite{Dey05} and \cite{Khoury16}. 

In this paper we are interested in \emph{learning} robust decision boundaries, \emph{not} reconstructing the underlying data manifolds, and so we avoid the use of Delaunay triangulations and their difficulties entirely. In Section~\ref{sec:proverobust} we present robustness guarantees for two learning algorithms in terms of a sampling condition on the underlying manifold. These sampling requirements scale with the dimension of the underlying manifold $k$, \emph{not} with the dimension of the embedding space $d$.

\section{The Geometry of Data}
\label{sec:geom}
We model data as being sampled from a set of low-dimensional manifolds
(with or without boundary) embedded in a high-dimensional space $\R^d$.
We use $k$ to denote the dimension of a manifold $\mathcal{M} \subset \R^d$.
The special case of a $1$-manifold is called a \emph{curve}, and
a $2$-manifold is a \emph{surface}.
The \emph{codimension} of $\mathcal{M}$ is $d - k$, the difference between
the dimension of the manifold and the dimension of the embedding space.
The ``Manifold Hypothesis'' is the observation that in practice, data is often
sampled from manifolds, usually of high codimension.

In this paper we are primarily interested in the classification problem. Thus we model data as being sampled from $C$ \emph{class manifolds} $\mathcal{M}_{1}, \ldots, \mathcal{M}_{C}$, one for each class. When we wish to refer to the entire space from which a dataset is sampled, we refer to the $\emph{data manifold}$ $\mathcal{M} = \cup_{1 \leq j \leq C} \mathcal{M}_j$. We often work with a finite sample of $n$ points, $X \subset \mathcal{M}$, and we write $X = \{ X_1, X_2, \ldots, X_n \}$. Each sample point $X_i$ has an accompanying class label $y_i \in \{ 1, 2, \ldots, C \}$ indicating which manifold $\mathcal{M}_{y_i}$ the point $X_i$ is sampled from.

Consider a $\|\cdot\|_{p}$-ball $B$ centered at some point $c \in \R^d$ and imagine growing $B$ by increasing its radius starting from zero. For nearly all starting points $c$, the ball $B$ eventually intersects one, \emph{and only one}, of the $\mathcal{M}_i$'s. Thus the nearest point to $c$ on $\mathcal{M}$, in the norm $\|\cdot\|_{p}$, lies on $\mathcal{M}_{i}$. (Note that the nearest point on $\mathcal{M}_{i}$ need not be unique.) 

The \emph{decision axis} $\Lambda_{p}$ of $\mathcal{M}$ is the set of points $c$ such that the boundary of $B$ intersects two or more of the $\mathcal{M}_{i}$, but the interior of $B$ does not intersect $\mathcal{M}$ at all. In other words, the decision axis $\Lambda_p$ is the set of points that have two or more closest points, in the norm $\|\cdot\|_{p}$, \emph{on distinct class manifolds}. See Figure \ref{fig:medialaxis}. The decision axis is inspired by the medial axis, which was first proposed by \cite{Blum67} in the context of image analysis and subsequently modified for the purposes of curve and surface reconstruction by \cite{Amenta98, Amenta02}. We have modified the definition to account for multiple class manifolds and have renamed our variant in order to avoid confusion in the future.

The decision axis $\Lambda_p$ can intuitively be thought of as a decision boundary that is optimal in the following sense. First, $\Lambda_p$ separates the class manifolds when they do not intersect (Lemma \ref{lem:medialaxisseparate}). Second, each point of $\Lambda_p$ is as far away from the class manifolds as possible in the norm $\|\cdot\|_{p}$. As shown in the leftmost example in Figure \ref{fig:medialaxis}, in the case of two linearly separable circles of equal radius, the decision axis $\Lambda_{2}$ is exactly the line that separates the data with maximum margin. For arbitrary manifolds, $\Lambda_p$ generalizes the notion of maximum margin to account for the arbitrary curvature of the class manifolds. 

\begin{figure}[h!]
\begin{center}
\includegraphics[width=0.9\textwidth]{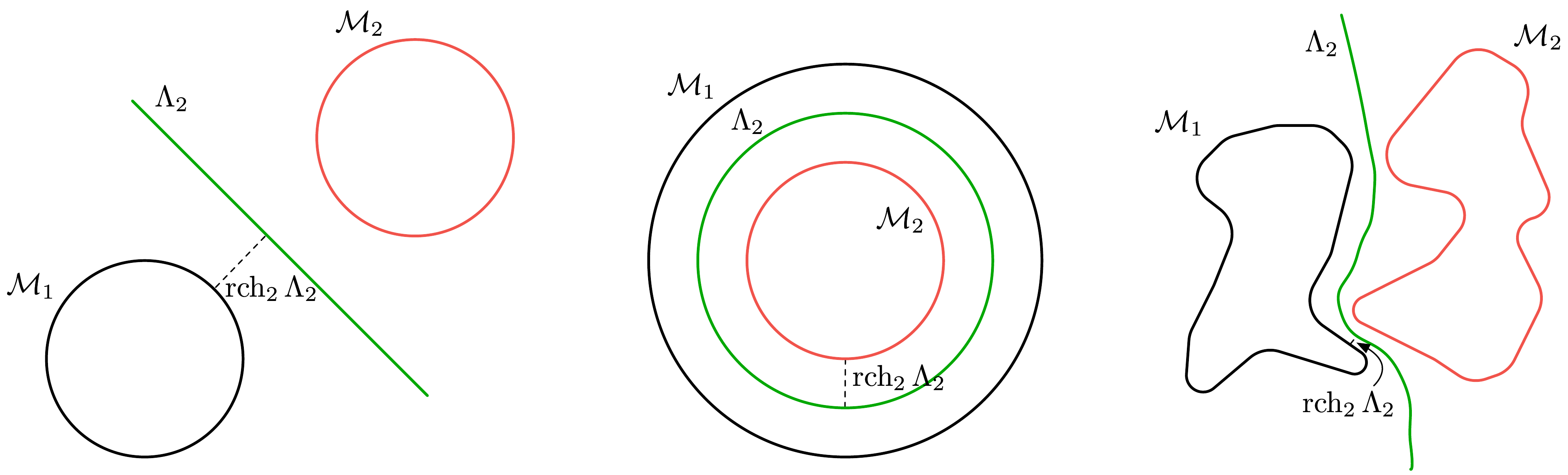}
\caption{Examples of the decision axis $\Lambda_2$, shown here in green, for different data manifolds. Intuitively, the decision axis captures an optimal decision boundary between the data manifolds. It's optimal in the sense that each point on the decision axis is as far away from each data manifold as possible. Notice that in the first example, the decision axis coincides with the maximum margin line.}
\label{fig:medialaxis}
\end{center}
\end{figure} 

Let $T \subset \R^d$ be any set. The \emph{reach} $\rch_{p}{(T; \mathcal{M})}$ of $\mathcal{M}$ is defined as $\inf_{x \in \mathcal{M}, y \in T} \|x - y\|_{p}$. When $\mathcal{M}$ is compact, the reach is achieved by the point on $\mathcal{M}$ that is closest to $T$ under the $\|\cdot\|_{p}$ norm. We will drop $\mathcal{M}$ from the notation when it is understood from context. 

Finally, an $\epsilon$-\emph{tubular neighborhood} of $\mathcal{M}$ is defined as $\mathcal{M}^{\epsilon, p} = \{x \in \R^d: \inf_{y \in \mathcal{M}}\|x - y\|_p \leq \epsilon\}$. That is, $\mathcal{M}^{\epsilon, p}$ is the set of all points whose distance to $\mathcal{M}$ under the metric induced by $\|\cdot\|_{p}$ is less than $\epsilon$. Note that while $\mathcal{M}$ is $k$-dimensional, $\mathcal{M}^{\epsilon, p}$ is always $d$-dimensional. Tubular neighborhoods are how we rigorously define adversarial examples. Consider a classifier $f: \R^d \rightarrow [C]$ for $\mathcal{M}$. An $\epsilon$-\emph{adversarial example} is a point $x \in \mathcal{M}_{i}^{\epsilon, p}$ such that $f(x) \neq i$. A classifier $f$ is robust to all $\epsilon$-adversarial examples when $f$ correctly classifies not only $\mathcal{M}$, but all of $\mathcal{M}^{\epsilon, p}$. Thus the problem of being robust to adversarial examples is rightly seen as one of \emph{generalization}. In this paper we will be primarily concerned with exploring the conditions under which we can provably learn a decision boundary that correctly classifies $\mathcal{M}^{\epsilon, p}$. When $\epsilon < \rch_p{\Lambda_p}$, the decision axis $\Lambda_p$ is one decision boundary that correctly classifies $\mathcal{M}^{\epsilon, p}$ (Corollary~\ref{cor:medialaxisdb}). Throughout the remainder of the paper we will drop the $p$ in $\mathcal{M}^{\epsilon, p}$ from the notation, instead writing $\mathcal{M}^{\epsilon}$; the norm will always be clear from context.

The geometric quantities defined above can be defined more generally for any distance metric $d(\cdot, \cdot)$. In this paper we will focus exclusively on the metrics induced by the norms $\|\cdot\|_{p}$ for $p > 0$. The decision axis under $\|\cdot\|_{2}$ is in general \emph{not} identical to the decision axis under $\|\cdot\|_{\infty}$. In Section \ref{sssec:tradeoff} we will prove that since $\Lambda_{2}$ is not identical to $\Lambda_{\infty}$ there exists a tradeoff in the robustness of any decision boundary between the two norms. 

\section{A Provable Tradeoff in Robustness Between Norms}
\label{sssec:tradeoff}
\cite{Schott18} explore the vulnerability of robust classifiers to attacks under different norms. In particular, they take the robust pretrained classifier of \cite{Madry17}, which was trained to be robust to $\|\cdot\|_{\infty}$-perturbations, and subject it to $\|\cdot\|_{0}$ and $\|\cdot\|_{2}$ attacks. They show that accuracy drops to $0\%$ under $\|\cdot\|_{0}$ attacks and to $35\%$ under $\|\cdot\|_{2}$. Here we explain why poor robustness under the norm $\|\cdot\|_2$ should be expected. 

We say a decision boundary $\mathcal{D}_{f}$ for a classifier $f$ is $\epsilon$-robust in the $\|\cdot\|_{p}$ norm if $\epsilon < \rch_{p}{\mathcal{D}_{f}}$. In words, starting from any point $x \in \mathcal{M}$, a perturbation $\eta_{x}$ must have $p$-norm greater than $\rch_{p}{\mathcal{D}_{f}}$ to cross the decision boundary. The most robust decision boundary to $\|\cdot\|_{p}$-perturbations is $\Lambda_p$. In Theorem \ref{thm:tradeoff} we construct a learning setting where $\Lambda_2$ is distinct from $\Lambda_{\infty}$. Thus, in general, \emph{no single decision boundary can be optimally robust in all norms}. 

\begin{theorem}
\label{thm:tradeoff}
Let $S_1, S_2 \subset \R^{d+1}$ be two concentric $d$-spheres with radii $r_1 < r_2$ respectively. Let $S = S_{1} \cup S_{2}$ and let $\Lambda_2, \Lambda_{\infty}$ be the $\|\cdot\|_{2}$ and $\|\cdot\|_{\infty}$ decision axes of $S$. Then $\Lambda_2 \neq \Lambda_{\infty}$. Furthermore $\rch_{2}{\Lambda_{\infty}} \in \mathcal{O}(\rch_{2}{\Lambda_{2}} / \sqrt{d})$.  
\end{theorem}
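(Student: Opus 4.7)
The plan is to probe $\Lambda_2$ and $\Lambda_\infty$ along a coordinate axis, exploiting the hyperoctahedral symmetry shared by both. First I would compute $\Lambda_2$: by rotational symmetry a point $c$ is $\|\cdot\|_2$-equidistant from $S_1$ and $S_2$ iff $\|c\|_2 = (r_1+r_2)/2$, and its open Euclidean ball of radius $(r_2-r_1)/2$ then misses both spheres, so $\Lambda_2$ is the sphere of radius $(r_1+r_2)/2$ about the origin and $\rch_2(\Lambda_2) = (r_2-r_1)/2$.

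Next I would construct a witness point on $\Lambda_\infty$ of the form $c^\star = (t^\star, 0, \ldots, 0)$ with $t^\star \in (r_1, r_2)$. The growing $\|\cdot\|_\infty$-ball around $c^\star$ is an axis-aligned cube, and the central box-sphere intersection calculation shows that from outside $S_1$ the shortest such cube touches $S_1$ at $(r_1, 0, \ldots, 0)$ on the face nearest the origin, giving $d_\infty(c^\star, S_1) = t^\star - r_1$; from inside $S_2$ it first touches $S_2$ at the far corners $(t^\star + M, \pm M, \ldots, \pm M)$, where $(t^\star+M)^2 + dM^2 = r_2^2$, giving $d_\infty(c^\star, S_2) = (-t^\star + \sqrt{(d+1)\, r_2^2 - d\, t^{\star 2}})/(d+1)$. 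Equating and squaring produces the quadratic $(d+4)\, t^2 - 2(d+2)\, r_1\, t + (d+1)\, r_1^2 - r_2^2 = 0$, whose root in $(r_1, r_2)$ is $t^\star = ((d+2) r_1 + \sqrt{(d+4)\, r_2^2 - d\, r_1^2})/(d+4)$. Substituting $t = (r_1+r_2)/2$ into the LHS yields $d(r_2-r_1)^2/4$, which is strictly positive for every $d \geq 1$, so $t^\star \neq (r_1+r_2)/2$ and $c^\star \in \Lambda_\infty \setminus \Lambda_2$, proving $\Lambda_\infty \neq \Lambda_2$.

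For the reach bound I would write $\rch_2(\Lambda_\infty) \leq \|c^\star - (r_1, 0, \ldots, 0)\|_2 = t^\star - r_1 = (\sqrt{(d+4)\, r_2^2 - d\, r_1^2} - 2 r_1)/(d+4)$, and use the crude estimate $\sqrt{(d+4)\, r_2^2 - d\, r_1^2} \leq r_2 \sqrt{d+4}$ to obtain $\rch_2(\Lambda_\infty) \leq r_2/\sqrt{d+4}$, which is $\mathcal{O}(\rch_2(\Lambda_2)/\sqrt{d})$ since $\rch_2(\Lambda_2) = (r_2-r_1)/2$ is constant in $d$. Before invoking this bound I need to certify $c^\star \in \Lambda_\infty$ itself: for any interior point $x$ of the open cube of $\|\cdot\|_\infty$-radius $M = t^\star - r_1$ about $c^\star$, the inequality $x_1 > r_1$ forces $\|x\|_2 > r_1$, and $x_1^2 + \sum_{i \geq 2} x_i^2 < (t^\star + M)^2 + dM^2 = r_2^2$ forces $\|x\|_2 < r_2$, so the open cube misses $S_1 \cup S_2$ and the boundary touches both. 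The hard part will be the $\|\cdot\|_\infty$-distance calculation to $S_2$: unlike the Euclidean case, the $\|\cdot\|_\infty$-nearest point on $S_2$ is not on the ray from the origin through $c^\star$ but at a corner of the expanding cube, and correctly identifying this critical corner and ruling out earlier face- or edge-touching configurations is what drives the $\sqrt{d}$ gap and is the one step that truly uses the non-rotational geometry of the $\|\cdot\|_\infty$-ball.
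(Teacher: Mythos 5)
Your proposal is correct and follows essentially the same route as the paper: locate a witness point of $\Lambda_\infty$ on a coordinate axis by equating the face-distance of the growing $\|\cdot\|_\infty$-cube to $S_1$ with the corner-distance to $S_2$, solve the resulting quadratic, and read off that this balance point lies within $\mathcal{O}(1/\sqrt{d})$ of $S_1$ in the $\|\cdot\|_2$ norm. Your write-up is in fact somewhat more careful than the paper's (you certify that the open cube misses both spheres so the witness really lies on $\Lambda_\infty$, that the root lies in $(r_1,r_2)$, and your evaluation of the quadratic at $t=(r_1+r_2)/2$ yields $\Lambda_2\neq\Lambda_\infty$ for every $d\ge 1$ rather than only asymptotically).
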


\begin{proof}
The decision axis under $\|\cdot\|_{2}$, $\Lambda_2$, is just the $d$-sphere with radius $(r_1 + r_2) / 2$. However, $\Lambda_{\infty}$ is \emph{not} identical to $\Lambda_2$ in this setting; in fact most $\Lambda_{\infty}$ of approaches $S_1$ as $d$ increases. 

The geometry of a $\|\cdot\|_{\infty}$-ball $B_{\Delta}$ centered at $m \in \R^d$ with radius $\Delta$ is that of a hypercube centered at $m$ with side length $2\Delta$. To find a point on $\Lambda_{\infty}$ we place $B_{\Delta}$ tangent to the north pole $q$ of $S_{1}$ so that the corners of $B_{\Delta}$ touch $S_{2}$. The north pole has coordinate representation $q = (0, \ldots, 0, r_1)$, the center $m = (0, \ldots, 0, r_1 + \Delta)$, and a corner of $B_{\Delta}$ can be expressed as $p = (\Delta, \ldots, \Delta, r_1 + 2\Delta)$. Additionally we have the constraint that $\|p\|_{2} = r_2$ since $p \in S_{2}$. Then we can solve for $\Delta$ as 
\begin{align*}
r_{2}^2 &= \|p\|_{2}^{2} = (d - 1) \Delta^2 + (r_1 + 2 \Delta)^2 = (d + 3) \Delta^2 + 4r_1 \Delta + r_1^2;\\
\Delta &= \frac{-2 r_1 + \sqrt{r_1^2 + 3 r_{2}^2 + d (r_{2}^2-r_{1}^2)}}{d+3}, 
\end{align*}
where the last step follows from the quadratic formula and the fact that $\Delta > 0$. For fixed $r_{1}, r_{2}$, the value $\Delta$ scales as $\mathcal{O}(1/\sqrt{d})$. It follows that $\rch_{2}{\Lambda_{\infty}} \in \mathcal{O}(\rch_{2}{\Lambda_2} / \sqrt{d})$.
\end{proof}

From Theorem \ref{thm:tradeoff} we conclude that the minimum distance from $S_1$ to $\Lambda_{\infty}$ \emph{under the $\|\cdot\|_{2}$ norm} is upper bounded as $\rch_{2}{\Lambda_{\infty}} \in \mathcal{O}(\rch_{2}{\Lambda_2} / \sqrt{d})$. If a classifier $f$ is trained to learn $\Lambda_{\infty}$, an adversary, starting on $S_1$, can construct an $\|\cdot\|_{2}$ adversarial example for a perturbation as small as $\mathcal{O}(1/\sqrt{d})$. Thus we should \emph{expect} $f$ to be less robust to $\|\cdot\|_2$-perturbations. Figure~\ref{fig:robustness-tradeoff} verifies this result experimentally.

\begin{figure}
\centering
\includegraphics[width=0.5\textwidth]{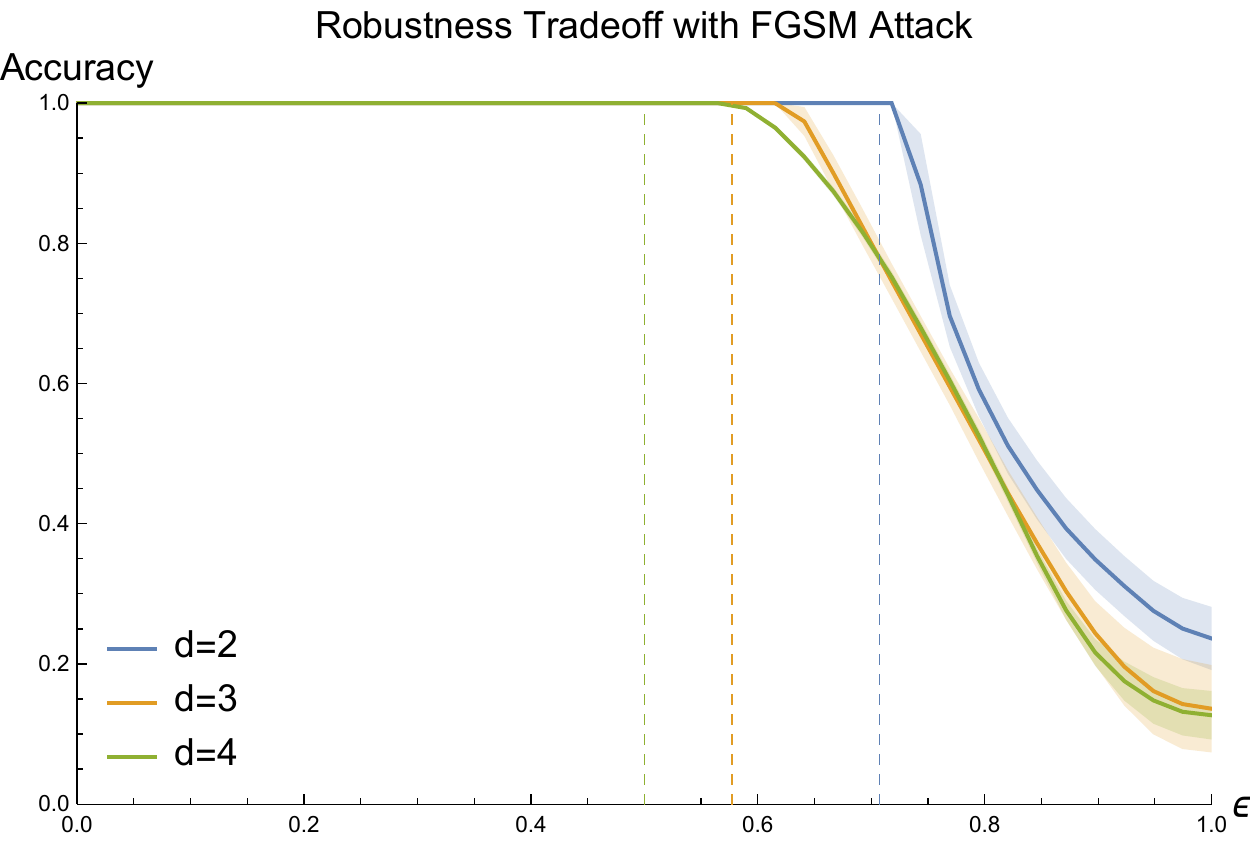}
\caption{\label{fig:robustness-tradeoff} As the dimension increases, the $\rch_{2}{(\Lambda_{\infty}; S_1 \cup S_2)}$ decreases, and so an $\|\cdot\|_{\infty}$ robust classifier is less robust to $\|\cdot\|_{2}$ attacks. The dashed lines are placed at $1 / \sqrt{d}$, where our theoretical results suggest we should start finding $\|\cdot\|_{2}$ adversarial examples. We use the robust $\|\cdot\|_{\infty}$ loss of \cite{Wong18a}}
\end{figure}

We expect that $\Lambda_{2} \neq \Lambda_{\infty}$ is the common case in practice. For example, Theorem~\ref{thm:tradeoff} extends immediately to concentric cylinders and intertwined tori by considering $2$-dimensional planar cross-sections. In general, we expect that $\Lambda_{2} \neq \Lambda_{\infty}$ in situations where a $2$-dimensional cross-section with $\mathcal{M}$ has nontrivial curvature.

Theorem \ref{thm:tradeoff} is important because, even in recent literature, researchers have attributed this phenomena to overfitting. \cite{Schott18} state that ``the widely recognized and by far most successful defense by Madry et al. (1) \emph{overfits} on the $L_{\infty}$ metric (it’s highly susceptible to $L_2$ and $L_0$ perturbations)'' (emphasis ours). We disagree; the \cite{Madry17} classifier performed exactly as intended. It learned a decision boundary that is robust under $\|\cdot\|_{\infty}$, which we have shown is quite different from the most robust decision boundary under $\|\cdot\|_{2}$. 

Interestingly, the proposed models of \cite{Schott18} also suffer from this tradeoff. Their model ABS has accuracy $80\%$ to $\|\cdot\|_{2}$ attacks but drops to $8\%$ for $\|\cdot\|_{\infty}$. Similarly their model ABS Binary has accuracy $77\%$ to $\|\cdot\|_{\infty}$ attacks but drops to $39\%$ for $\|\cdot\|_{2}$ attacks.

We reiterate, in general, no single decision boundary can be optimally robust in all norms. 
 
\section{Provably Robust Classifiers}
\label{sec:proverobust}
Adversarial training, the process of training on adversarial examples generated in a $\|\cdot\|_{p}$-ball around the training data, is a very natural approach to constructing robust models (\cite{Goodfellow14, Madry17}). In our notation this corresponds to training on samples drawn from $X^{\epsilon}$ for some $\epsilon$. While natural, we show that there are simple settings where this approach is much less sample-efficient than other classification algorithms, if the \emph{only} guarantee is correctness in $X^{\epsilon}$. 

Define a learning algorithm $\mathcal{L}$ with the property that, given a training set $X \subset \mathcal{M}$ sampled from a manifold $\mathcal{M}$, $\mathcal{L}$ outputs a model $f_{\mathcal{L}}$ such that for every $x \in X$ with label $y$, and every $\hat{x} \in B(x, \rch_{p}{\Lambda_{p}})$, $f_{\mathcal{L}}(\hat{x}) = f_{\mathcal{L}}(x) = y$. Here $B(x, r)$ denotes the ball centered at $x$ of radius $r$ in the relevant norm. That is, $\mathcal{L}$ learns a model that outputs the same label for any $\|\cdot\|_{p}$-perturbation of $x$ up to $\rch_{p}{\Lambda_{p}}$ as it outputs for $x$. $\mathcal{L}$ is our theoretical model of adversarial training (\cite{Goodfellow14, Madry17}). Theorem~\ref{thm:classexists} states that $\mathcal{L}$ is sample inefficient in high codimensions.  

\begin{theorem}
\label{thm:classexists}
There exists a classification algorithm $\mathcal{A}$ that, for a particular choice of $\mathcal{M}$, correctly classifies $\mathcal{M}^{\epsilon}$ using exponentially fewer samples than are required for $\mathcal{L}$ to correctly classify $\mathcal{M}^{\epsilon}$. 
\end{theorem}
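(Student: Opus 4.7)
The plan is to exhibit a manifold for which the ball-cover requirement implicit in $\mathcal{L}$'s guarantee forces exponentially many samples in the ambient dimension, while a structure-aware algorithm can recover the decision boundary exactly from linearly many samples. Take $\mathcal{M} = S_1 \cup S_2 \subset \R^d$ to be two concentric $(d-1)$-spheres of radii $r_1 < r_2$, and work throughout in the $\|\cdot\|_2$ norm. As computed in the proof of Theorem~\ref{thm:tradeoff}, the $\|\cdot\|_2$ decision axis is the concentric sphere of radius $(r_1+r_2)/2$, so $\rch_2{\Lambda_2} = (r_2-r_1)/2$.

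For the lower bound on $\mathcal{L}$, I would first observe that $\mathcal{L}$'s defining guarantee is vacuous outside $\bigcup_{x_i \in X} B(x_i, \rch_2{\Lambda_2})$: at any point not in this union there is an $\mathcal{L}$-consistent classifier that outputs the wrong label. Hence correctly classifying all of $\mathcal{M}^{\epsilon}$ forces these balls to cover $\mathcal{M}^{\epsilon}$, and in particular to cover the inner sphere $S_1$. I would then lower-bound the number of $\|\cdot\|_2$-balls of radius $\rho = (r_2-r_1)/2$ required to cover $S_1$ by a standard spherical-cap argument: such a ball intersects $S_1$ in a cap of angular radius $\theta$ with $2 r_1 \sin(\theta/2) = \rho$, and the relative $(d-1)$-dimensional surface area of this cap is $O((\rho/r_1)^{d-1})$. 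Therefore at least $\Omega\bigl((2r_1/(r_2-r_1))^{d-1}\bigr)$ training points are required, and fixing $r_1/(r_2-r_1)$ to any constant greater than $1/2$ yields an explicit exponential lower bound in $d$.

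For the matching upper bound, let $\mathcal{A}$ be the algorithm that takes $d+1$ samples from each class, fits a unique $(d-1)$-sphere through each set (since $d+1$ points in general position in $\R^d$ determine a unique such sphere by solving $d+1$ equations $\|x_i - c\|^2 = r^2$ in the $d+1$ unknowns $(c, r)$), and classifies a query point by which fitted sphere it lies closer to in $\|\cdot\|_2$. When the samples are drawn uniformly at random from each $S_i$, general position holds almost surely, the fitted spheres coincide with $S_1$ and $S_2$, and the induced decision boundary is exactly $\Lambda_2$; by Corollary~\ref{cor:medialaxisdb}, $\Lambda_2$ correctly classifies $\mathcal{M}^{\epsilon}$ whenever $\epsilon < \rch_2{\Lambda_2}$. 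Thus $\mathcal{A}$ succeeds with $2(d+1)$ samples while $\mathcal{L}$ requires $\Omega(c^{d-1})$ for some constant $c > 1$, giving the claimed exponential separation.

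The main obstacle is the covering-number lower bound on a high-dimensional sphere; I would invoke standard spherical-cap estimates (for instance the bound that a cap of angular radius $\theta < \pi/2$ has relative surface area at most a constant times $(\sin\theta)^{d-1}$) rather than rederive them, and pick $r_1, r_2$ so that the corresponding angular radius is bounded away from $\pi/2$. The remaining pieces—identifying the decision axis, checking that sphere fitting from $d+1$ generic points is exact, and applying Corollary~\ref{cor:medialaxisdb}—are essentially direct.
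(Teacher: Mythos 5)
Your proof is correct, but it takes a genuinely different route from the paper's. The paper obtains Theorem~\ref{thm:classexists} as a corollary of Theorems~\ref{thm:sampling} and \ref{thm:samplinggap}: it takes $\mathcal{M}$ to be two parallel bounded $k$-flats $\Pi_1, \Pi_2$, takes $\mathcal{A}$ to be the $1$-nearest-neighbor classifier, and derives the separation from the factor-of-two difference between the $\delta$-cover densities that suffice (and are necessary) for $f_{\nn}$ ($\delta \leq 2\sqrt{1-\epsilon}$) versus $f_{\mathcal{L}}$ ($\delta \leq \sqrt{1-\epsilon^2}$), which compounds to a ratio $|X_{\mathcal{L}}|/|X_{\nn}| \geq 2^{k/2}$ via covering-number volume arguments in the intrinsic dimension $k$. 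You instead use concentric $(d-1)$-spheres, a parametric sphere-fitting algorithm for $\mathcal{A}$, and a spherical-cap covering lower bound for $\mathcal{L}$; both your key steps (that $\mathcal{L}$'s guarantee is vacuous outside $\bigcup_i B(x_i,\rch_2\Lambda_2)$, so those balls must cover $S_1$, and that $d+1$ generic labeled points determine each sphere exactly) are sound, and your choice of radii with $2r_1/(r_2-r_1)>1$ does give an exponential-in-$d$ covering lower bound. Your version buys a more dramatic separation --- $O(d)$ samples versus $\Omega(c^{d-1})$, rather than a ratio of $2^{k/2}$ between two covering numbers --- and is self-contained. What it gives up is that your $\mathcal{A}$ is tailored to the known parametric form of $\mathcal{M}$ (any identifiable parametric family would do), whereas the paper deliberately instantiates $\mathcal{A}$ as a general-purpose learner, nearest neighbors, which is what makes the theorem actionable and connects it to the experiments; your construction also sits in codimension $1$, somewhat orthogonal to the paper's emphasis on high codimension, though the theorem statement itself does not require otherwise.
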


Theorem~\ref{thm:classexists} follows from Theorems~\ref{thm:sampling} and \ref{thm:samplinggap}. In Theorems~\ref{thm:sampling} and \ref{thm:samplinggap} we will prove that a nearest neighbor classifier $f_{\nn}$ is one such classification algorithm. Nearest neighbor classifiers are naturally robust in high codimensions because the Voronoi cells of $X$ are \emph{elongated in the directions normal} to $\mathcal{M}$ when $X$ is dense (\cite{Dey07}).

Before we state Theorem \ref{thm:sampling} we must introduce a sampling condition on $\mathcal{M}$. A $\delta$-cover of a manifold $\mathcal{M}$ in the norm $\|\cdot \|_{p}$ is a finite set of points $X$ such that for every $x \in \mathcal{M}$ there exists $X_i$ such that $\|x - X_i\|_{p} \leq \delta$. Theorem~\ref{thm:sampling} gives a sufficient sampling condition for $f_{\mathcal{L}}$ to correctly classify $\mathcal{M}^{\epsilon}$ for all manifolds $\mathcal{M}$. Theorem~\ref{thm:sampling} also provides a sufficient sampling condition for a nearest neighbor classifier $f_{\nn}$ to correctly classify $\mathcal{M}^{\epsilon}$, which is substantially less dense than that of $f_{\mathcal{L}}$. Thus different classification algorithms have different sampling requirements in high codimensions.

\begin{theorem}
\label{thm:sampling}
Let $\mathcal{M} \subset \R^d$ be a $k$-dimensional manifold and let $\epsilon < \rch_p{\Lambda_p}$ for any $p > 0$. Let $f_{nn}$ be a nearest neighbor classifier and let $f_{\mathcal{L}}$ be the output of a learning algorithm $\mathcal{L}$ as described above. Let $X_{\nn}, X_{\mathcal{L}} \subset \mathcal{M}$ denote the training sets for $f_{\nn}$ and $\mathcal{L}$ respectively. We have the following sampling guarantees: 

\begin{enumerate}
\item If $X_{\nn}$ is a $\delta$-cover for $\delta \leq 2 (\rch_p{\Lambda_p} - \epsilon)$ then $f_{\nn}$ correctly classifies $\mathcal{M}^{\epsilon}$.
\item If $X_{\mathcal{L}}$ is a $\delta$-cover for $\delta \leq \rch_p{\Lambda_p} - \epsilon$ then $f_{\mathcal{L}}$ correctly classifies $\mathcal{M}^{\epsilon}$.
\end{enumerate}
\end{theorem}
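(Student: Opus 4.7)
The plan is to reduce both statements to a triangle-inequality argument that chains a test point $z \in \mathcal{M}_i^{\epsilon}$ to a training point on the correct class manifold $\mathcal{M}_i$, with the adversarial-training case following immediately from the definition of $\mathcal{L}$ and the nearest-neighbor case requiring an additional comparison against cover points from other classes. Let $\rho := \rch_p{\Lambda_p}$ for brevity, fix $z \in \mathcal{M}_i^{\epsilon}$, and choose $y \in \mathcal{M}_i$ with $\|z-y\|_p \leq \epsilon$. The key preliminary observation is that if $y \in \mathcal{M}_i$ and $X_k \in \mathcal{M}_{i'}$ with $i' \neq i$, then $\|y - X_k\|_p \geq 2\rho$. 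This follows from Lemma~\ref{lem:medialaxisseparate}: since $\Lambda_p$ separates the class manifolds, the straight segment from $y$ to $X_k$ must cross $\Lambda_p$ at some point $q$; because $q \in \Lambda_p$ forces $d_p(q,\mathcal{M}) \geq \rho$, we get $\|y-q\|_p \geq \rho$ and $\|q-X_k\|_p \geq \rho$, and along a straight segment these distances sum to $\|y - X_k\|_p$.

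For part 2, the hypothesis $\delta \leq \rho - \epsilon$ together with the $\delta$-cover property gives some training point $X_j$ with $\|y - X_j\|_p \leq \delta < \rho$; by the preliminary observation $X_j$ must lie on $\mathcal{M}_i$ and hence carries label $i$. Then
\[
\|z - X_j\|_p \;\leq\; \|z - y\|_p + \|y - X_j\|_p \;\leq\; \epsilon + \delta \;\leq\; \rho,
\]
so $z \in B(X_j, \rho)$ and the defining property of $\mathcal{L}$ forces $f_{\mathcal{L}}(z) = y_j = i$.

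For part 1 the bound $\delta \leq 2(\rho - \epsilon)$ is too large to rule out other-class cover points directly, so I argue by comparison. Since $\mathcal{M}_i$ is itself a manifold, the projected cover $X_{\nn} \cap \mathcal{M}_i$ still contains a point $X_j \in \mathcal{M}_i$ with $\|y - X_j\|_p \leq \delta$ (either by the preliminary observation applied at the smaller scale of intra-class covering, or by taking this as part of the definition of $\delta$-cover on a multi-component $\mathcal{M}$). This gives $\|z - X_j\|_p \leq \epsilon + \delta \leq 2\rho - \epsilon$. Suppose for contradiction that the nearest training point $X^{*}$ to $z$ lies on $\mathcal{M}_{i'}$ with $i' \neq i$. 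Then $\|z - X^{*}\|_p \leq \|z - X_j\|_p \leq 2\rho - \epsilon$, so
\[
\|y - X^{*}\|_p \;\leq\; \|y - z\|_p + \|z - X^{*}\|_p \;\leq\; \epsilon + (2\rho - \epsilon) \;=\; 2\rho,
\]
which contradicts $\|y - X^{*}\|_p \geq 2\rho$ obtained from the preliminary observation, except on the equality boundary where $z, y, q, X^{*}$ are collinear with $q$ realizing both distances exactly; this degenerate tie is broken the usual way for $f_{\nn}$, or by passing to the strict inequality version of the hypothesis.

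The main obstacle is the preliminary observation: everything hinges on the claim that the straight $\ell_p$ segment between points on distinct class manifolds must intersect $\Lambda_p$, and that crossings of $\Lambda_p$ are at $p$-distance at least $\rho$ from $\mathcal{M}$. The second part is essentially the definition of $\rho = \rch_p \Lambda_p$, but the first part depends on reading Lemma~\ref{lem:medialaxisseparate} as asserting that $\Lambda_p$ is a genuine topological separator of $\R^d$ placing $\mathcal{M}_i, \mathcal{M}_{i'}$ in distinct components. A secondary subtlety is that the argument deals with ambient straight segments rather than geodesics on $\mathcal{M}$, which is what makes $\ell_p$-additivity along the segment available; this is essential for the non-Euclidean cases where midpoint-symmetry arguments would otherwise fail.
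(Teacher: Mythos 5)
Your proof is correct and follows essentially the same route as the paper's: lower-bound the distance from a point of $\mathcal{M}_i^{\epsilon}$ to any wrong-class sample by $2\rch_p{\Lambda_p}-\epsilon$, upper-bound the distance to a correct-class cover point by $\epsilon+\delta$, and compare (for part 2, invoke the ball property of $\mathcal{L}$ directly). The only difference is cosmetic and to your credit: you derive the $2\rch_p{\Lambda_p}$ separation between distinct class manifolds explicitly from Lemma~\ref{lem:medialaxisseparate} plus additivity of the norm along a segment, and you check that the nearby cover point lies on the correct class manifold, both of which the paper leaves to a figure; the boundary tie you flag is present in the paper's argument as well.
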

\begin{proof}
Here we use $d(\cdot, \cdot)$ to denote the metric induced by the $\|\cdot\|_{p}$ norm. We begin by proving (1). Let $q \in \mathcal{M}^{\epsilon}$ be any point in $\mathcal{M}^{\epsilon}$. Suppose without loss of generality that $q \in \mathcal{M}_{i}^{\epsilon}$ for some class $i$. The distance $d(q, \mathcal{M}_{j})$ from $q$ to any other data manifold $\mathcal{M}_{j}$, and thus any sample on $\mathcal{M}_{j}$, is lower bounded by $d(q,\mathcal{M}_{j}) \geq 2\rch_p{\Lambda_p} - \epsilon$. See Figure \ref{fig:samplingproof}. It is then both necessary and sufficient that there exists a $x \in \mathcal{M}_{i}$ such that $d(q, x) < 2\rch_p{\Lambda_p} - \epsilon$ for $f_{\nn}(q) = i$. (Necessary since a properly placed sample on $\mathcal{M}_{j}$ can achieve the lower bound on $d(q, \mathcal{M}_{j})$.) The distance from $q$ to the nearest sample $x$ on $\mathcal{M}_{i}$ is $d(q, x) \leq \epsilon + \delta$ for some $\delta > 0$. The question is how large can we allow $\delta$ to be and still guarantee that $f_{\nn}$ correctly classifies $\mathcal{M}^{\epsilon}$? We need
\begin{equation*}
d(q, x) \leq \epsilon + \delta \leq 2\rch_p{\Lambda_p} - \epsilon \leq d(q, \mathcal{M}_{j})
\end{equation*}
which implies that $\delta \leq 2 (\rch_p{\Lambda_p} - \epsilon)$. It follows that a $\delta$-cover with $\delta = 2(\rch_p{\Lambda_p} - \epsilon)$ is sufficient, and in some cases necessary, to guarantee that $f_{nn}$ correctly classifies $\mathcal{M}^{\epsilon}$.

Next we prove (2). As before let $q \in \mathcal{M}_{i}^{\epsilon}$. It is both necessary and sufficient for $q \in B(x, \rch_p{\Lambda_p})$ for some sample $x \in \mathcal{M}_{i}$ to guarantee that $f_{\mathcal{L}}(q) = i$, by definition of $\mathcal{L}$. The distance to the nearest sample $x$ on $\mathcal{M}_{i}$ is $d(q, x) \leq \epsilon + \delta$ for some $\delta > 0$. Thus it suffices that $\delta \leq \rch_p{\Lambda_p}-\epsilon$. 
\end{proof}

\begin{figure}
\begin{center}
\includegraphics[width=0.5\linewidth]{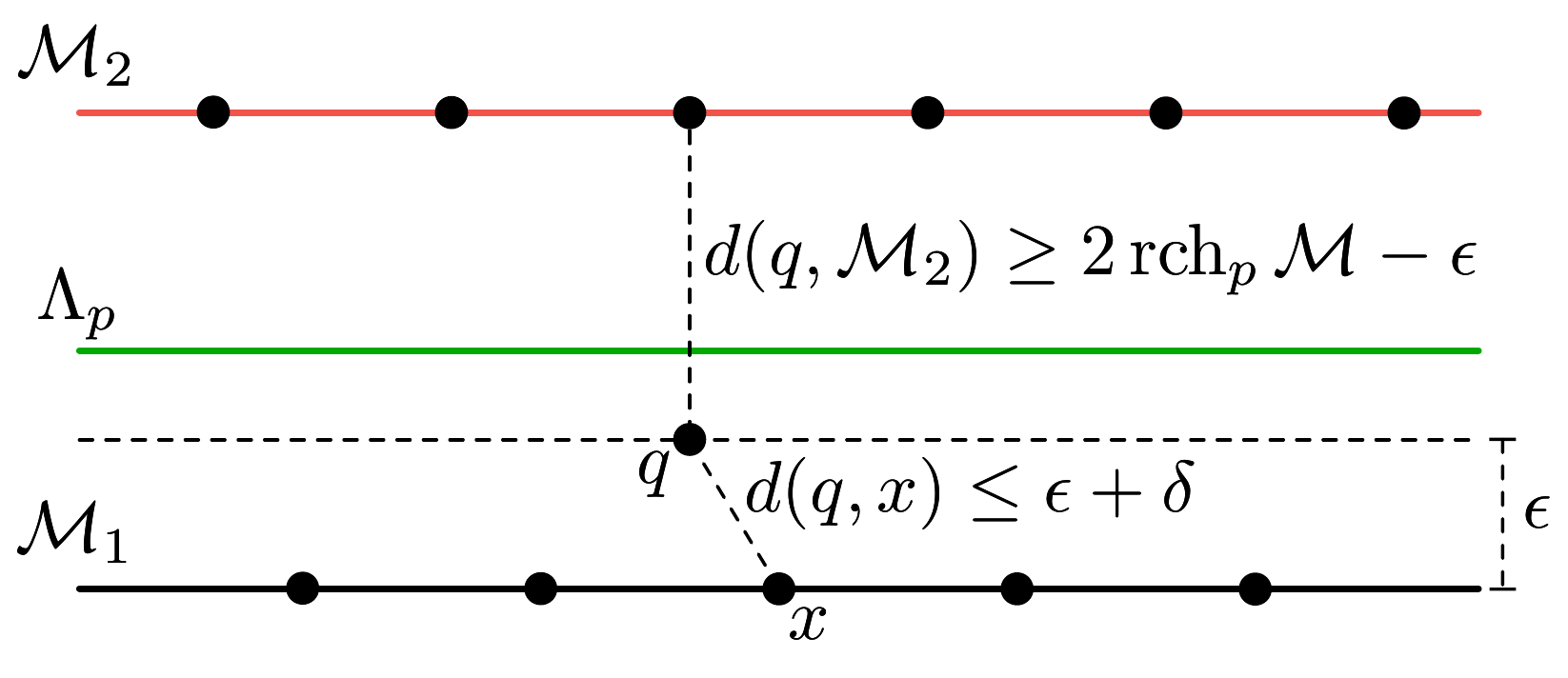}
\caption{Proof of Theorem \ref{thm:sampling}. The distance from a query point $q$ to $\mathcal{M}_{2}$, and thus the closest incorrectly labeled sample, is lower bounded by the distance necessary to reach the medial axis $\Lambda_{p}$ plus the distance from $\Lambda_{p}$ to $\mathcal{M}_{2}$.} 
\label{fig:samplingproof}
\end{center}
\end{figure}

In Appendix~\ref{sec:addtheory} we provide additional robustness results for nearest neighbors including: (1) a similar robustness guarantee as in Theorem~\ref{thm:sampling} when noise is introduced into the samples and (2) that the decision boundary $\mathcal{D}_{f_{\nn}}$ of $f_{\nn}$ approaches the decision axis as the sample density increases.

The bounds on $\delta$ in Theorem~\ref{thm:sampling} are sufficient, but they are not always necessary. There exist manifolds where the bounds in Theorem~\ref{thm:sampling} are pessimistic, and less dense samples corresponding to larger values of $\delta$ would suffice. 

Next we will show a setting where bounds on $\delta$ similar to those in Theorem~\ref{thm:sampling} are \emph{necessary}. In this setting, the difference of a factor of $2$ in $\delta$ between the sampling requirements of $f_{\nn}$ and $f_{\mathcal{L}}$ leads to an exponential gap between the sizes of $X_{\nn}$ and $X_{\mathcal{L}}$ necessary to achieve the same amount of robustness.

Define $\Pi_1 = \{x \in \R^d: \ell \leq x_1, \ldots, x_k \leq \mu \text{ and } x_{k+1} = \ldots = x_d = 0\}$; that is $\Pi_1$ is a subset of the $x_1$-$\ldots$-$x_k$-plane bounded between the coordinates $[\ell, \mu]$. Similarly define $\Pi_2 = \{x \in \R^d: \ell \leq x_1, \ldots, x_k \leq \mu \text{ and } x_{k+1} = \ldots = x_{d-1} = 0 \text{ and } x_d = 2\}$. Note that $\Pi_2$ lies in the subspace $x_d = 2$; thus $\rch_{2}{\Lambda_{2}} = 1$, where $\Lambda_2$ is the decision axis of $\Pi = \Pi_1 \cup \Pi_2$. In the $\|\cdot\|_{2}$ norm we can show that the gap in Theorem~\ref{thm:sampling} is necessary for $\Pi = \Pi_1 \cup \Pi_2$. Furthermore the bounds we derive for $\delta$-covers for $\Pi$ for both $f_{\nn}$ and $f_{\mathcal{L}}$ are tight. Combined with well-known properties of covers, we get that the ratio $|X_{\mathcal{L}}|/|X_{\nn}|$ is exponential in $k$.

\begin{theorem}
\label{thm:samplinggap}
Let $\Pi = \Pi_{1} \cup \Pi_{2}$ as described above. Let $X_{\nn}, X_{\mathcal{L}} \subset \Pi$ be minimum training sets necessary to guarantee that $f_{\nn}$ and $f_{\mathcal{L}}$ correctly classify $\mathcal{M}^{\epsilon}$. Then we have that
\begin{equation}
\frac{|X_{\mathcal{L}}|}{|X_{\nn}|} \geq 2^{k/2} 
\end{equation}
\end{theorem}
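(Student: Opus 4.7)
The plan is to bound the ratio by combining an upper bound on $|X_{\nn}|$ (via an explicit construction) with a lower bound on $|X_{\mathcal{L}}|$ (via a necessary covering condition on the footprint $[\ell, \mu]^k$).

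For the upper bound, I would show $|X_{\nn}| = 2$ by taking $X_{\nn} = \{(a, 0, \ldots, 0), (a, 0, \ldots, 0, 2)\}$ for any $a \in [\ell, \mu]^k$, i.e.\ one sample on each plane at identical horizontal coordinates. For any $q \in \Pi_1^{\epsilon}$, decompose $q = (P, v)$ with $P$ its first $k$ coordinates and $v \in \R^{d-k}$; since $d(q, \Pi_1) \leq \epsilon$ we have $\|v\|_2 \leq \epsilon$, and in particular $|v_d| \leq \epsilon < 1$. The squared distances from $q$ to the two samples agree in the first $k$ coordinates and in the middle $d-k-1$ coordinates, and differ only in the last coordinate: the $\Pi_1$-sample is closer iff $v_d^2 < (v_d - 2)^2$, i.e.\ iff $v_d < 1$, which holds. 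A symmetric argument handles $\Pi_2^{\epsilon}$. Since at least one sample per class is required, $|X_{\nn}| = 2$.

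For the lower bound, I would extract a necessary cover condition on $X_{\mathcal{L}}$. By the defining property of $\mathcal{L}$, every $q \in \Pi_1^{\epsilon}$ must lie in $B(x, \rch_2 \Lambda_2) = B(x, 1)$ for some same-class $x \in X_{\mathcal{L}} \cap \Pi_1$. Applying this to the extremal point $q = (P, 0, \ldots, 0, \epsilon)$ for each $P \in [\ell, \mu]^k$, with $x = (p, 0, \ldots, 0)$, forces $\sqrt{\|P - p\|_2^2 + \epsilon^2} \leq 1$, hence $\|P - p\|_2 \leq \sqrt{1 - \epsilon^2}$. So the first-$k$-coordinate projections of $X_{\mathcal{L}} \cap \Pi_1$ form a $\sqrt{1-\epsilon^2}$-cover of $[\ell, \mu]^k$ in $\|\cdot\|_2$, and similarly for $\Pi_2$. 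Using the standard packing--covering inequality $C(r) \geq P(2r)$: an $\|\cdot\|_\infty$-grid on $[\ell, \mu]^k$ with spacing $2\sqrt{1-\epsilon^2}$ is a $2\sqrt{1-\epsilon^2}$-packing in $\|\cdot\|_2$ of size $\lfloor (\mu - \ell)/(2\sqrt{1-\epsilon^2}) \rfloor^k$, so $|X_{\mathcal{L}} \cap \Pi_i| \geq \lfloor (\mu - \ell)/(2\sqrt{1-\epsilon^2}) \rfloor^k$.

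Summing over both planes and dividing by $|X_{\nn}| = 2$ yields
\[
\frac{|X_{\mathcal{L}}|}{|X_{\nn}|} \geq \left\lfloor \frac{\mu - \ell}{2\sqrt{1 - \epsilon^2}} \right\rfloor^k,
\]
which is at least $2^{k/2}$ provided $\mu - \ell \geq 2\sqrt{2(1-\epsilon^2)}$, a mild assumption on the side length of the planes. The main obstacle is confirming that the aligned two-point set is genuinely the minimum for $f_{\nn}$, which requires the minimization to be over all $X_{\nn} \subset \Pi$; the conceptual takeaway is that $\mathcal{L}$'s ball-based labeling forces a dense $k$-dimensional cover of the footprint, while nearest neighbor exploits the full $d$-dimensional geometry---specifically, that the height-$2$ gap between $\Pi_1$ and $\Pi_2$ dwarfs the height-$\epsilon$ perturbation---to succeed with constantly many samples.
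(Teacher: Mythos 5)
Your proof is correct (up to the mild side condition $\mu - \ell \geq 2\sqrt{2(1-\epsilon^2)}$ that you flag yourself), but it takes a genuinely different route from the paper's. The paper bounds \emph{both} training sets by covering numbers of $\Pi$ at two different scales: a $\delta$-cover with $\delta \leq 2\sqrt{1-\epsilon}$ suffices for $f_{\nn}$, while $\mathcal{L}$ requires a $\delta$-cover with $\delta \leq \sqrt{1-\epsilon^2}$, and the ratio of the two covering numbers is $\mathcal{N}(\sqrt{1-\epsilon^2},\Pi)/\mathcal{N}(2\sqrt{1-\epsilon},\Pi) = \bigl(2/\sqrt{1+\epsilon}\bigr)^{k} \geq 2^{k/2}$, so the $\vol_{k}{\Pi}$ factors cancel and no assumption on $\mu - \ell$ is needed. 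You instead pin $|X_{\nn}|$ down to exactly $2$ via the aligned two-point training set and lower-bound $|X_{\mathcal{L}}|$ by a packing argument on the footprint $[\ell,\mu]^k$. Your route buys a much stronger separation---the ratio you obtain is $\lfloor (\mu-\ell)/(2\sqrt{1-\epsilon^2})\rfloor^k$, which grows without bound in the side length rather than stopping at $2^{k/2}$---at the price of the side condition, which is harmless since the paper's $\Theta(\vol_{k}{\Pi}/\delta^{k})$ covering asymptotics implicitly require $\Pi$ to be large relative to $\delta$ as well. Your construction also exposes something the paper glosses over: the paper asserts that the $2\sqrt{1-\epsilon}$-cover is \emph{necessary} for $f_{\nn}$ because ``a properly placed sample on $\Pi_2$ can claim $q$ in its Voronoi cell,'' but that is necessity against an adversarially placed opposite-class sample, not for the \emph{minimum} training set the theorem statement quantifies over; as you show, the designer can co-locate samples across the two planes and evade the cover requirement entirely. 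This does not invalidate the paper's proof---its ratio only uses sufficiency (an upper bound) for $|X_{\nn}|$ and necessity (a lower bound) for $|X_{\mathcal{L}}|$---but your reading of ``minimum training set'' is the more faithful one, and it shows the true gap for this manifold is far larger than $2^{k/2}$. (One trivial notational slip: with $v \in \R^{d-k}$ the last coordinate should be written $v_{d-k}$ rather than $v_d$.)
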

\begin{proof}
Let $q \in \Pi_1^{\epsilon}$. Since $\Pi_1$ is flat, the distance to from $q$ to the nearest sample $x \in \Pi_1$ is bounded as $\|q - x\|_{2} \leq \sqrt{\epsilon^2 + \delta^2}$. For $f_{\nn}(q) = 1$ we need that $\|q - x\|_{2} \leq 2 - \epsilon$, and so it suffices that $\delta \leq 2 \sqrt{1 - \epsilon}$. In this setting, this is also necessary; should $\delta$ be any larger a property placed sample on $\Pi_{2}$ can claim $q$ in its Voronoi cell.

Similarly for $f_{\mathcal{L}}(q) = 1$ we need that $\|q - x\|_{2} \leq 1$, and so it suffices that $\delta \leq \sqrt{1-\epsilon^2}$. In this setting, this is also necessary; should $\delta$ be any larger, $q$ lies outside of every $\|\cdot\|_{2}$-ball $B(x, 1)$ and so $\mathcal{L}$ is free to learn a decision boundary that misclassifies $q$.

Let $\mathcal{N}(\delta, \mathcal{M})$ denote the size of the minimum $\delta$-cover of $\mathcal{M}$. Since $\Pi$ is flat (has no curvature) and since the intersection of $\Pi$ with a $d$-ball centered at a point on $\Pi$ is a $k$-ball, a standard volume argument can be applied in the affine subspace $\operatorname{aff}{\Pi}$ to conclude that $\mathcal{N}(\delta,\Pi) \in \Theta\left(\vol_{k}\Pi/\delta^{k}\right)$. So we have
\begin{align*}
\frac{\mathcal{N}(\sqrt{1-\epsilon^2}, \Pi)}{\mathcal{N}(2\sqrt{1-\epsilon}, \Pi)} &= 2^{k} \left(\frac{1}{1 + \epsilon}\right)^{k/2}\\
&\geq 2^{k /2} 
\end{align*}

Since $\Pi$ is constant in both settings, the factor $\vol_{k}{\Pi}$ as well as the constant factors hidden by $\Theta(\cdot)$ cancel. (Note that we are using the fact that $\Pi_1, \Pi_{2}$ have finite $k$-dimensional volume.) The inequality follows from the fact that the expression $(1 + \epsilon)^{-k/2}$ is monotonically decreasing on the interval $[0,1]$ and takes value $2^{-k /2}$ at $\epsilon = 1$. 
\end{proof}

We have shown that both $\mathcal{L}$ and nearest neighbor classifiers learn robust decision boundaries when provided sufficiently dense samples of $\mathcal{M}$. However there are settings where nearest neighbors is exponentially more sample-efficient than $\mathcal{L}$ in achieving the same amount of robustness. We experimentally verify these theoretical results in Section~\ref{ssec:codim}.

\section{$X^{\epsilon}$ is a Poor Model of $\mathcal{M}^{\epsilon}$}
\begin{figure}
\begin{center}
\includegraphics[width=.4\linewidth]{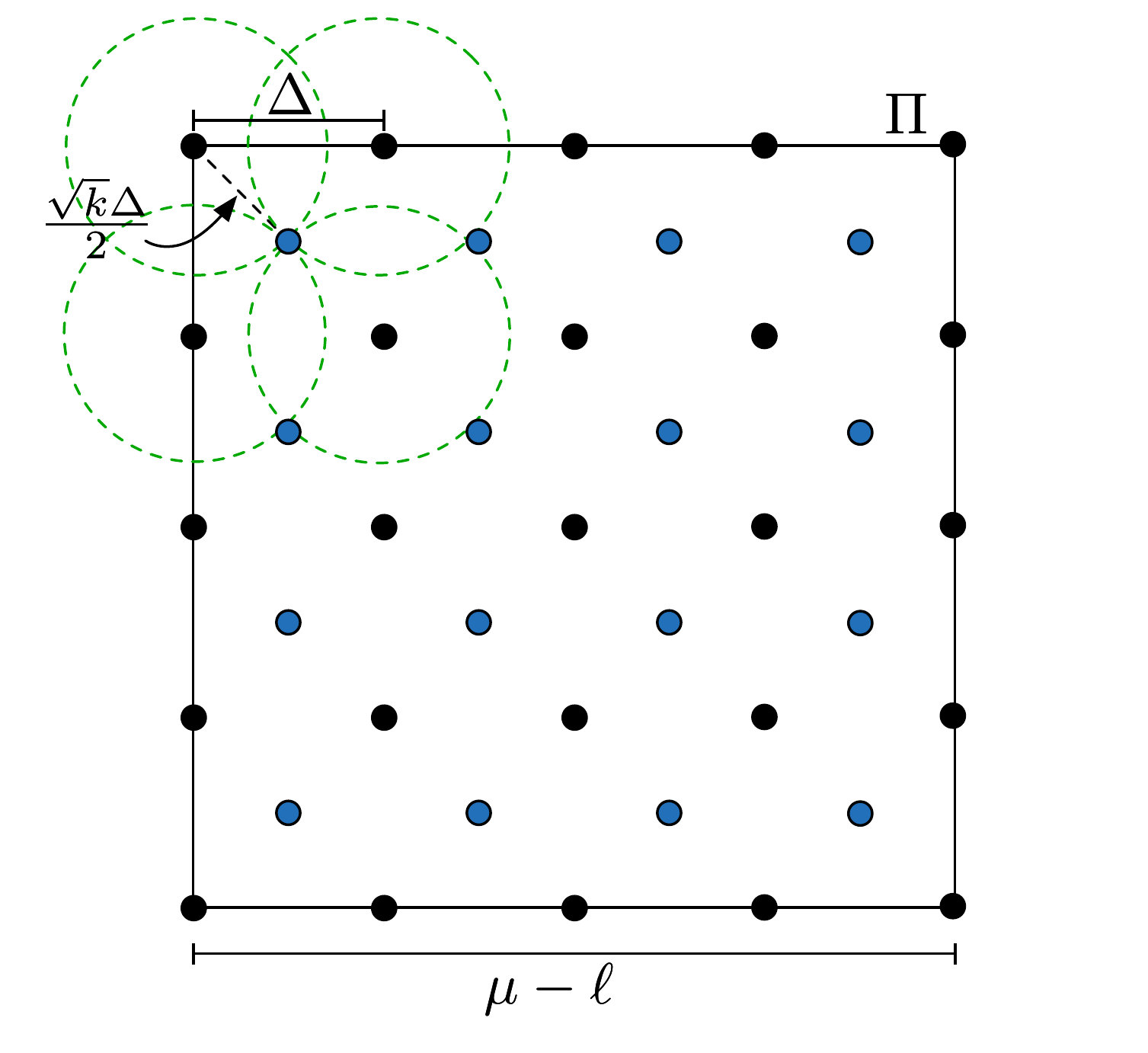}
\end{center}
\caption{To construct an $\delta$-cover we place sample points, shown here in black, along a regular grid with spacing $\Delta$. The blue points are the furthest points of $\Pi$ from the sample. To cover $\Pi$ we need $\Delta = 2\delta/\sqrt{k}$.}
\label{fig:grid}
\end{figure}

\label{sssec:volumemodels}
\cite{Madry17} suggest training a robust classifier with the help of an adversary which, at each iteration, produces $\epsilon$-perturbations around the training set that are incorrectly classified. In our notation, this corresponds to learning a decision boundary that correctly classifies $X^{\epsilon} = \{x \in \R^d: \|x - X_i\|_{2} \leq \epsilon \text{ for some training point } X_i\}$. We believe this approach is insufficiently robust in practice, as $X^{\epsilon}$ is often a poor model for $\mathcal{M}^{\epsilon}$. In this section, we show that the volume $\vol{X^{\epsilon}}$ is often a vanishingly small percentage of $\vol{\mathcal{M}^{\epsilon}}$. These results shed light on why the ball-based learning algorithm $\mathcal{L}$ defined in Section~\ref{sec:proverobust} is so much less sample-efficient than nearest neighbor classifiers. In Section \ref{ssec:codim} we experimentally verify these observations by showing that in high-dimensional space it is easy to find adversarial examples even after training against a strong adversary. For the remainder of this section we will consider the $\|\cdot\|_{2}$ norm. 

\begin{theorem}
\label{thm:manifoldvolumelowerbound}
Let $\mathcal{M} \subset \R^d$ be a $k$-dimensional manifold embedded in $\R^d$ such that $\vol_{k}{\mathcal{M}} < \infty$. Let $X \subset \mathcal{M}$ be a finite set of points sampled from $\mathcal{M}$. Suppose that $\epsilon \leq \rch_{2}{\Xi}$ where $\Xi$ is the medial axis of $\mathcal{M}$, defined as in \cite{Dey07}. Then the percentage of $\mathcal{M}^{\epsilon}$ covered by $X^{\epsilon}$ is upper bounded by
\begin{equation}
\label{equ:manifoldvolumelowerbound}
\frac{\vol{X^\epsilon}}{\vol{\mathcal{M}^\epsilon}} \leq \frac{\pi^{k/2} \Gamma(\frac{d-k}{2}+1)}{\Gamma(\frac{d}{2}+1)} \frac{\epsilon^{k}}{\vol_{k}{\mathcal{M}}} |X| \in \mathcal{O}\left(\left(\frac{2\pi}{d - k}\right)^{k/2} \frac{\epsilon^k}{\vol_{k}{\mathcal{M}}} |X| \right).
\end{equation}
As the codimension $(d - k) \rightarrow \infty$, Equation~\ref{equ:manifoldvolumelowerbound} approaches $0$, for any fixed $|X|$.
\end{theorem}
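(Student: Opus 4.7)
The argument splits into three pieces: an upper bound on $\vol(X^\epsilon)$ via a union of balls, a lower bound on $\vol(\mathcal{M}^\epsilon)$ via the tubular neighborhood structure, and a simplification of the resulting ratio of ball-volume constants to obtain the $\mathcal{O}$ form.

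For the upper bound, I would write $X^\epsilon=\bigcup_{i=1}^{|X|} B(X_i,\epsilon)$ and apply subadditivity of Lebesgue measure, giving $\vol(X^\epsilon)\leq |X|\cdot\pi^{d/2}\epsilon^d/\Gamma(d/2+1)$ using the standard formula for the volume of a $d$-ball. For the lower bound, the hypothesis $\epsilon\leq\rch_{2}{\Xi}$ guarantees that the normal exponential map $\Phi(x,v)=x+v$ is a diffeomorphism from the $\epsilon$-disk bundle $\{(x,v):x\in\mathcal{M},\,v\in N_x\mathcal{M},\,\|v\|_2\leq\epsilon\}$ onto $\mathcal{M}^\epsilon$. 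Invoking either Weyl's tube formula or a direct change of variables with a Jacobian estimate on the shape operator yields an expansion of $\vol(\mathcal{M}^\epsilon)$ in powers of $\epsilon$ whose leading term is $\vol_k(\mathcal{M})\cdot\pi^{(d-k)/2}\epsilon^{d-k}/\Gamma((d-k)/2+1)$, and under the reach hypothesis this leading term provides a lower bound:
\[
\vol(\mathcal{M}^\epsilon)\;\geq\;\vol_k(\mathcal{M})\cdot\frac{\pi^{(d-k)/2}}{\Gamma((d-k)/2+1)}\,\epsilon^{d-k}.
\]

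Dividing the two bounds, $\epsilon^{d-k}$ cancels against $\epsilon^d$ to leave $\epsilon^k$, and the $\pi$-powers collapse to $\pi^{k/2}$, reproducing exactly the claimed inequality. For the $\mathcal{O}$ form I would control the Gamma ratio by writing $\Gamma(d/2+1)=\Gamma((d-k)/2+1)\cdot\prod_{j=1}^{k/2}\bigl((d-k)/2+j\bigr)$ when $k$ is even (the general case follows from the standard asymptotic $\Gamma(x+a)/\Gamma(x+b)\sim x^{a-b}$), and bounding each factor in the product from below by $(d-k)/2$. This yields $\Gamma((d-k)/2+1)/\Gamma(d/2+1)\leq(2/(d-k))^{k/2}$, so the overall prefactor is $\mathcal{O}((2\pi/(d-k))^{k/2})$. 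For fixed $k\geq 1$ and fixed $|X|$, this decays to $0$ as $d-k\to\infty$, as claimed.

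The main obstacle is the tube lower bound: Weyl's tube formula expresses $\vol(\mathcal{M}^\epsilon)$ as a sum of Lipschitz--Killing curvature integrals times powers of $\epsilon$, and for submanifolds with negatively curved regions (for instance, genus-$g\geq 2$ surfaces in $\R^3$) the higher-order terms need not be nonnegative, so the clean leading-term inequality either requires a sign hypothesis on the curvature, a restriction to $\epsilon$ small enough that the leading term dominates the remainder, or a direct pointwise bound on the Jacobian $\det(I-\epsilon S_v)$ of the exponential map in terms of the shape operator $S_v$. The union-bound numerator and the Gamma-ratio bookkeeping are routine by comparison.
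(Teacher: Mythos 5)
Your proposal follows the paper's proof essentially step for step: the numerator is bounded by $|X|$ times the volume of a $d$-ball (the paper phrases this as ``assuming the balls are disjoint,'' which is the same subadditivity bound), the denominator is bounded below by $\vol_{k}{\mathcal{M}}\cdot\vol_{d-k}{B^{d-k}_{\epsilon}}$, and the Gamma ratio is handled asymptotically (the paper uses Stirling where you use the product formula $\Gamma(x+a)/\Gamma(x+b)\sim x^{a-b}$; both give the $(2/(d-k))^{k/2}$ factor). The one step you flag as delicate --- the tube lower bound --- is precisely the step the paper asserts without proof: its ``place a normal $(d-k)$-ball at each point and integrate'' heuristic is spelled out only for the flat $\Pi$, where the Jacobian of the normal exponential map is identically $1$ and the bound is exact. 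Your worry is well-founded for general $\mathcal{M}$. The Jacobian is $\det\bigl(I_k-\sum_i v_i S_i\bigr)$, and its average over $v\in B^{d-k}_{\epsilon}$ can be strictly less than $1$ at points of negative curvature: for a closed genus-$g\geq 2$ surface in $\R^3$, Weyl's formula together with Gauss--Bonnet gives $\vol{\mathcal{M}^{\epsilon}}=2\epsilon\vol_{2}{\mathcal{M}}+\tfrac{4\pi\epsilon^{3}}{3}\chi(\mathcal{M})<2\epsilon\vol_{2}{\mathcal{M}}$ since $\chi<0$. So the inequality in Equation~\ref{equ:manifoldtubelowerbound}, and hence the exact constant in Equation~\ref{equ:manifoldvolumelowerbound}, does not hold verbatim under the stated hypotheses; one of the repairs you list (a curvature sign condition, taking $\epsilon$ small enough that the leading term dominates, or absorbing a constant-factor loss) is genuinely needed --- and is equally needed in the paper's own proof. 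None of this affects the qualitative conclusion that the ratio tends to $0$ as the codimension $d-k\to\infty$.
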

\begin{proof}
Assuming the balls centered on the samples in $X$ are disjoint we get the upper bound 
\begin{equation}
\label{equ:manifoldupperbound}
\vol{X^{\epsilon}} \leq \vol{B_{\epsilon} |X|} = \frac{\pi^{d/2}}{\Gamma(\frac{d}{2}+1)}\epsilon^d |X|.
\end{equation}
This is identical to the reasoning in Equation~\ref{equ:planeupperbound}.

The medial axis $\Xi$ of $\mathcal{M}$ is defined as the closure of the set of all points in $\R^d$ that have two or more closest points on $\mathcal{M}$ in the norm $\|\cdot\|_{2}$. The medial axis $\Xi$ is similar to the decision axis $\Lambda_2$, except that the nearest points do not need to be on distinct class manifolds. For $\epsilon \leq \rch_{2}{\Xi}$, we have the lower bound

\begin{equation}
\label{equ:manifoldtubelowerbound}
\vol{\mathcal{M}^{\epsilon}} \geq \vol_{d-k}{B^{d-k}_{\epsilon}} \vol_{k}{\mathcal{M}} = \frac{\pi^{(d-k)/2}}{\Gamma\left(\frac{d-k}{2}+1\right)}\epsilon^{d-k}  \vol_{k}{\mathcal{M}}.
\end{equation}

Combining Equations~\ref{equ:manifoldupperbound} and \ref{equ:manifoldtubelowerbound} gives the result. To get the asymptotic result we apply Stirling's approximation to get
\begin{align*}
\frac{\Gamma(\frac{d-k}{2}+1)}{\Gamma(\frac{d}{2}+1)} &\approx (2e)^{k/2} \frac{(d - k)^{(d - k + 1) / 2}}{d^{(d+1)/2}}\\
                                                      &= (2e)^{k/2} \frac{\left(\frac{d - k}{d}\right)^{(d+1)/2}}{(d - k)^{k/2}}\\
                                                      &= (2e)^{k/2} \frac{\left(1 - \frac{k}{d}\right)^{(d+1)/2}}{(d - k)^{k/2}}\\ 
                                                      &\approx \left(\frac{2}{d - k}\right)^{k/2}.
\end{align*}
The last step follows from the fact that $\lim_{d \rightarrow \infty} (1 - k/d)^{(d+1)/2} = e^{-k/2}$, where $e$ is the base of the natural logarithm.
\end{proof}

In high codimension, even moderate under-sampling of $\mathcal{M}$ leads to a significant loss of coverage of $\mathcal{M}^{\epsilon}$ because the volume of the union of balls centered at the samples shrinks faster than the volume of $\mathcal{M}^{\epsilon}$. Theorem~\ref{thm:manifoldvolumelowerbound} states that in high codimensions the fraction of $\mathcal{M}^{\epsilon}$ covered by $X^{\epsilon}$ goes to $0$. Almost nothing is covered by $X^{\epsilon}$ for training set sizes that are realistic in practice. Thus $X^{\epsilon}$ is a poor model of $\mathcal{M}^{\epsilon}$, and high classificaiton accuracy on $X^{\epsilon}$ does not imply high accuracy in $\mathcal{M}^{\epsilon}$. 

Note that an alternative way of defining the ratio $\vol{X^\epsilon} / \vol{\mathcal{M}^\epsilon}$ is as $\vol{(X^{\epsilon} \cap \mathcal{M}^{\epsilon})} / \vol{\mathcal{M}^{\epsilon}}$. This is equivalent in our setting since $X \subset \mathcal{M}$ and so $X^{\epsilon} \subset \mathcal{M}^{\epsilon}$. 

For the remainder of the section we provide intuition for Theorem~\ref{thm:manifoldvolumelowerbound} by considering the special case of $k$-dimensional planes. Define $\Pi = \{x \in \R^d: \ell \leq x_1, \ldots, x_k \leq \mu \text{ and } x_{k+1} = \ldots = x_d = 0\}$; that is $\Pi$ is a subset of the $x_1$-$\ldots$-$x_k$-plane bounded between the coordinates $[\ell, \mu]$. Recall that a $\delta$-cover of a manifold $\mathcal{M}$ in the norm $\|\cdot \|_{2}$ is a finite set of points $X$ such that for every $x \in \mathcal{M}$ there exists $X_i$ such that $\|x - X_i\|_{2} \leq \delta$. It is easy to construct an \emph{explicit} $\delta$-cover $X$ of $\Pi$: place sample points at the vertices of a regular grid, shown in Figure~\ref{fig:grid} by the black vertices. The centers of the cubes of this regular grid, shown in blue in Figure \ref{fig:grid}, are the furthest points from the samples. The distance from the vertices of the grid to the centers is $\sqrt{k}\Delta/2$ where $\Delta$ is the spacing between points along an axis of the grid. To construct a $\delta$-cover we need $\sqrt{k}\Delta/2 = \delta$ which gives a spacing of $\Delta = 2\delta/\sqrt{k}$. The size of this sample is $|X| = \left(\frac{\sqrt{k}(\mu - \ell)}{2 \delta}\right)^k$. Note that $|X|$ scales exponentially in $k$, the dimension of $\Pi$, not in $d$, the dimension of the embedding space. 

\begin{figure}[h!]
\begin{center}
\includegraphics[width=.4\linewidth]{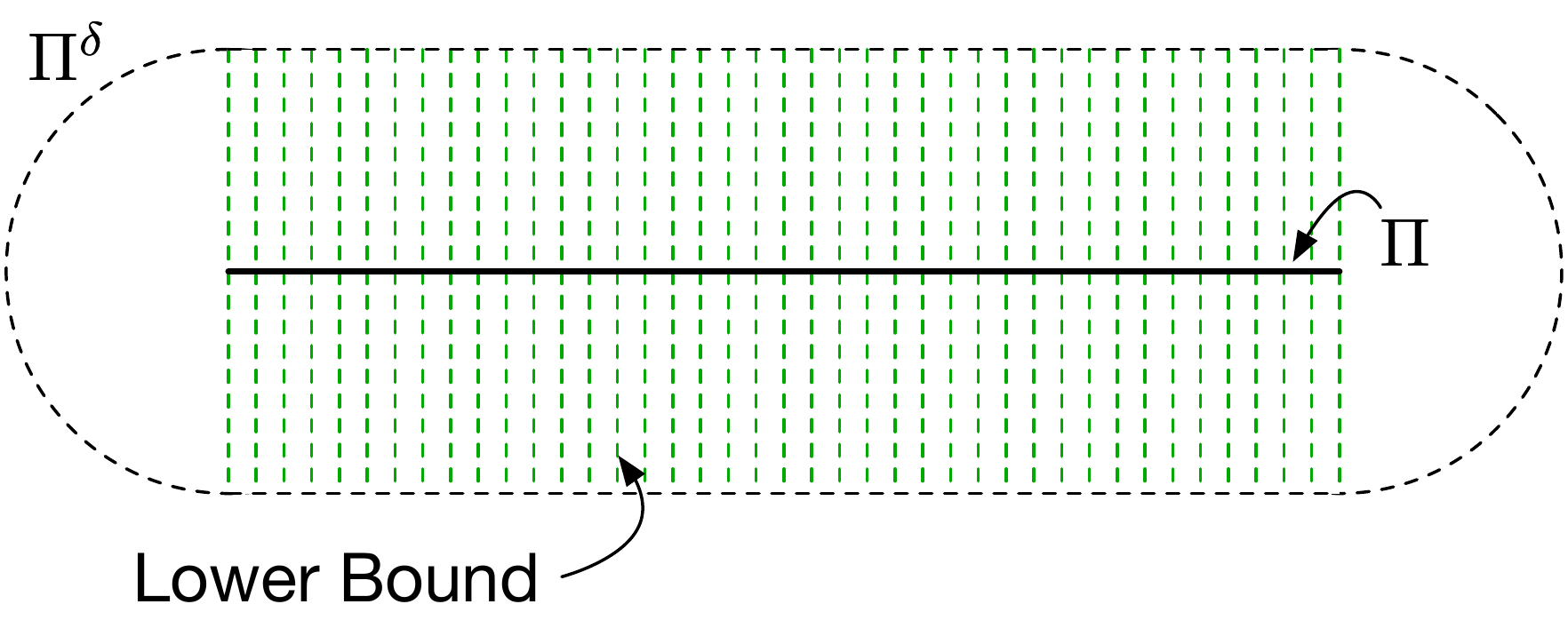}
\caption{An illustration of the lower bound technique used in Equation \ref{equ:tubelowerbound}. The volume $\vol{\Pi^{\delta}}$ shown in the black dashed lines, is bounded from below by placing a $(d-k)$-dimensional ball of radius $\delta$ at each point of $\Pi$, shown in green. In this illustration, a 1-dimensional manifold is embedded in 2 dimensions, so these balls are 1-dimensional line segments.}
\label{fig:volumelowerbound}
\end{center}
\end{figure}

Recall that $\Pi^{\delta}$ is the $\delta$-tubular neighborhood of $\Pi$. The $\delta$-balls around $X$, which comprise $X^{\delta}$, cover $\Pi$ and so any robust approach that guarantees correct classification within $X^{\delta}$ will achieve perfect accuracy on $\Pi$. However, we will show that $X^{\delta}$ covers only a vanishingly small fraction of $\Pi^{\delta}$. Let $B_{\delta}$ denote the $d$-ball of radius $\delta$ centered at the origin. An upper bound on the volume of $X^{\delta}$ is 
\begin{equation}
\label{equ:planeupperbound}
\vol{X^{\delta}} \leq \vol{B_{\delta} |X|} = \frac{\pi^{d/2}}{\Gamma(\frac{d}{2}+1)}\delta^d \left(\frac{\sqrt{k}(\mu - \ell)}{2\delta}\right)^k = \frac{\pi^{d/2}}{\Gamma(\frac{d}{2}+1)}\delta^{(d-k)} \left(\frac{\sqrt{k}(\mu - \ell)}{2}\right)^k.
\end{equation}

Next we bound the volume $\vol{\Pi^{\delta}}$ from below. Intuitively, a lower bound on the volume can be derived by placing a $(d-k)$-dimensional ball in the normal space at each point of $\Pi$ and integrating the volumes. Figure \ref{fig:grid} (Right) illustrates the lower bound argument in the case of $k = 1, d = 2$.
\begin{equation}
\label{equ:tubelowerbound}
\vol{\Pi^{\delta}} \geq \vol_{d-k}{B^{d-k}_{\delta}} \vol_{k}{\Pi} = \frac{\pi^{(d-k)/2}}{\Gamma\left(\frac{d-k}{2}+1\right)}\delta^{d-k} (\mu - \ell)^k.
\end{equation}

Combining Equations \ref{equ:planeupperbound} and \ref{equ:tubelowerbound} gives an upper bound on the percentage of $\Pi^{\delta}$ that is covered by $X^{\epsilon}$.
\begin{equation}
\label{equ:volumelowerbound}
\frac{\vol{X^{\delta}}}{\vol{\Pi^{\delta}}} \leq \frac{\pi^{k/2}\Gamma\left(\frac{d-k}{2}+1\right)}{\Gamma\left(\frac{d}{2}+1\right)} \left(\frac{\sqrt{k}}{2}\right)^k.
\end{equation}
Notice that the factors involving $\delta$ and $(\mu - \ell)$ cancel. Figure \ref{fig:planelowerbound} (Left) shows that this expression approaches $0$ as the codimension $(d-k)$ of $\Pi$ increases. 

Suppose we set $\delta = 1$ and construct a $1$-cover of $\Pi$. The number of points necessary to cover $\Pi$ with balls of radius $1$ depends \emph{only} on $k$, not the embedding dimension $d$. However the number of points necessary to cover the tubular neighborhood $\Pi^1$ with balls of radius $1$ increases depends on \emph{both} $k$ and $d$.  In Theorem \ref{thm:volumelowerbound} we derive a lower bound on the number of samples necessary to cover $\Pi^{1}$.

\begin{theorem}
\label{thm:volumelowerbound}
Let $\Pi$ be a bounded $k$-flat as described above, bounded along each axis by $\ell < \mu$. Let $n$ denote the number of samples necessary to cover the $1$-tubular neighborhood $\Pi^{1}$ of $\Pi$ with $\|\cdot\|_{2}$-balls of radius $1$. That is let $n$ be the minimum value for which there exists a finite sample $X$ of size $n$ such that $\Pi^{1} \subset \cup_{x \in X} B(x, 1) = X^{1}$. Then 
\begin{equation}
n \geq \frac{\pi^{-k/2} \Gamma\left(\frac{d}{2} + 1\right)}{\Gamma\left(\frac{d-k}{2}+1\right)}(\mu - \ell)^k \in \Omega\left(\left(\frac{d - k}{2\pi}\right)^{k/2}(\mu - \ell)^k\right).
\end{equation} 
\end{theorem}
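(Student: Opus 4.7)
The plan is to run exactly the kind of volume-based packing argument the excerpt has already been setting up, only now inverted to turn a lower bound on $\vol(\Pi^1)$ into a lower bound on the covering number. Concretely, if a finite set $X$ of size $n$ has the property that $\Pi^1 \subset \bigcup_{x \in X} B(x,1) = X^1$, then monotonicity of volume and subadditivity under unions give
\begin{equation*}
\vol(\Pi^1) \;\leq\; \vol(X^1) \;\leq\; \sum_{x \in X} \vol(B(x,1)) \;=\; n \cdot \frac{\pi^{d/2}}{\Gamma(d/2+1)}.
\end{equation*}
Rearranging yields $n \geq \vol(\Pi^1) / \vol(B_1^d)$, so the entire problem reduces to producing a good lower bound on $\vol(\Pi^1)$ and the standard upper bound on the Euclidean ball volume.

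For the numerator, I would simply reuse the lower bound already established in Equation~\ref{equ:tubelowerbound}, instantiated at $\delta = 1$: since $\Pi$ is a flat $k$-dimensional region whose normal space is globally $(d-k)$-dimensional, placing a $(d-k)$-dimensional unit ball in the normal fiber over each point of $\Pi$ and integrating gives
\begin{equation*}
\vol(\Pi^1) \;\geq\; \frac{\pi^{(d-k)/2}}{\Gamma((d-k)/2+1)} \, (\mu - \ell)^k.
\end{equation*}
Dividing by $\vol(B_1^d) = \pi^{d/2}/\Gamma(d/2+1)$ and cancelling the $\pi^{d/2}$ factors immediately produces the exact bound $n \geq \pi^{-k/2}\,\Gamma(d/2+1)/\Gamma((d-k)/2+1)\,(\mu-\ell)^k$ stated in the theorem.

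For the asymptotic $\Omega(\cdot)$ claim, I would invoke the same Stirling estimate already carried out in the proof of Theorem~\ref{thm:manifoldvolumelowerbound}, but read in the reciprocal direction: that calculation showed $\Gamma((d-k)/2+1)/\Gamma(d/2+1) \sim (2/(d-k))^{k/2}$ as $d \to \infty$ with $k$ fixed, so the ratio of Gammas in our lower bound behaves like $((d-k)/2)^{k/2}$. Combining with the $\pi^{-k/2}$ factor gives $n \in \Omega\!\left(((d-k)/(2\pi))^{k/2} (\mu-\ell)^k\right)$, as claimed.

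There is essentially no hard obstacle here: the argument is a one-line packing bound plus the previously derived tubular-volume estimate plus Stirling. The only mild subtlety to be careful about is that the lower bound on $\vol(\Pi^1)$ requires the fibers of radius $1$ in the normal directions to be disjoint, which is automatic for a genuine $k$-flat (the normal bundle is trivial and the reach is infinite), so no condition analogous to $\epsilon \leq \rch_2(\Xi)$ needs to be invoked. The rest is bookkeeping with Gamma functions.
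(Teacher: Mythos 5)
Your proposal is correct and follows essentially the same route as the paper: both bound $\vol(X^1)\leq n\,\vol(B_1)$, combine this with the coverage condition $\Pi^1\subset X^1$ to get $n\geq \vol(\Pi^1)/\vol(B_1)$, then invoke the tubular-neighborhood lower bound of Equation~\ref{equ:tubelowerbound} at $\delta=1$ and the Stirling estimate from Theorem~\ref{thm:manifoldvolumelowerbound}. Your phrasing via monotonicity and subadditivity of volume is in fact a slightly cleaner statement of the paper's ``set the ratio equal to $1$ and solve for $n$'' step, but the substance is identical.
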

\begin{proof}
We first construct an upper bound by generously assuming that the balls centered at the samples are disjoint. That is
\begin{equation}
\label{equ:lowerboundstep1}
\frac{\vol{X^{\delta}}}{\vol{\Pi^{\delta}}} \leq \frac{n \vol{B_{\delta}}}{\vol{\Pi^{\delta}}}.
\end{equation}
To guarantee that $\Pi^1 \subset \cup_{x \in X} B(x, 1) = X^{1}$ we set the left hand side of Equation \ref{equ:lowerboundstep1} equal to $1$ and solve for $n$.
\begin{align*}
1 = \frac{\vol{X^{\delta}}}{\vol{\Pi^{\delta}}} &\leq \frac{n \vol{B_{\delta}}}{\vol{\Pi^{\delta}}}\\
                                              n &\geq \frac{\vol{\Pi^{\delta}}}{\vol{B_{\delta}}}\\
                                                &\geq \frac{\pi^{-k/2} \Gamma\left(\frac{d}{2} + 1\right)}{\Gamma\left(\frac{d-k}{2}+1\right)}\left(\frac{\mu - \ell}{\delta}\right)^k
\end{align*}
The last inequality follows from Equation \ref{equ:tubelowerbound}. Setting $\delta = 1$ gives the result. The asymptotic result is similar to the argument in the proof of Theorem~\ref{thm:manifoldvolumelowerbound}.
\end{proof}

Theorem~\ref{thm:volumelowerbound} states that, in general, it takes many fewer samples to accurately model $\mathcal{M}$ than to model $\mathcal{M}^{\epsilon}$. Figure \ref{fig:planelowerbound} (Right) compares the number of points necessary to construct a $1$-cover of $\Pi$ with the lower bound on the number necessary to cover $\Pi^{1}$ from Theorem \ref{thm:volumelowerbound}. The number of points necessary to cover $\Pi^1$ increases as $\Omega\left((d-k)^{k/2}\right)$, scaling polynomially in $d$ and exponentially in $k$. In contrast, the number necessary to construct a $1$-cover of $\Pi$ remains constant as $d$ increases, depending only on $k$. 

Our lower bound of $\Omega\left((d-k)^{k/2}\right)$ samples is similar to the work of \cite{Schmidt18} who prove that, in the simple Gaussian setting, robustness \emph{requires} as much as $\Omega(\sqrt{d})$ more samples. Their arguments are statistical while ours are geometric.

\begin{figure}
\begin{center}
\begin{subfigure}{0.4\textwidth}
\includegraphics[width=0.98\linewidth]{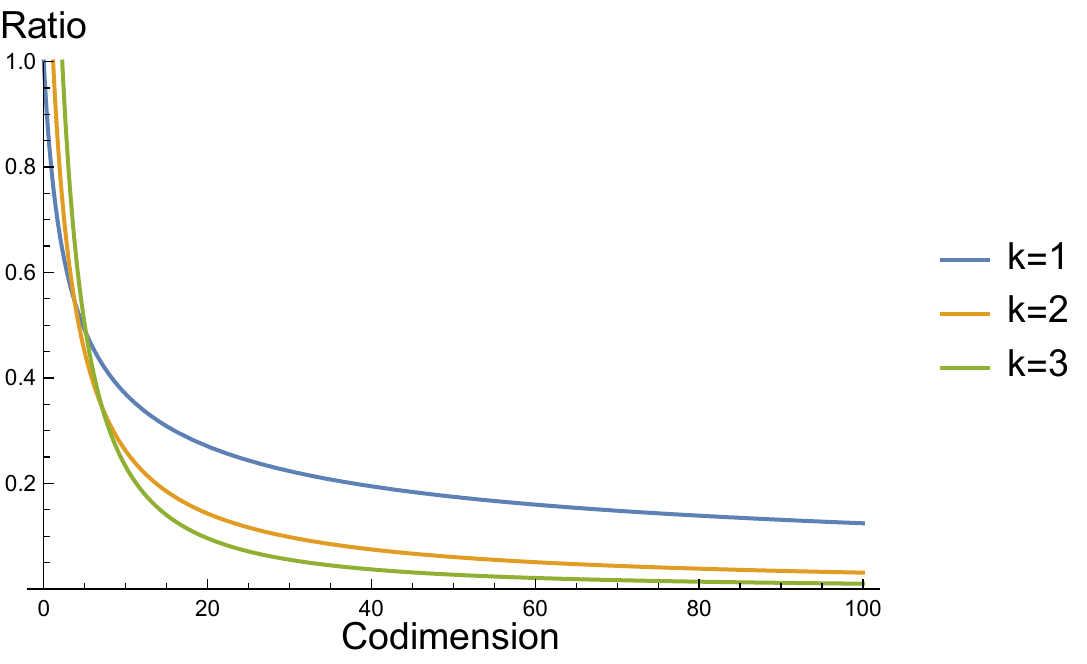}
\end{subfigure}
\begin{subfigure}{0.55\textwidth}
\includegraphics[width=0.98\linewidth]{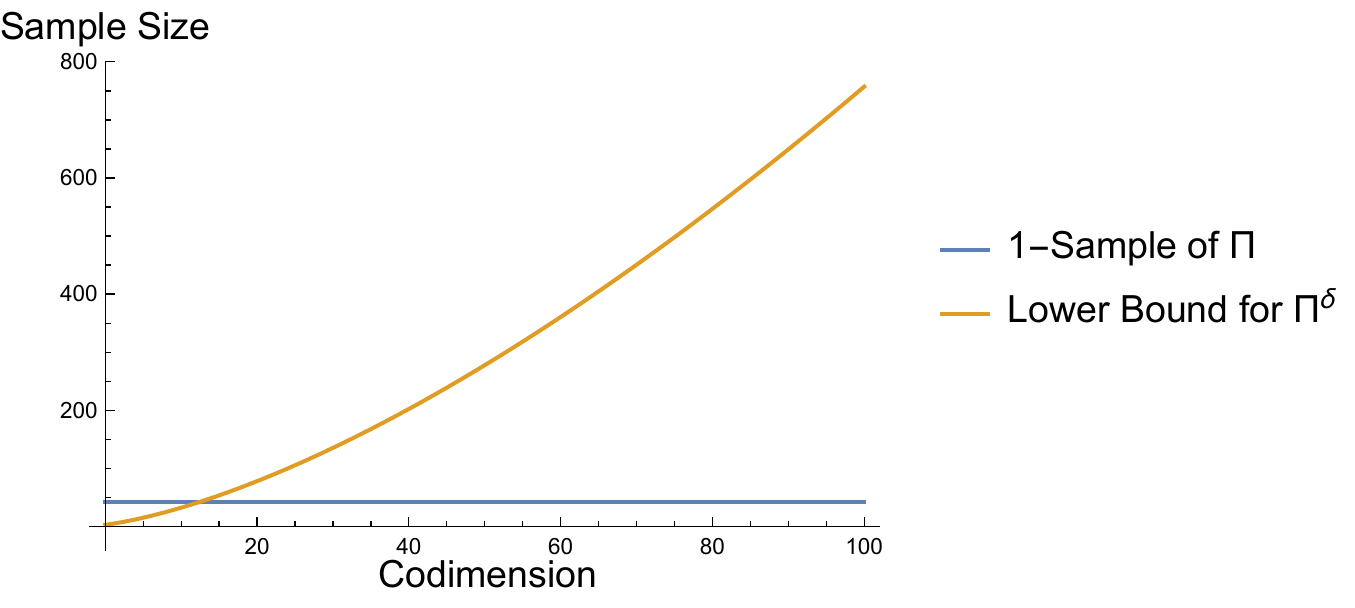}
\end{subfigure}
\caption{We plot the upper bound in Equation \ref{equ:volumelowerbound} on the left. As the codimension increases, the percentage of volume of $\Pi^{1}$ covered by $1$-balls around the $1$-sample approaches $0$. On the right we plot the number of samples necessary to cover $\Pi$, shown in blue, against the number of samples necessary to cover $\Pi^{1}$, shown in orange, as the codimension increases.}
\label{fig:planelowerbound}
\end{center}
\end{figure}

Approaches that produce robust classifiers by generating adversarial examples in the $\epsilon$-balls centered on the training set do not accurately model $\mathcal{M}^{\epsilon}$, and it will take \emph{many} more samples to do so. If the method behaves arbitrarily outside of the $\epsilon$-balls that define $X^{\epsilon}$, adversarial examples will still exist and it will likely be easy to find them. The reason deep learning has performed so well on a variety of tasks, in spite of the brittleness made apparent by adversarial examples, is because it is much easier to perform well on $\mathcal{M}$ than it is to perform well on $\mathcal{M}^{\epsilon}$.  

\section{A Lower Bound on Model Expressiveness}
\label{sec:modelsizelowerbound}
\subsection{A Simple Example}
\label{ssec:example}
Consider the case of two concentric circles $C_1, C_2$ with radii $r_1 < r_2$ respectively, as illustrated in Figure \ref{fig:simpleexample}. Each circle represents a different class of data. Suppose that we train a parametric model $f(x; \vtheta)$ with $p$ parameters so that for $x \in C_1$, $f(x; \vtheta) > 0$ and for $x \in C_2$, $f(x; \vtheta) < 0$. How does the number of parameters $p$ necessary to ensure that such a decision boundary can be expressed by $f(\cdot; \vtheta)$ increase as the gap between $C_1$ and $C_2$ decreases?

Suppose that we first lift $C_1$ and $C_2$ to a parabola in $\R^3$ via map $\phi(x_1, x_2) = (x_1, x_2, x_1^2 + x_2^2)$. That is, we construct the sets $C_1^{+} = \{\phi(x_1, x_2): (x_1, x_2) \in C_1\}$ and similarly for $C_2^{+}$. After applying $\phi$, $C_1^{+}$ and $C_{2}^{+}$ are \emph{linearly separable} for any $r_2 - r_1 > 0$. The linear decision boundary in $\R^3$ maps back to a circle in $\R^2$ that separates $C_1$ and $C_2$. This is not the case for deep networks; the number of parameters necessary to separate $C_1$ and $C_2$ will depend on the gap $r_2 - r_1$. 

In the important special case where $f$ is parameterized by a fully connected deep network with $\ell$ layers, $h$ hidden units per layer, and ReLU activations, \cite{Raghu17} prove that $f$ subdivides the input space into convex polytopes. In each convex polytope, $f$ defines a linear function that agrees on the boundary of the polytope with its neighbors. They showed that, when the inputs are in $\R^2$, the number of polytopes in the subdivision is at most $\mathcal{O}(h^{2\ell})$ (\cite{Raghu17}[Theorem 1]). 

Let $\mathcal{S}_{f}$ denote the subdivision of space into convex polytopes induced by $f$. Consider the decision boundary $\mathcal{D}_{f} = \{x \in \R^d: f(x; \vtheta) = 0\}$ of $f$. $\mathcal{D}_{f}$ can be constructed by examining each polytope $P \in \mathcal{S}_{f}$ and solving the linear equation $f_{P}(x) = 0$ where $f_{P}$ is the linear function defined on $P$ by $f$. Since $f_P$ is linear the solution is either (1) the empty set, (2) a \emph{single} line segment, or (3) all of $P$. Case (3) is a degenerate case and there are ways to perturb $f$ by an infinitesimally small amount such that case (3) never occurs and the classification accuracy is unchanged. Thus we conclude that $\mathcal{D}_{f}$ is a piecewise-linear curve comprised of line segments. (In higher dimensions $\mathcal{D}_{f}$ is composed of subsets of hyperplanes.) See Figure \ref{fig:simpleexample}.   

Suppose that $\mathcal{D}_{f}$ separates $C_1$ from $C_2$ and let $s \in \mathcal{D}_{f}$ be a line segment of the decision boundary. Since $s$ lies in the space between $C_1$ and $C_2$, the length $|s| \leq 2\sqrt{r_2^{2} - r_1^{2}}$, which is tight when $s$ is tangent to $C_1$ and touches $C_2$ at both of its endpoints. For $\mathcal{D}_{f}$ to separate $C_1$ from $C_2$, $\mathcal{D}_{f}$ must make a full rotation of $2\pi$ around the origin. The portion of this rotation that $s$ can contribute is upper bounded by $2 \arccos{\frac{r_1}{r_2}}$. Thus the number of line segments that comprise $\mathcal{D}_{f}$ is lower bounded by $\frac{\pi}{\arccos{\frac{r_1}{r_2}}}$. 

As $r_2 \rightarrow r_1$, the minimum number of segment necessary to separate $C_1$ from $C_2$ $\frac{\pi}{\arccos{\frac{r_1}{r_2}}} \rightarrow \infty$. Since each polytope $P \in \mathcal{S}_{f}$ can contribute at most one line segment to $\mathcal{D}_{f}$, the size of the model necessary to represent a decision boundary that separates $C_1$ from $C_2$ also increases as the circles get closer together. 

Now consider $C_{1}^{\epsilon}$ and $C_{2}^{\epsilon}$ under the $\|\cdot\|_{2}$ norm, defined as $C_{i}^{\epsilon} = \{x \in \R^2: \|x - C_i\|_{2} \leq \epsilon\}$. Suppose that a fully connected network $f$ described as above has sufficiently many parameters to represent a decision boundary that separates $C_1$ from $C_2$. Is $f$ also capable of learning a \emph{robust} decision boundary that separates $C_{1}^{\epsilon}$ from $C_{2}^{\epsilon}$?

\begin{figure}[h!]
\begin{center}
\includegraphics[width=0.9\textwidth]{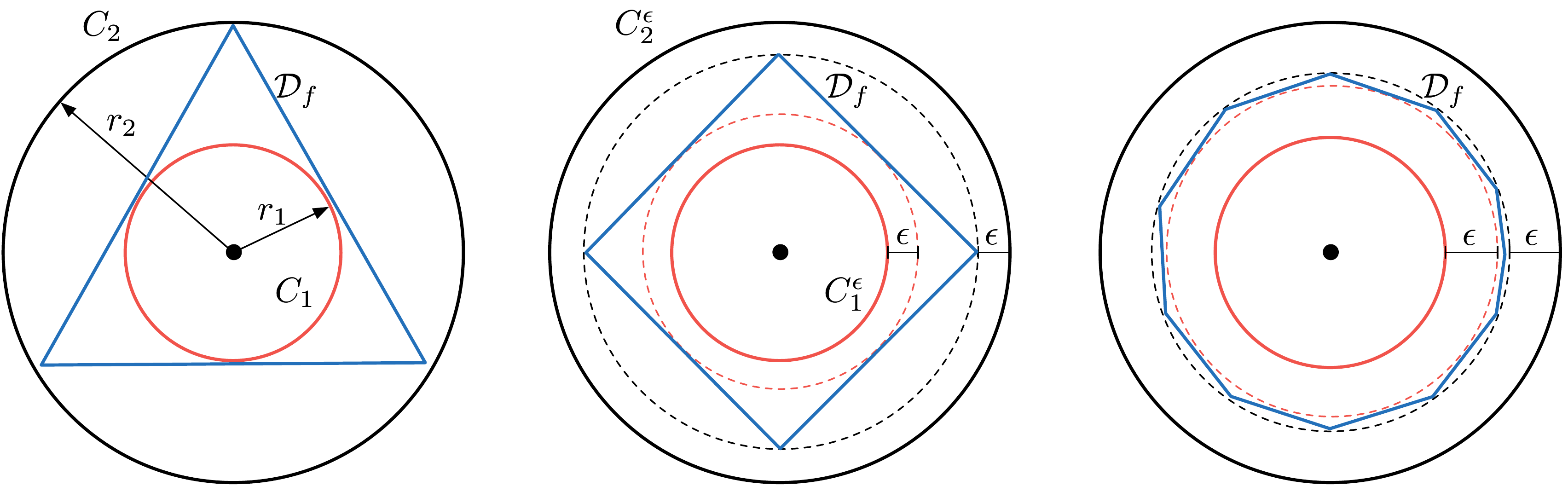}
\caption{Separating two classes of data sampled from $C_1$ and $C_2$ may require a decision boundary $\mathcal{D}_{f}$ with only a few linear segments. However a decision boundary $\mathcal{D}_{f}$ that is robust to $\epsilon$-perturbations must lie in gap between $C_{1}^{\epsilon}$ and $C_{2}^{\epsilon}$. Learning a robust decision boundary may require more linear segments and thus a more expressive model. As we increase $\epsilon$, demanding a more robust decision boundary, the gap between $C_{1}^{\epsilon}$ and $C_{2}^{\epsilon}$ decreases, and so the number of linear segments increases towards $\infty$. }
\label{fig:simpleexample}
\end{center}
\end{figure} 

For $\mathcal{D}_{f}$ to separate $C_{1}^{\epsilon}$ from $C_{2}^{\epsilon}$ it must lie in the region between $C_1^{\epsilon}$ and $C_{2}^{\epsilon}$. In this setting each segment can contribute at most $2 \arccos{\frac{r_1 + \epsilon}{r_2 - \epsilon}}$ to the full $2\pi$ rotation around the origin. The minimum number of line segments that comprise a robust decision boundary $\mathcal{D}_{f}$ is lower bounded by $\frac{\pi}{\arccos{\frac{r_1 +\epsilon}{r_2 - \epsilon}}}$. As $\epsilon \rightarrow \frac{r_2 - r_{1}}{2}$ this quantity approaches $\infty$. Even if $f$ is capable of separating $C_1$ from $C_2$ we can choose $\epsilon$ such that $\frac{\pi}{\arccos{\frac{r_1 +\epsilon}{r_2 + \epsilon}}} \in \omega(h^{2\ell})$. 

This simple example shows that learning decision boundaries that are robust to $\epsilon$-adversarial examples may require substantially more powerful models than what is required to learn the original distributions. Furthermore the amount of additional resources necessary is dependent upon the amount of robustness required. 

\subsection{An Exponential Lower Bound}
\label{ssec:proofmodelsizelowerbound}
We present an exponential lower bound on the number of linear regions necessary to represent a decision boundary that is robust to $\|\cdot\|_{2}$-perturbations of at most $\epsilon \leq \rch_{2}{\Lambda_2} - \tau$, in the simple case of two concentric $(d-1)$-spheres. 

\begin{theorem}
\label{thm:modelsizelowerbound}
Let $S_1, S_2 \subset \R^d$ be two concentric $(d-1)$-spheres with radii $r_1 < r_2$ respectively and let $S = S_1 \cup S_2$. Let $f: \R^d \rightarrow \R$ be a fully connected neural network with ReLU activations. Suppose that $f$ correctly classifies $S^{\rch_2{\Lambda_2} - \tau}$ for some $\tau \in [0, \rch_2{\Lambda_2}]$. Said differently, the decision boundary of $f$ lies in a $\tau$-tubular neighborhood of the decision axis, $\mathcal{D}_{f} \subset \Lambda_{2}^{\tau}$. Then the number of linear regions $N$ into which $f$ subdivides $\R^d$ is lower bounded as 
\begin{equation}
N \geq 2\sqrt{\pi} \frac{\Gamma(\frac{d+1}{2})}{\Gamma(\frac{d}{2})} \left(\frac{r_1 + \rch_2{\Lambda_2}}{4\tau}\right)^{\frac{d-1}{2}}.
\end{equation}
Written asymptotically, $N \in \Omega\left(\frac{\sqrt{d}}{2^d} \left(\frac{r_1 + \rch_2{\Lambda_2}}{\tau}\right)^{\frac{d-1}{2}}\right)$
\end{theorem}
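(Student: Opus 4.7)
The plan is to reduce the bound to a spherical-cap covering problem on the unit $(d-1)$-sphere. First, by the same computation that appears in the proof of Theorem~\ref{thm:tradeoff}, the decision axis $\Lambda_2$ is the concentric $(d-1)$-sphere of radius $R := (r_1+r_2)/2 = r_1 + \rch_2{\Lambda_2}$. Because $f$ is piecewise affine with ReLU activations, on each of the $N$ linear regions $P \in \mathcal{S}_f$ the zero set $\mathcal{D}_f \cap P$ is either empty or a convex polytope contained in a single affine hyperplane $H_P$, exactly as in Section~\ref{ssec:example}. Hence $\mathcal{D}_f$ is a union of at most $N$ convex affine pieces $P_1, \ldots, P_N$, and it suffices to lower bound their number.

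Next I would set up a radial projection argument. Define $\pi : \R^d \setminus \{0\} \to \Lambda_2$ by $\pi(x) = Rx/\|x\|$. Because $\mathcal{D}_f$ separates $S_1$ from $S_2$ and lies inside $\Lambda_2^\tau$, every ray from the origin meets $\mathcal{D}_f$ inside the tube, so $\pi$ sends $\mathcal{D}_f$ onto $\Lambda_2$ and the images $\pi(P_i)$ cover $\Lambda_2$. Each $P_i$ is convex in $\R^d$, and the straight segment between two points projects under $\pi$ to the minor great-circle arc between their images; hence each $\pi(P_i)$ is a geodesically convex subset of $\Lambda_2$, i.e., a spherical cap. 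Parameterizing $H_P$ by signed distance $d_0 \geq 0$ from the origin and inward unit normal $\nu$, any $x \in H_P \cap \Lambda_2^\tau$ has $\cos \angle(\pi(x), \nu) = d_0/\|x\| \geq d_0/(R+\tau)$, so the cap has angular radius at most $\arccos(d_0/(R+\tau))$. Maximizing over $d_0$ (with $d_0 = R-\tau$ being the worst case: for smaller $d_0$, the hyperplane enters the annular regime where $H_P \cap \Lambda_2^\tau$ has an inner hole that convexity forces $P_i$ to avoid, sharply limiting its angular extent), one gets
\begin{equation*}
\theta_{\max} \;=\; \arccos\!\left(\frac{R-\tau}{R+\tau}\right) \;=\; 2\arcsin\!\sqrt{\tfrac{\tau}{R+\tau}} \;\leq\; 2\sqrt{\tfrac{\tau}{R}},
\end{equation*}
where the last inequality follows from $\arcsin(u) \leq u/\sqrt{1-u^2}$ on $[0,1)$ at $u = \sqrt{\tau/(R+\tau)}$.

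Finally I would count. The area of a spherical cap on the unit $(d-1)$-sphere of angular radius $\theta \leq \pi/2$ satisfies
\begin{equation*}
\text{area}(\text{cap}) \;=\; \frac{2\pi^{(d-1)/2}}{\Gamma(\tfrac{d-1}{2})} \int_0^\theta \sin^{d-2}\!\phi\, d\phi \;\leq\; \frac{\pi^{(d-1)/2}}{\Gamma(\tfrac{d+1}{2})}\,\theta^{d-1},
\end{equation*}
using $\sin\phi \leq \phi$ and $(d-1)\Gamma(\tfrac{d-1}{2}) = 2\Gamma(\tfrac{d+1}{2})$. Dividing the total surface area $2\pi^{d/2}/\Gamma(d/2)$ of the unit $(d-1)$-sphere by the maximum cap area at $\theta_{\max}$ gives
\begin{equation*}
N \;\geq\; 2\sqrt{\pi}\,\frac{\Gamma(\tfrac{d+1}{2})}{\Gamma(\tfrac{d}{2})}\,\theta_{\max}^{-(d-1)} \;\geq\; 2\sqrt{\pi}\,\frac{\Gamma(\tfrac{d+1}{2})}{\Gamma(\tfrac{d}{2})}\left(\frac{R}{4\tau}\right)^{(d-1)/2},
\end{equation*}
which is the claimed bound after substituting $R = r_1 + \rch_2{\Lambda_2}$. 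The asymptotic form follows from Stirling's formula $\Gamma(\tfrac{d+1}{2})/\Gamma(\tfrac{d}{2}) \sim \sqrt{d/2}$ combined with $(R/(4\tau))^{(d-1)/2} = (R/\tau)^{(d-1)/2}/2^{d-1}$.

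I expect the main obstacle to be the per-piece angular-radius bound. A naive calculation for a hyperplane at distance $d_0 < R-\tau$ from the origin suggests a spherical band whose outer angle exceeds $\theta_{\max}$; the rescue is the observation that $\mathcal{D}_f \cap P$ is the intersection of a hyperplane with a convex polytope and is therefore convex in $\R^d$, so its radial image is a single geodesically convex cap on $\Lambda_2$ rather than a full band---and one must still show that the resulting angular radius is bounded by $\theta_{\max}$. Once this convexity/projection fact is established, the remaining sphere-area arithmetic and Stirling calculation are routine.
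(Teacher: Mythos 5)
Your proposal is correct in outline but takes a genuinely different route from the paper's. The paper bounds the $(d-1)$-dimensional surface area of $\mathcal{D}_{f}$ from below by that of the inner boundary sphere $\partial\Lambda_{1}^{\tau}$ via the isoperimetric inequality, bounds the $(d-1)$-volume of each linear facet by enclosing it in a ball of radius $\sqrt{4\tau(r_1+\rch_2{\Lambda_2})}$ (the tangent-chord length between the two boundary spheres of the tube), and divides. You instead radially project the facets onto the decision-axis sphere $\Lambda_2$ and run a spherical-cap covering count. The two arguments are close cousins --- both ultimately rest on ``the worst facet is tangent to the inner sphere and reaches the outer sphere'' --- but yours avoids the isoperimetric inequality entirely (the paper needs it, together with the unstated fact that $\mathcal{D}_{f}$ encloses the inner ball, to compare surface areas), at the price of the per-facet angular bound you flag. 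One small correction: the radial image of a convex piece is a geodesically convex spherical polytope, not literally a cap; what you actually need, and use, is only that it is \emph{contained in} a cap of angular radius $\theta_{\max}$.

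The step you leave open does close in a few lines, so there is no real gap. Write $R = r_1 + \rch_2{\Lambda_2}$, let $H_P$ have foot of perpendicular $c = d_0\nu$ from the origin, and let $C = \mathcal{D}_f \cap P \subset H_P \cap \Lambda_2^{\tau}$ be the convex piece. If $d_0 \geq R-\tau$ your direct bound $\arccos(d_0/(R+\tau)) \leq \theta_{\max}$ already applies. If $d_0 < R-\tau$, then $H_P \cap \Lambda_2^{\tau}$ is a $(d-1)$-dimensional annulus with inner radius $a = \sqrt{(R-\tau)^2 - d_0^2} > 0$; since $C$ is convex and disjoint from the open inner disk, a separating hyperplane inside $H_P$ yields a unit vector $u$ orthogonal to $\nu$ with $\langle y - c, u\rangle \geq a$ for all $y \in C$. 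Take the cap center to be the direction of $y_0 = c + au$, which satisfies $\|y_0\|_2 = \sqrt{d_0^2 + a^2} = R-\tau$. Then for any $y = c + v \in C$,
\begin{equation*}
\cos\angle(y, y_0) \;=\; \frac{d_0^2 + a\langle v, u\rangle}{\|y\|_2\,\|y_0\|_2} \;\geq\; \frac{d_0^2 + a^2}{(R+\tau)(R-\tau)} \;=\; \frac{R-\tau}{R+\tau} \;=\; \cos\theta_{\max},
\end{equation*}
so $\pi(C)$ lies in the cap of angular radius $\theta_{\max}$ about $\pi(y_0)$, uniformly in $d_0$. With this inserted, your cap-area arithmetic, the bound $\theta_{\max} \leq 2\sqrt{\tau/R}$, and the Stirling asymptotics all go through and reproduce the theorem's bound exactly. (As in Section~\ref{ssec:example}, you should also dispose of the degenerate regions on which $f$ vanishes identically by an infinitesimal perturbation, so that each linear region contributes a single convex hyperplane piece.)
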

\begin{proof}
For $f$ to be robust to $\epsilon$-adversarial examples for $\epsilon \leq \rch_{2}{\Lambda_2} - \tau$ the decision boundary $\mathcal{D}_{f} \subset \Lambda^{\tau}$. The boundary of $\Lambda^{\tau}$ is comprised of two disjoint $(d-1)$-spheres, which we will denote as $\partial\Lambda_{1}^{\tau}$ and $\partial\Lambda_{2}^{\tau}$ with radii $r_1 + \rch_2{\Lambda_2} - \tau$ and $r_1 + \rch_2{\Lambda_2} + \tau$ respectively. (It is standard in topology to use the $\partial$ symbol to denote the boundary of a topological space.)

The isoperimetric inequality states that a $(d-1)$-sphere minimizes the $(d-1)$-dimensional volume (thought of as ``surface area'') across all sets with fixed $d$-dimensional volume (thought of as ``volume''). Since $\mathcal{D}_{f} \subset \Lambda^{\tau}$, the $d$-dimensional volume enclosed by $\mathcal{D}_{f}$ is at least as large as that of $\partial \Lambda^{\tau}_{1}$ and so we have that $\surf{\partial \Lambda_{1}^{\tau}} \leq \surf{\mathcal{D}_{f}}$. 

Now consider any $(d-1)$-dimensional linear facet $\Pi$ of the decision boundary $\mathcal{D}_{f}$. The normal space of $\Pi$ is $1$-dimensional; let $\vn$ denote a unit vector orthogonal to $\Pi$. (There are two possible choices $\vn$ and $-\vn$.) Due to the spherical symmetry of $\Lambda^{\tau}$ and the fact that $\Pi \subset \Lambda^{\tau}$, the diameter of $\Pi$ is maximized when $\Pi$ is tangent to $\partial \Lambda_{1}^{\tau}$ at $(r_1 + \rch_2{\Lambda_2} - \tau) \vn$ (or $-(r_1 + \rch_2{\Lambda_2} - \tau) \vn$) and intersects $\partial \Lambda_{2}^{\tau}$. In pursuit of an upper bound, we will assume without loss of generality that $\Pi$ has these properties. Let $o$ denote the origin, $x = (r_1 + \rch_2{\Lambda_2} - \tau) \vn$, and $y \in \Pi \cap \partial \Lambda_{2}^{\tau}$. We consider the right triangle $\triangle oxy$ with right angle at $x$. By basic properties of right triangles, $\frac{\diam{\Pi}}{2} \leq \|x - y\|_{2} = \sqrt{(r_1+\rch_2{\Lambda_2}+\tau)^2 - (r_1 + \rch_2{\Lambda_2} - \tau)^2} = \sqrt{4\tau (r_1 + \rch_2{\Lambda_2})}$. It follows that $\Pi$ is contained in a $(d-1)$-dimensional ball of radius $\sqrt{4\tau (r_1 + \rch_2{\Lambda_2})}$. In particular the $(d-1)$-dimensional volume of $\Pi$ is bounded as $\vol_{d-1}(\Pi) \leq \vol_{d-1}{B(0, \sqrt{4\tau (r_1 + \rch_2{\Lambda_2})})}$. The $(d-1)$-dimensional volume of $\mathcal{D}_{f}$ (again thought of as ``surface area''), is equal to the sum of the $(d-1)$-dimensional volumes of the linear facets that comprise $\mathcal{D}_{f}$. Combining these inequalities gives the result.

\begin{align*}
\frac{2 \pi^{\frac{d}{2}}}{\Gamma(\frac{d}{2})} (r_1 + \rch_2{\Lambda_2})^{d-1} = \surf{\partial \Lambda_{1}^{\tau}} &\leq \surf{\mathcal{D}_{f}}\\
                                   &\leq N \vol_{d-1}{B(0, \sqrt{4\tau (r_1 + \rch_2{\Lambda_2})})}\\
                                   &\leq N \frac{\pi^{\frac{d-1}{2}}}{\Gamma(\frac{d+1}{2})} \left(4\tau (r_1 + \rch_2{\Lambda_2})\right)^{\frac{d-1}{2}}\\ 
2\sqrt{\pi} \frac{\Gamma(\frac{d+1}{2})}{\Gamma(\frac{d}{2})} \left(\frac{r_1 + \rch_2{\Lambda_2}}{4\tau}\right)^{\frac{d-1}{2}} &\leq N
\end{align*}  
\end{proof}

Prior work has experimentally verified that increasing the size of deep networks improves robustness (\cite{Madry17}). Theorem~\ref{thm:modelsizelowerbound} proves that there are settings in which robustness \emph{requires} larger models. 

\section{Experiments}
\label{sec:exp}
\label{sec:experiments}

\subsection{High Codimension Reduces Robustness}
\label{ssec:codim}
Section \ref{sssec:volumemodels} suggests that as the codimension increases it should become easier to find adversarial examples. To verify this, we introduce two synthetic datasets, {\sc Circles} and {\sc Planes}, which allow us to carefully vary the codimension while maintaining dense samples. The {\sc Circles} dataset consists of two concentric circles in the $x_1$-$x_2$-plane, the first with radius $r_1 = 1$ and the second with radius $r_2 = 3$, so that $\rch_2{\Lambda_2} = 1$. We densely sample $1000$ random points on each circle for both the training and the test sets. The {\sc Planes} dataset consists of two $2$-dimensional planes, the first in the $x_d=0$ and the second in $x_d = 2$, so that $\rch_2{\Lambda_2} = 1$. The first two axis of both planes are bounded as $-10 \leq x_1, x_2 \leq 10$, while $x_3 = \ldots = x_{d-1} = 0$. We sample the training set at the vertices of the grid described in Section~\ref{sssec:volumemodels}, and the test set at the centers of the grid cubes, the blue points in Figure~\ref{fig:grid}. Both planes are sampled so that the $1$-tubular neighborhood $X^{1}$ covers the underlying planes, where $X$ is the training set. See Figure \ref{fig:datasetsvis} for a visualization of {\sc Circles} and {\sc Planes}.

\begin{figure}[h!]
\begin{center}
\begin{subfigure}{0.4\textwidth}
\includegraphics[width=0.97\linewidth]{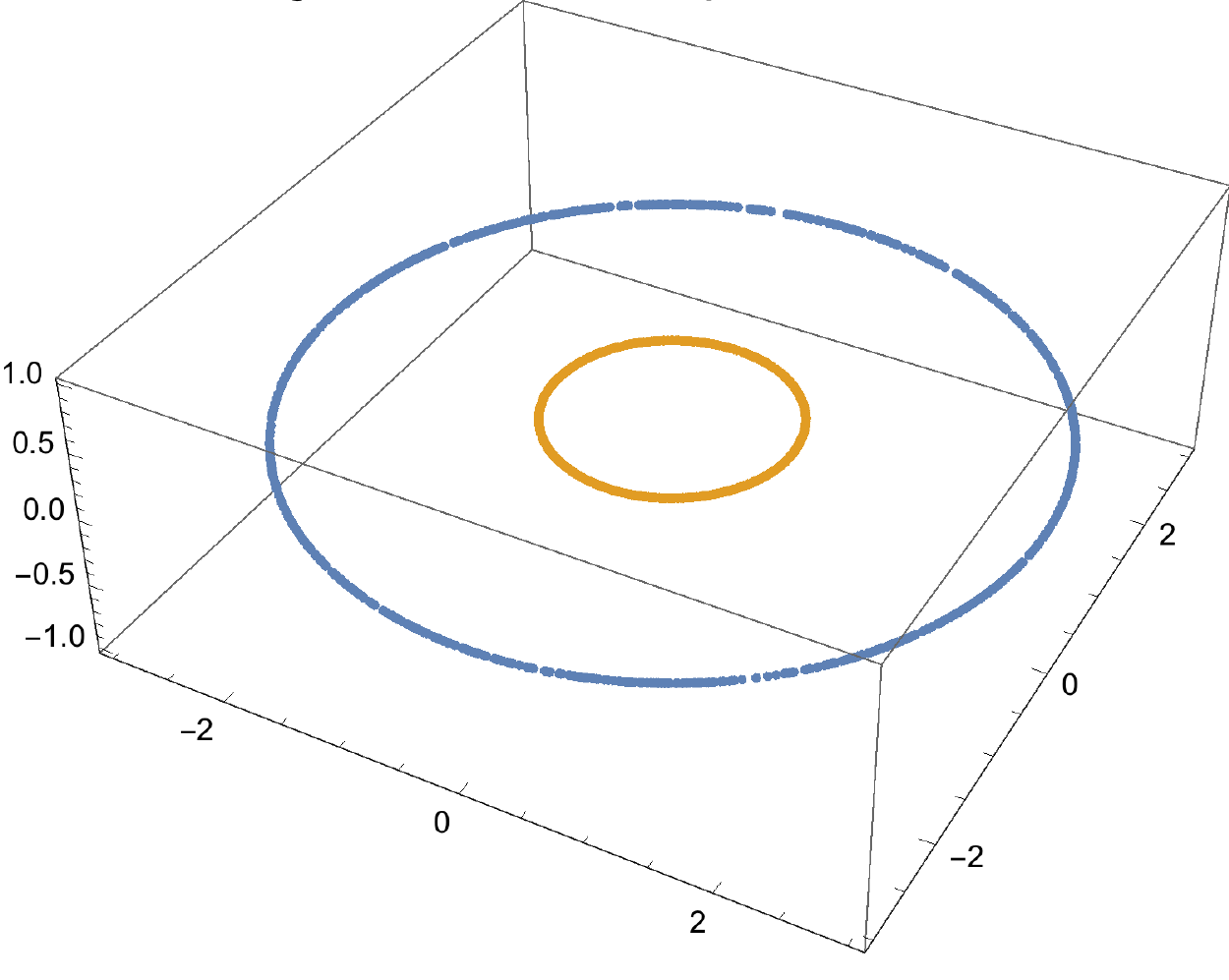}
\end{subfigure}
\begin{subfigure}{0.4\textwidth}
\includegraphics[width=0.97\linewidth]{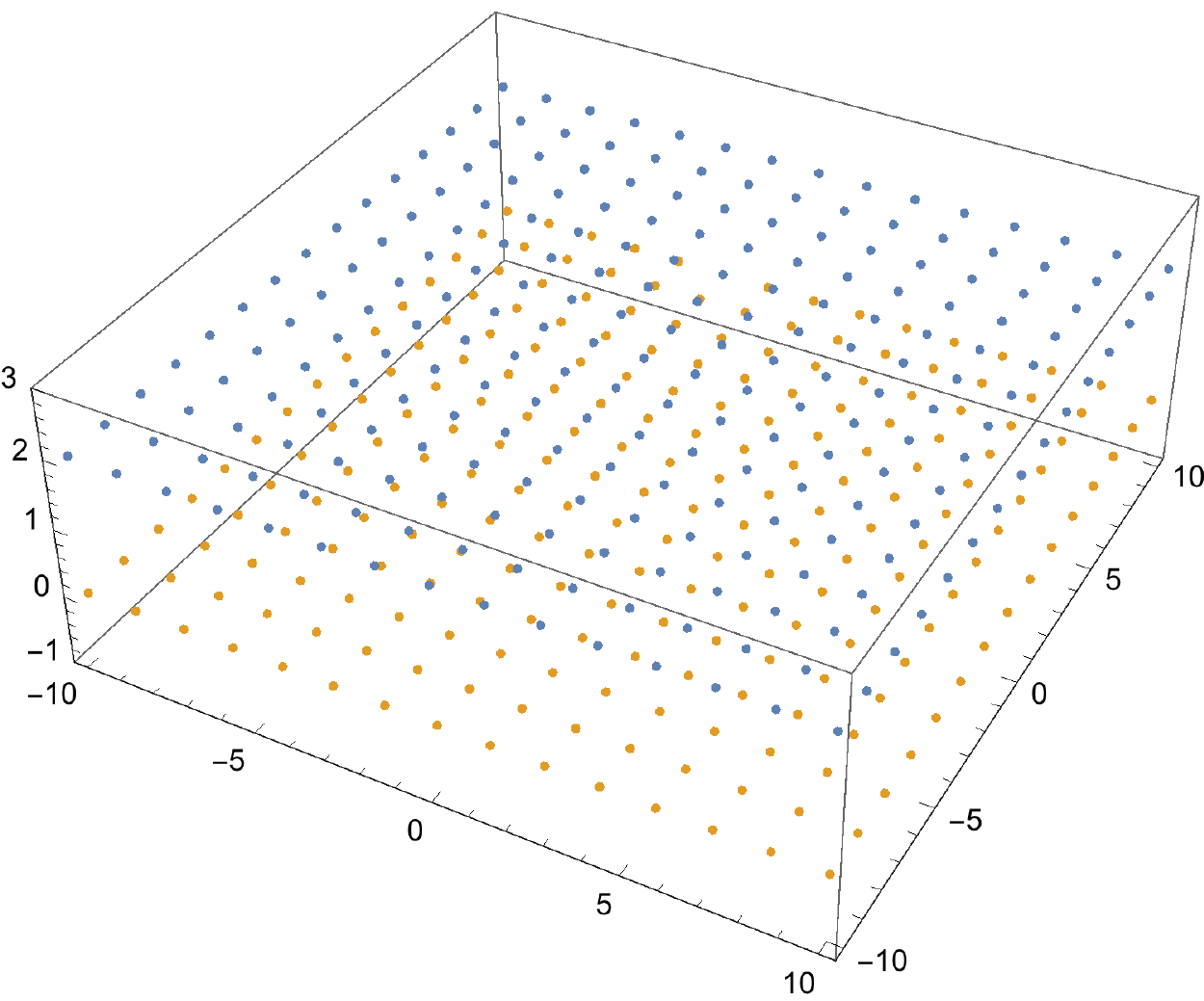}
\end{subfigure}
\caption{We create two synthetic datasets which allow us to perform controlled experiments on the affect of codimension on adversarial examples. {\bf Left}: {\sc Circles}, {\bf Right}: {\sc Planes}}
\label{fig:datasetsvis}
\end{center}
\end{figure}

We consider two attacks, the fast gradient sign method (FGSM) (\cite{Goodfellow14}) and the basic iterative method (BIM) (\cite{Kurakin16}) under $\|\cdot\|_{2}$. We use the implementations provided in the cleverhans library (\cite{Papernot18}). Further implementation details are provided in Appendix~\ref{sec:impdets}. Our experimental results are averaged over 20 retrainings of our model architecture, using Adam (\cite{Kingma15}). Further implementation details are provided in Appendix~\ref{sec:impdets}. 

Figure \ref{fig:codimexp} (Top Left, Bottom Left) shows the robustness of naturally trained networks to FGSM and BIM attacks on the {\sc Circles} dataset as we increase the codimension. For both attacks we see a steady decrease in robustness as we increase the codimension, on average. The result is reproducible with other optimization procedures; Figure \ref{fig:codimexpsgd} in Appendix~\ref{ssec:sgdexps} shows the results for SGD.

In Figure~\ref{fig:codimexp} (Top Right, Bottom Right), we use a nearest neighbor (NN) classifier to classify the adversarial examples generated by FGSM and BIM for our naturally trained networks on {\sc Circles}. Nearest neighbors is robust even when the codimension is high, as long as the low-dimensional data manifold is well sampled. This is a consequence of the fact that the Voronoi cells of the samples are elongated in the directions normal to the data manifold when the sample is dense (\cite{Dey07}). 

\begin{figure}[h!]
\begin{center}
\begin{subfigure}{0.45\textwidth}
\includegraphics[width=0.99\linewidth]{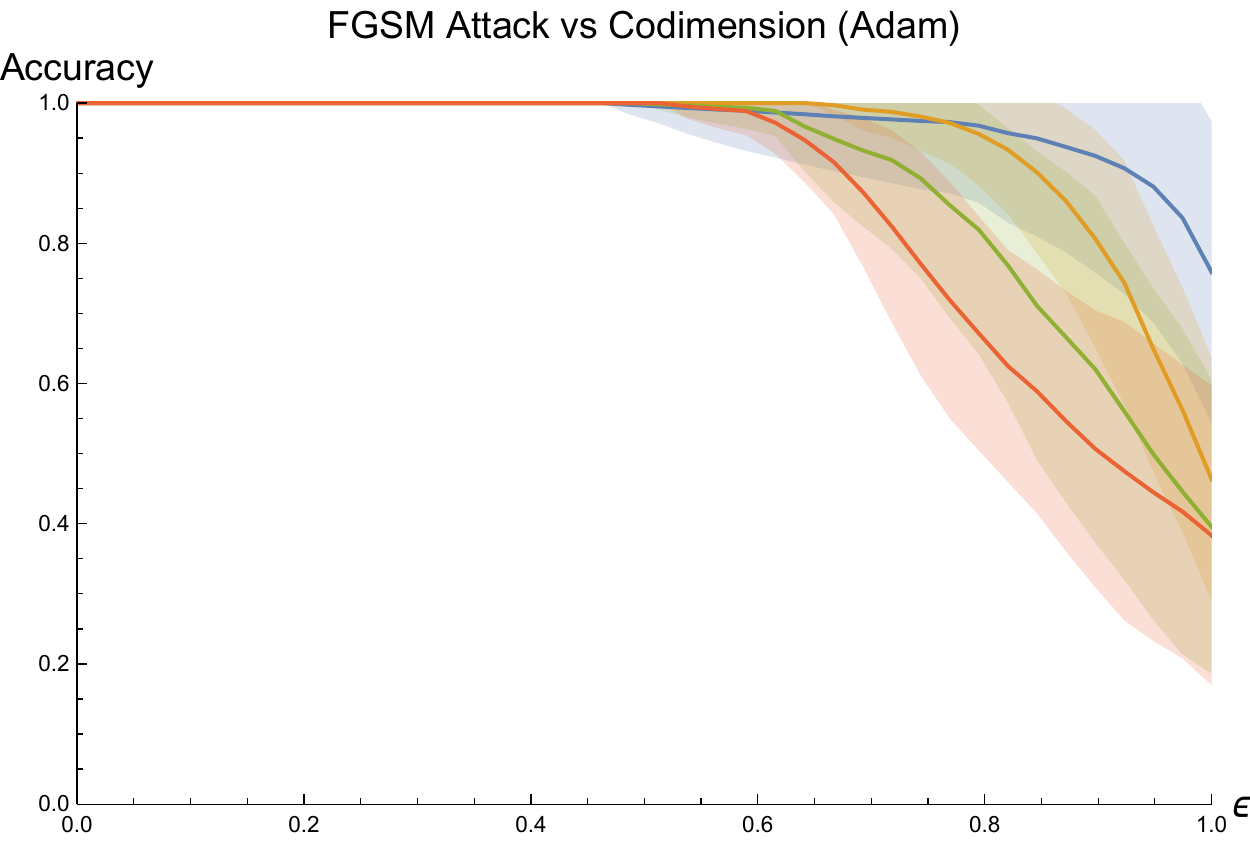}
\end{subfigure}
\begin{subfigure}{0.45\textwidth}
\includegraphics[width=0.98\linewidth]{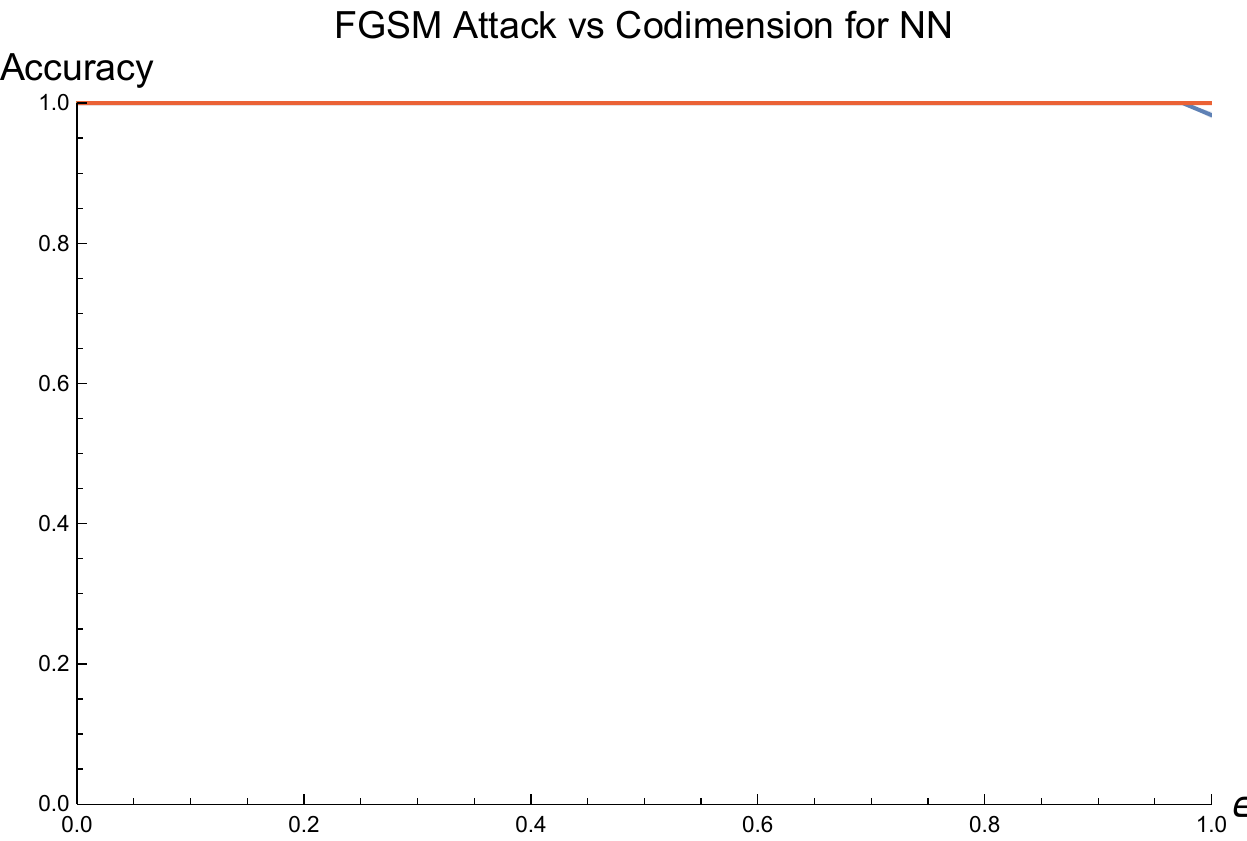}
\end{subfigure}
\begin{subfigure}{0.08\textwidth}
\includegraphics[width=0.99\linewidth]{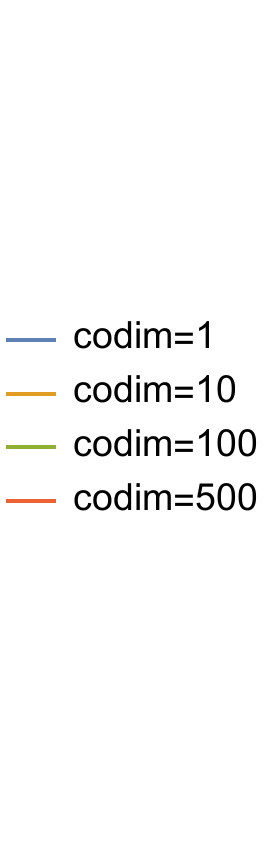}
\end{subfigure}
\begin{subfigure}{0.45\textwidth}
\includegraphics[width=0.99\linewidth]{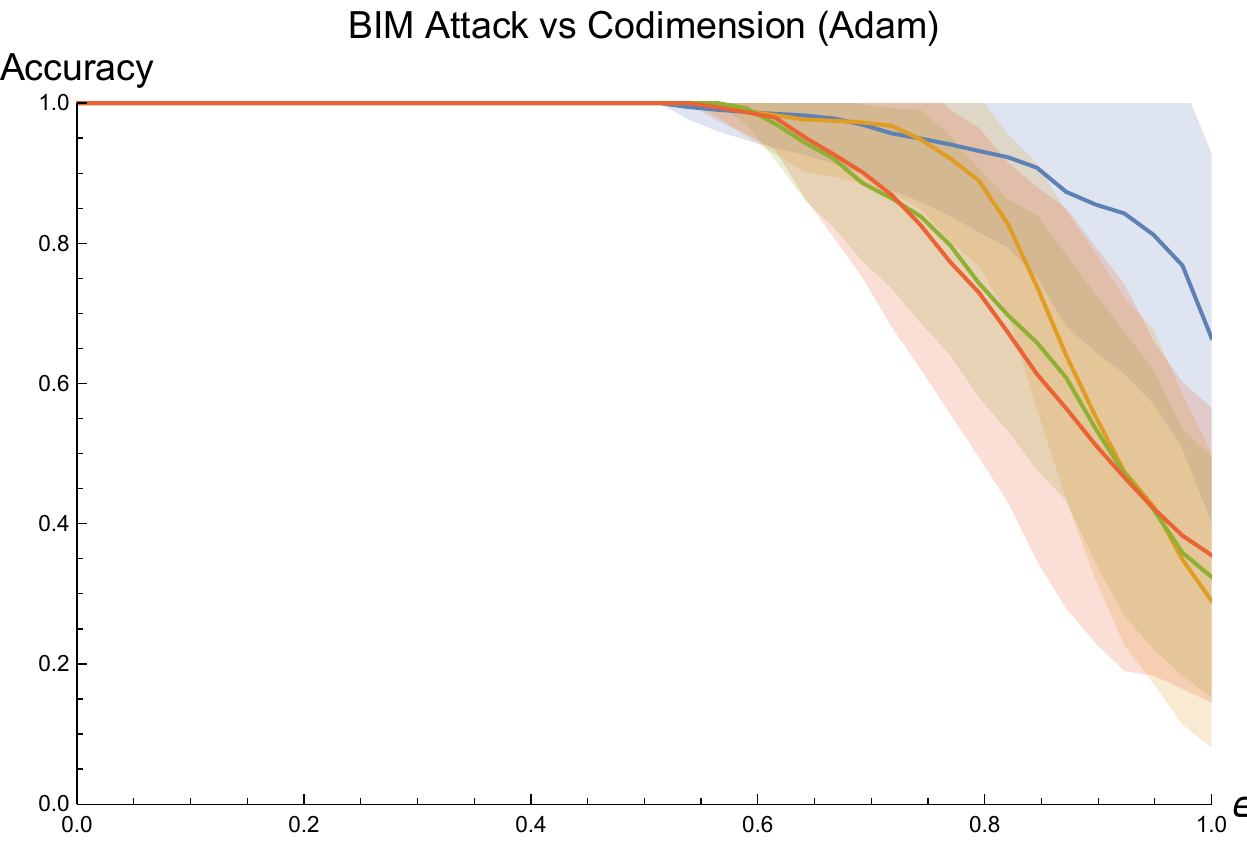}
\end{subfigure}
\begin{subfigure}{0.45\textwidth}
\includegraphics[width=0.98\linewidth]{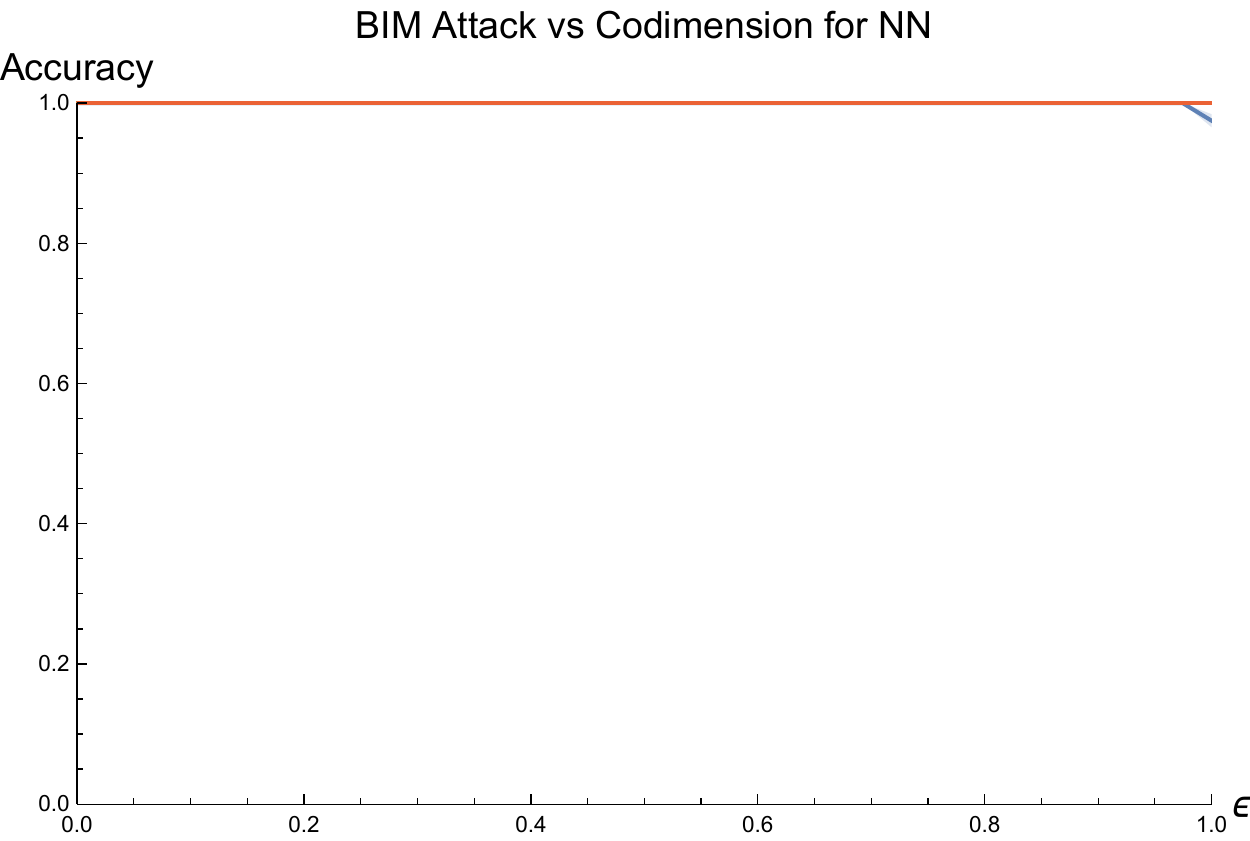}
\end{subfigure}
\begin{subfigure}{0.08\textwidth}
\includegraphics[width=0.99\linewidth]{figures/codim-legend}
\end{subfigure}
\caption{As the codimension increases the robustness of decision boundaries learned by Adam on naturally trained networks decreases steadily. \textbf{Top Left}: Effectiveness of FGSM attacks as codimension increases. \textbf{Bottom Left}: BIM attacks. \textbf{Top and Bottom Right}: A nearest neighbor classifier exhibits essentially perfect accuracy to the adversarial examples generated for our naturally trained networks by FGSM and BIM for all $\epsilon$ and codimension.}
\label{fig:codimexp}
\end{center}
\end{figure} 

\cite{Madry17} propose training against a projected gradient descent (PGD) adversary to improve robustness. Section \ref{sssec:volumemodels} suggests that this should be insufficient to guarantee robustness, as $X^{\epsilon}$ is often a poor model for $\mathcal{M}^{\epsilon}$. We follow the adversarial training procedure of \cite{Madry17} by against a PGD adversary with $\epsilon = 1$ under $\|\cdot\|_{2}$-perturbations on the {\sc Planes} dataset. Figure~\ref{fig:advcodimexp} (Left) shows that it is still easy to find adversarial examples for $\epsilon < 1$ and that as the codimension increases we can find adversarial examples for decreasing values of $\epsilon$. In contrast, a nearest neighbor classifier (Right) achieves perfect robustness for all $\epsilon$ on this data.

\begin{figure}[h!]
\begin{center}
\begin{subfigure}{0.45\textwidth}
\includegraphics[width=0.99\linewidth]{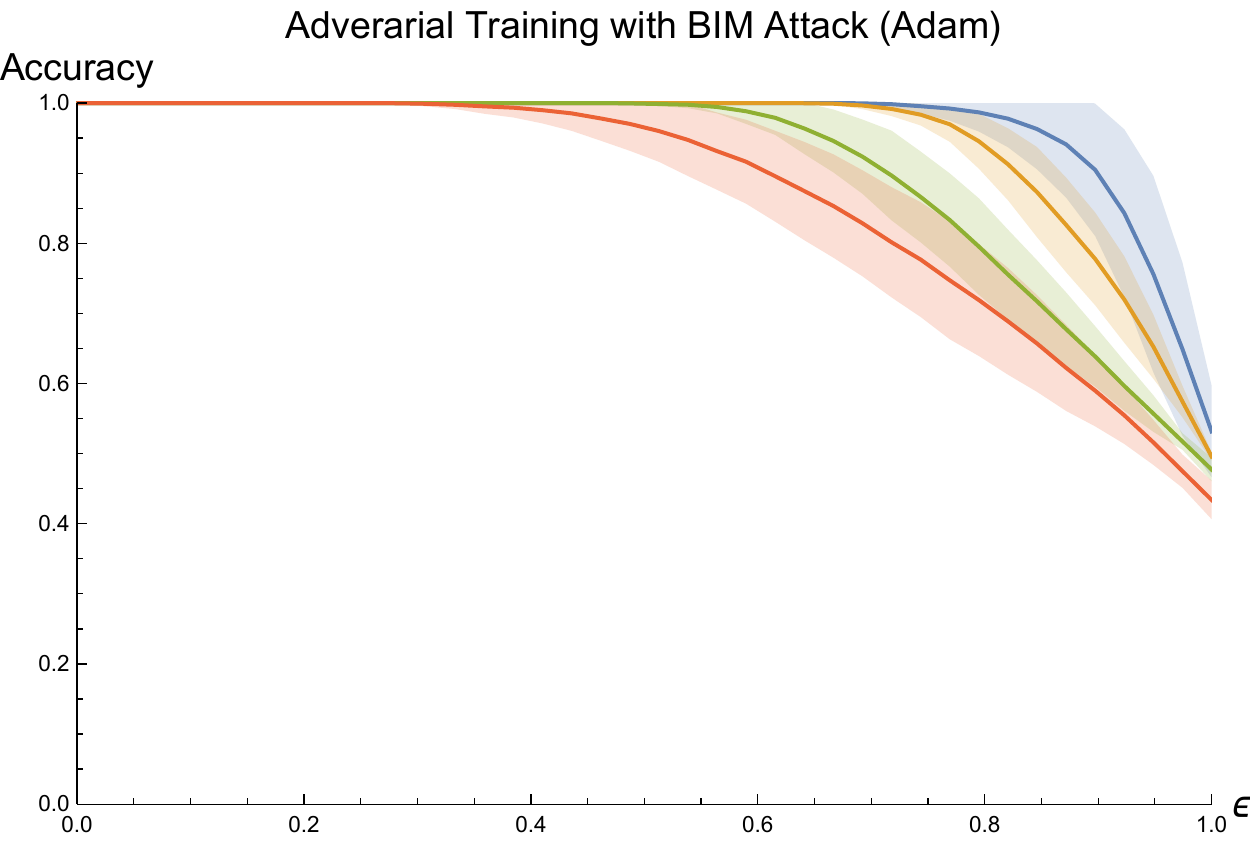}
\end{subfigure}
\begin{subfigure}{0.45\textwidth}
\includegraphics[width=0.99\linewidth]{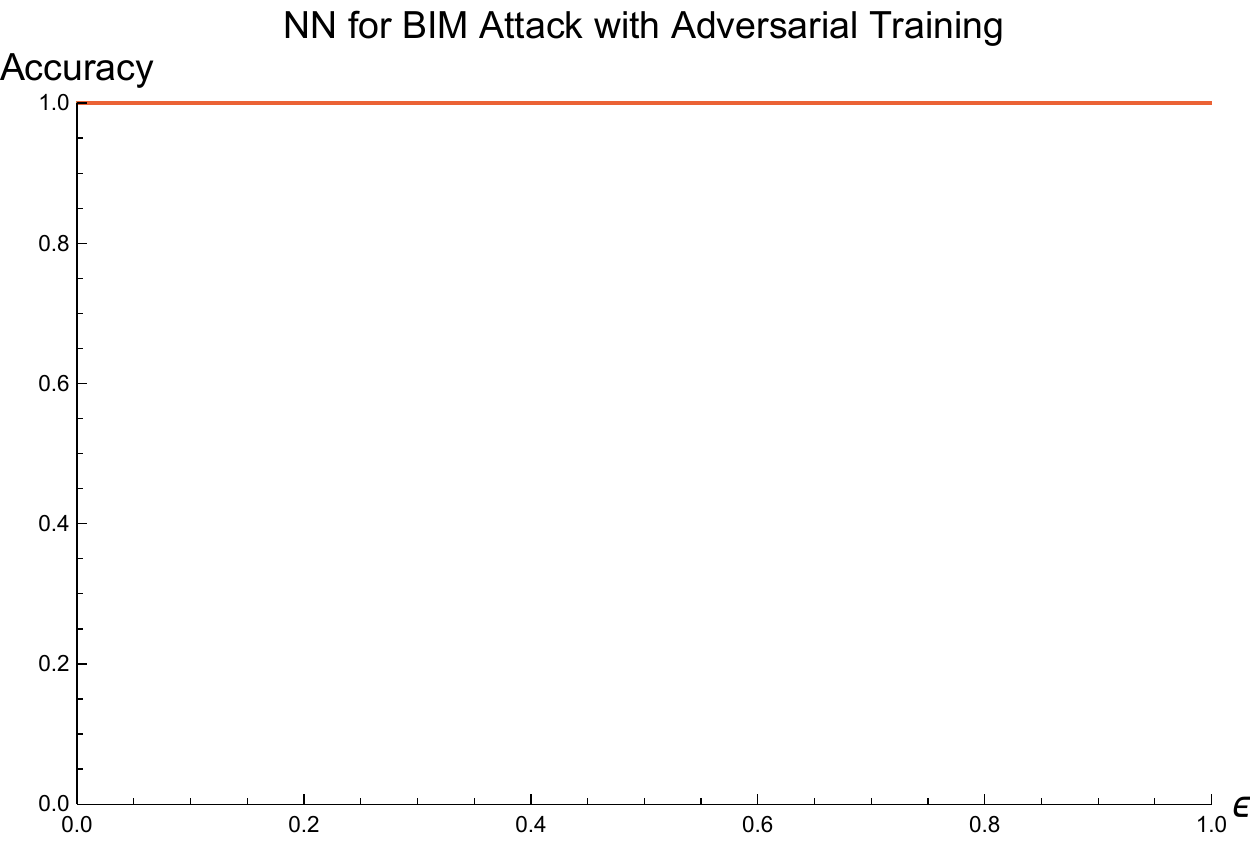}
\end{subfigure}
\begin{subfigure}{0.08\textwidth}
\includegraphics[width=0.99\linewidth]{figures/codim-legend}
\end{subfigure}
\caption{\textbf{Left}: Training using the adversarial training procedure of \cite{Madry17} is no guarantee of robustness; as the codimension increases it becomes easier to find adversarial examples using BIM attacks. \textbf{Right}: The performance of a nearest neighbor classifier on this data is perfect for all $\epsilon$ and codimension.}
\label{fig:advcodimexp}
\end{center}
\end{figure} 

The {\sc Planes} dataset is sampled so that the trianing set is a $1$-cover of the underlying planes, which requires 450 sample points. Figure~\ref{fig:samplingdensityexp} shows the results of increasing the sampling density to a $0.5$-cover (1682 samples) and a $0.25$-cover (6498 samples). Increasing the sampling density improves the robustness of adversarial training at the same codimension and particularly in low-codimension. However adversarial training with a substantially larger training set does not produce a classifier as robust as a nearest neighbor classifier on a much smaller training set. Nearest neighbors is much more sample efficient than adversarial training, as predicted by Theorem \ref{thm:sampling}.

\begin{figure}
\begin{center}
\begin{subfigure}{0.3\textwidth}
\includegraphics[width=0.99\linewidth]{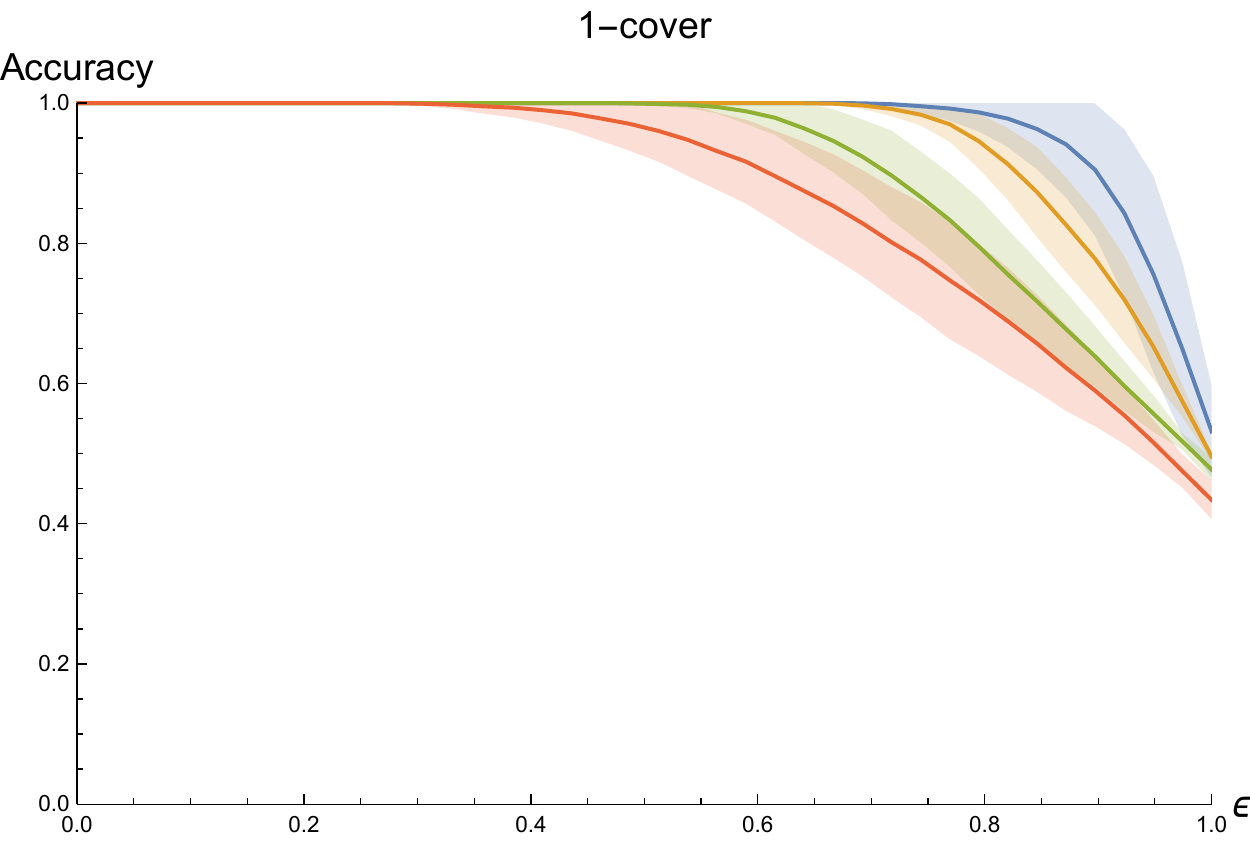}
\end{subfigure}
\begin{subfigure}{0.3\textwidth}
\includegraphics[width=0.99\linewidth]{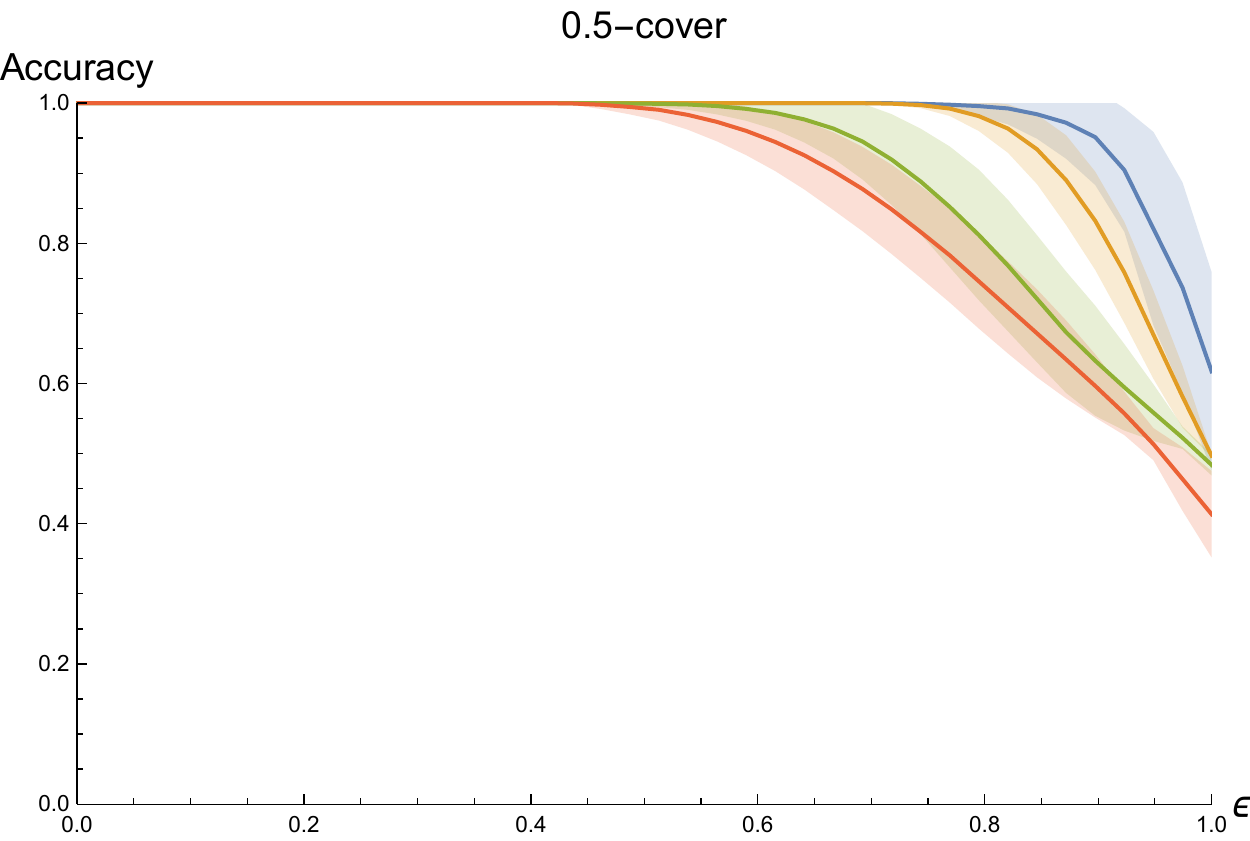}
\end{subfigure}
\begin{subfigure}{0.3\textwidth}
\includegraphics[width=0.99\linewidth]{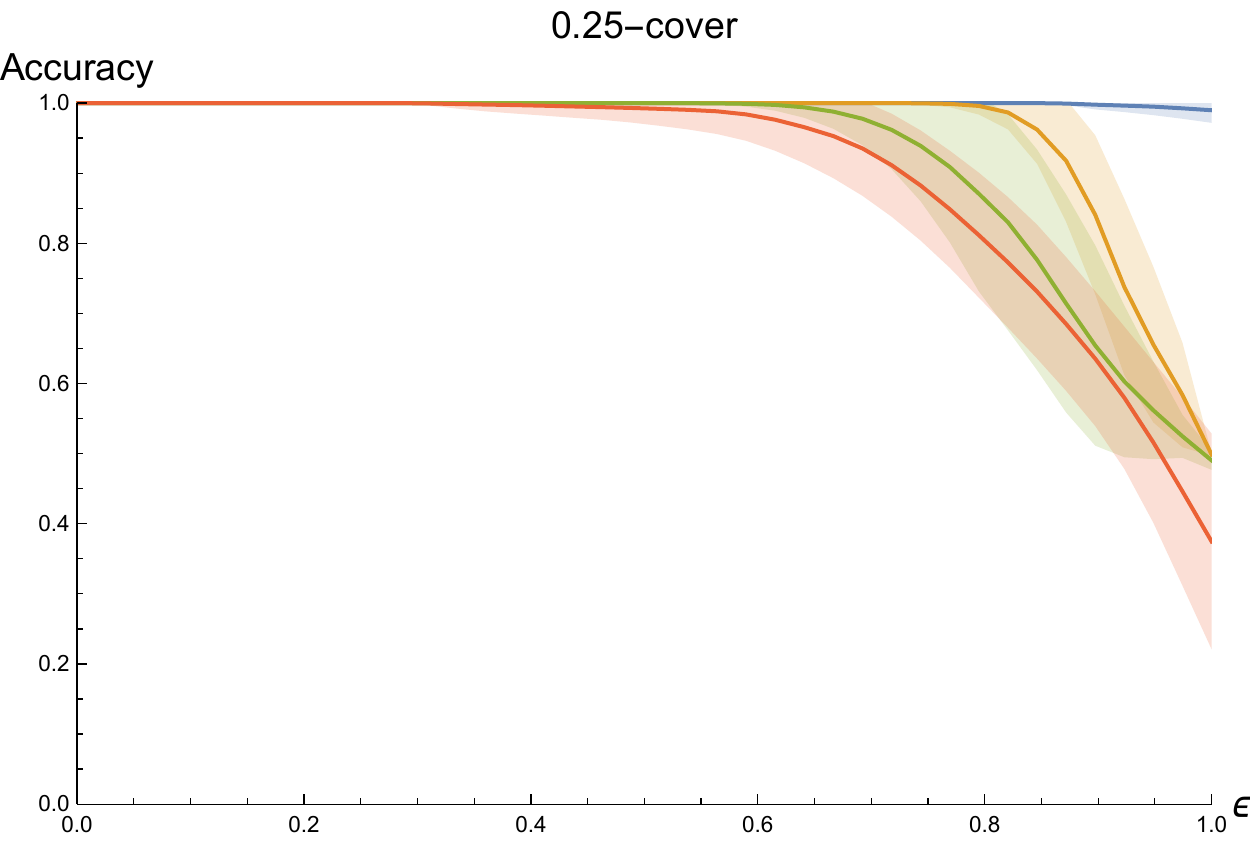}
\end{subfigure}
\begin{subfigure}{0.08\textwidth}
\includegraphics[width=0.99\linewidth]{figures/codim-legend}
\end{subfigure}
\caption{Adversarial training of \cite{Madry17} on the {\sc Planes} dataset with a $1$-cover (left), consisting of $450$ samples, a $0.5$-cover (center), $1682$ samples, and a $0.25$-cover (right), $6498$ samples. Increasing the sampling density improves robustness at the same codimension. However even training on a significantly denser training set does not produce a classifier as robust as a nearest neighbor classifier on a much sparser training set, Figure \ref{fig:advcodimexp} (Right).}
\label{fig:samplingdensityexp}
\end{center}
\end{figure}

\subsection{Adversarial Perturbations are in the Directions Normal to the Data Manifold}
\label{ssec:normexps}
Let $\eta_{x}$ be an adversarial perturbation generated by FGSM with $\epsilon = 1$ at $x \in \mathcal{M}$. Note that the adversarial example is constructed as $\hat{x} = x + \eta_{x}$. In Figure~\ref{fig:angleexp} we plot a histogram of the angles $\angle(\eta_{x}, N_{x}\mathcal{M})$ between $\eta_{x}$ and the normal space $N_{x}\mathcal{M}$ for the {\sc Circles} dataset in codimensions $1, 10, 100$, and $500$. In codimension $1$, $88\%$ of adversarial perturbations make an angle of less than $10^{\circ}$ with the normal space. Similarly in codimension $10$, $97\%$, in codimension $100$, $96\%$, and in codimension $500$, $93\%$. As Figure~\ref{fig:angleexp} shows, nearly all adversarial perturbations make an angle less than $20^{\circ}$ with the normal space. Our results are averaged over 20 retrainings of the model using SGD.

Throughout this paper we've argued that high codimension is a key source of the pervasiveness of adversarial examples. Figure~\ref{fig:angleexp} shows that, when the underlying data manifold is well sampled, adversarial perturbations are well aligned with the normal space. When the codimension is high, there are many directions normal to the manifold and thus many directions in which to construct adversarial perturbations.

\begin{figure}[h!]
\begin{center}
\begin{subfigure}{0.24\textwidth}
\includegraphics[width=0.99\linewidth]{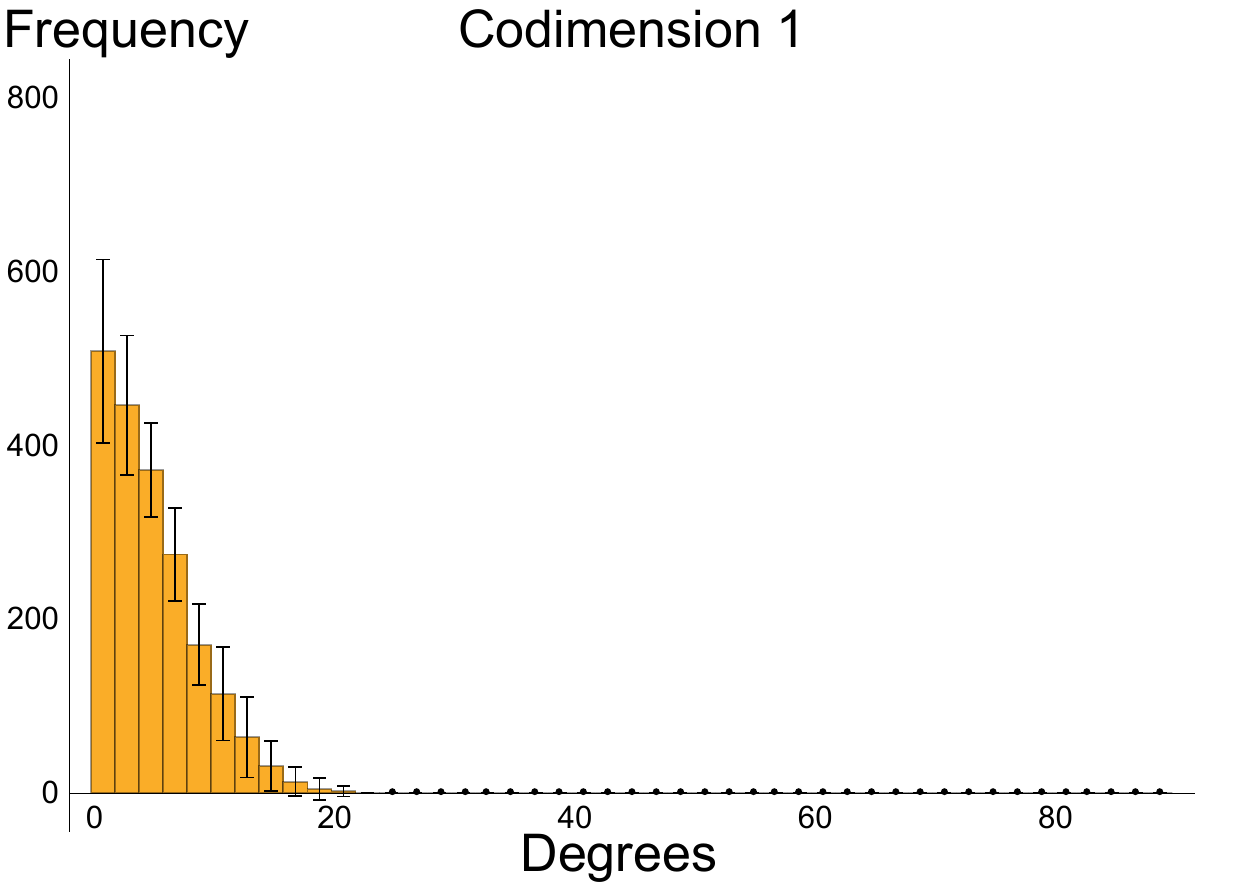}
\end{subfigure}
\begin{subfigure}{0.24\textwidth}
\includegraphics[width=0.99\linewidth]{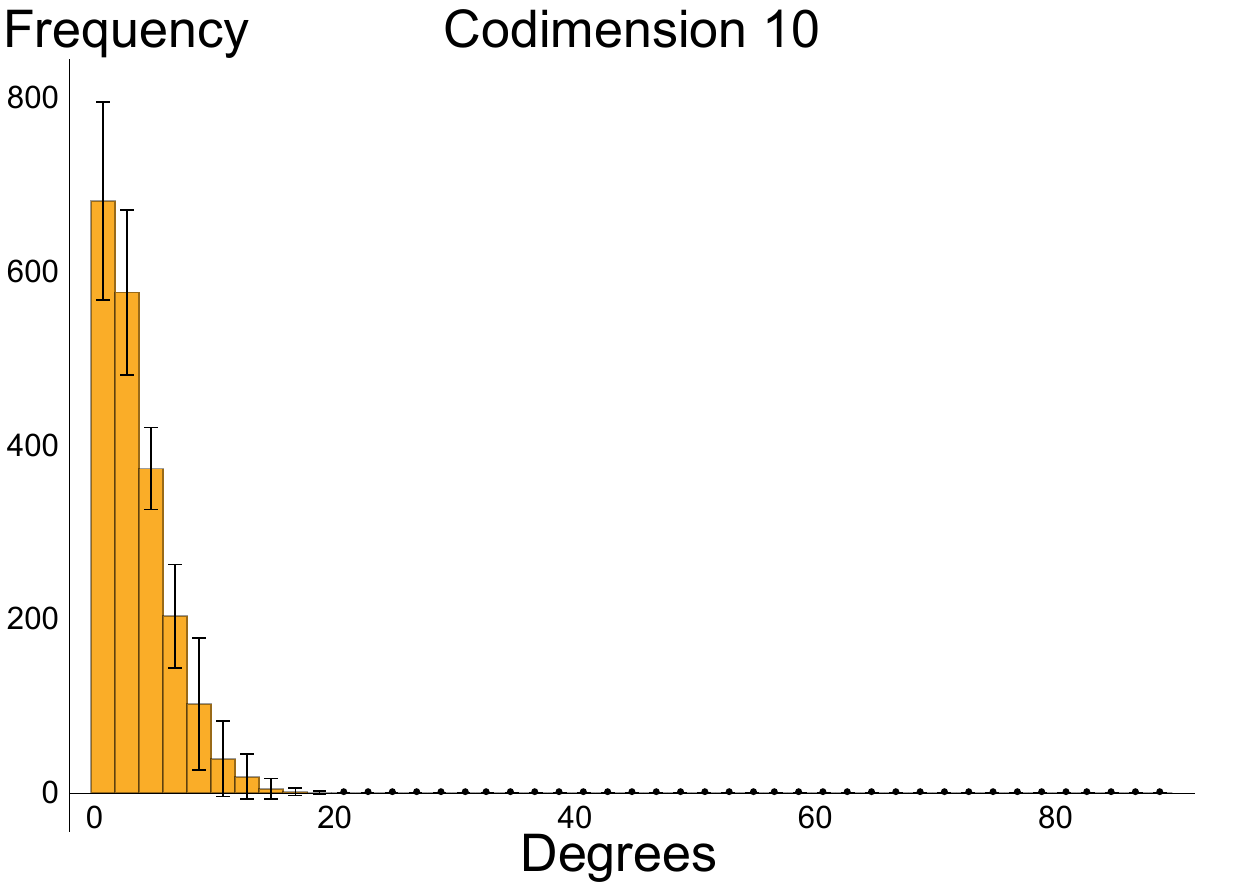}
\end{subfigure}
\begin{subfigure}{0.24\textwidth}
\includegraphics[width=0.99\linewidth]{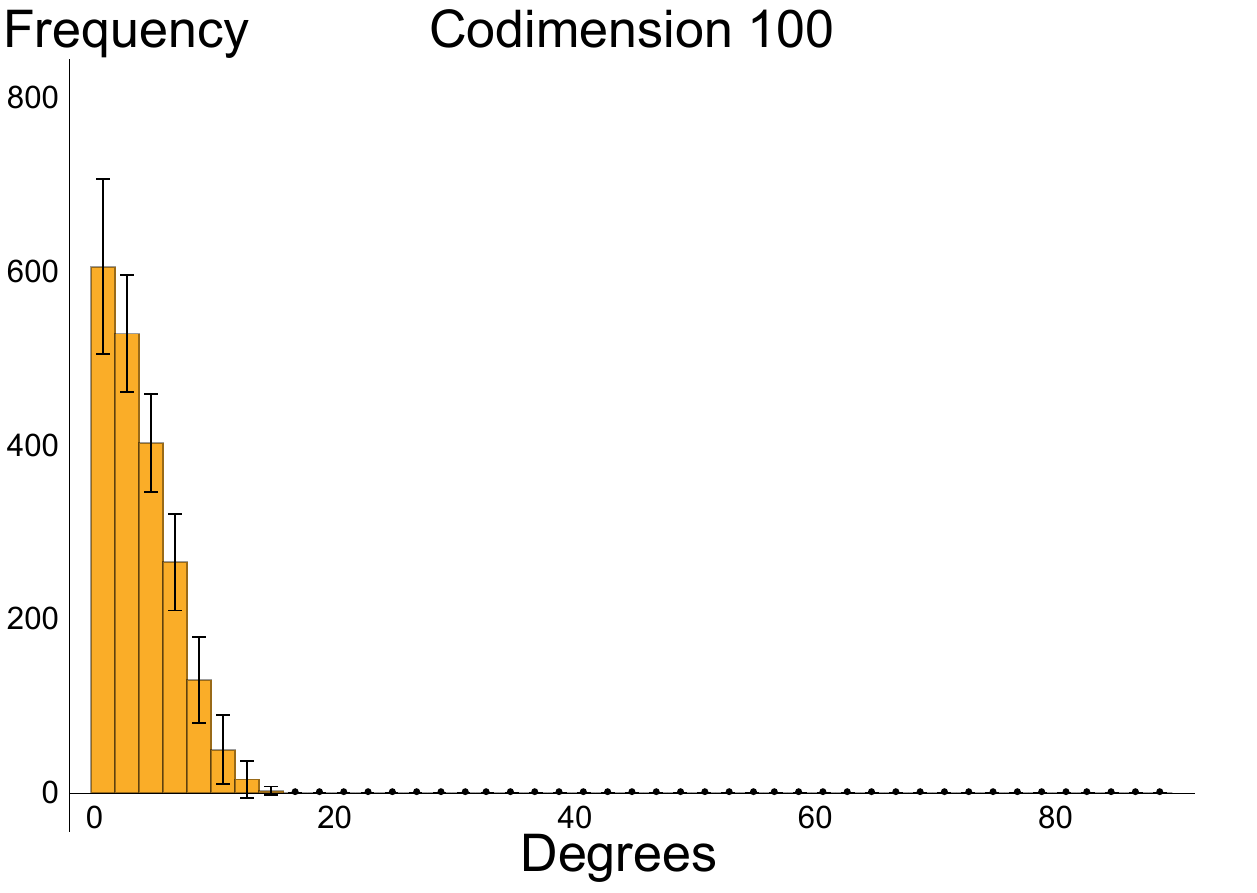}
\end{subfigure}
\begin{subfigure}{0.24\textwidth}
\includegraphics[width=0.99\linewidth]{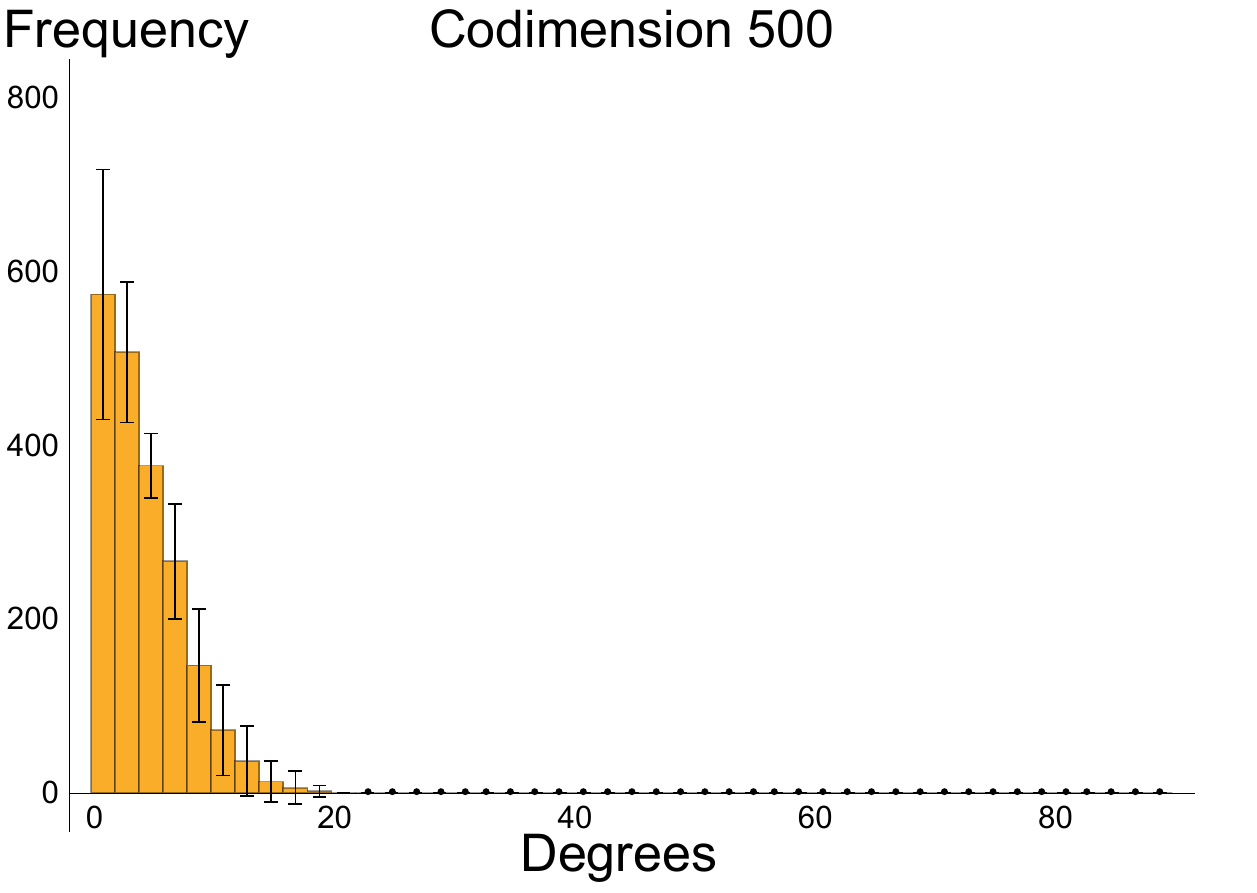}
\end{subfigure}
\caption{Histograms of the angle deviations of FGSM perturbations from the normal space for the {\sc Circles} dataset in, from right to left, codimensions 1, 10, 100, and 500. Nearly all perturbations make an angle of less than $20^{\circ}$ with the normal space.}
\label{fig:angleexp}
\end{center}
\end{figure} 

\subsection{A Gradient-Free Geometric Attack}
\label{ssec:geomattack}
Most current attacks rely on the gradient of the loss function at a test sample to find a direction towards the decision boundary. Partial resistance against such attacks can be achieved by obfuscating the gradients, but \cite{Athalye18} showed how to circumvent such defenses. \cite{Brendel18} propose a gradient-free attack for $\|\cdot\|_{2}$, that starts from a misclassified point and walks toward the original point. 

In this section we propose a gradient-free attack that only requires oracle access to a model, meaning we only query the model for a prediction. Consider a point $x \in X_{\operatorname{test}}$ and the $\|\cdot\|_{p}$-ball $B(x, r)$ centered at $x$ of radius $r$. To construct an adversarial perturbation $\eta_{x} \in B(0, r)$, giving an adversarial example $\hat{x} = x + \eta_{x}$, we project every point in $X_{\operatorname{test}}$ onto $B(x, r)$ and query the oracle for a prediction for each point. If there exists $y \in X_{\operatorname{test}}$ that is projected to a point $y' \in B(x, r)$ and that is classified differently from $x$, we take $\eta_{x} = y' - x$, otherwise $\eta_x = 0$. This incredibly simple attack reduces the accuracy of the pretrained robust model of \cite{Madry17} for $\|\cdot\|_{\infty}$ and $\epsilon = 0.3$ to $90.6\%$, less than two percent shy of the current SOTA for whitebox attacks, $88.79\%$ (\cite{Zheng18}). 

Simple datasets, such as {\sc Circles} and {\sc Planes}, allow us to diagnose issues in learning algorithms in settings where we understand how the algorithm should behave. For example \cite{Athalye18} state that the work of \cite{Madry17} does not suffer from obfuscated gradients. In Appendix~\ref{sec:madryobf} we provide evidence that \cite{Madry17} \emph{does} suffer from the obfuscated gradients problem, failing one of \cite{Athalye18}'s criteria for detecting obfuscated gradients.

\subsection{MNIST}

To explore performance on a more realistic dataset, we compared nearest neighbors with robust and natural models on MNIST. We considered two attacks: BIM under $l_\infty$ norm against the natural and robust models as well as a custom attack against nearest neighbors. Each of these attacks are generated from the MNIST test set. Architecture details can be found in Appendix~\ref{sec:impdets}. 
Figure \ref{fig:nnexpbim} (Left) shows that nearest neighbors is substantially more robust to BIM attacks than the naturally trained model. Figure \ref{fig:nnexpbim} (Center) shows that nearest neighbors is comparable to the robust model up to $\epsilon = 0.3$, which is the value for which the robust model was trained. After $\epsilon = 0.3$, nearest neighbors is substantially more robust to BIM attacks than the robust model. At $\epsilon = 0.5$, nearest neighbors maintains accuracy of $78\%$ to adversarial perturbations that cause the accuracy of the robust model to drop to $0\%$. In Appendix~\ref{ssec:fgsmexp} we provide a similar result for FGSM attacks.

Figure \ref{fig:nnexpbim} (Right) shows the performance of nearest neighbors and the robust model on adversarial examples generated for nearest neighbors. The nearest neighbor attacks are generated as follows: iteratively find the $k$ nearest neighbors and compute an attack direction by walking away from the neighbors in the true class and toward the neighbors in other classes. We find that nearest neighbors is able to be tricked by this approach, but the robust model is not. This indicates that the errors of these models are distinct and suggests that ensemble methods may be effective. 

\begin{figure}
\begin{center}
\begin{subfigure}{0.31\textwidth}
\includegraphics[width=0.99\linewidth]{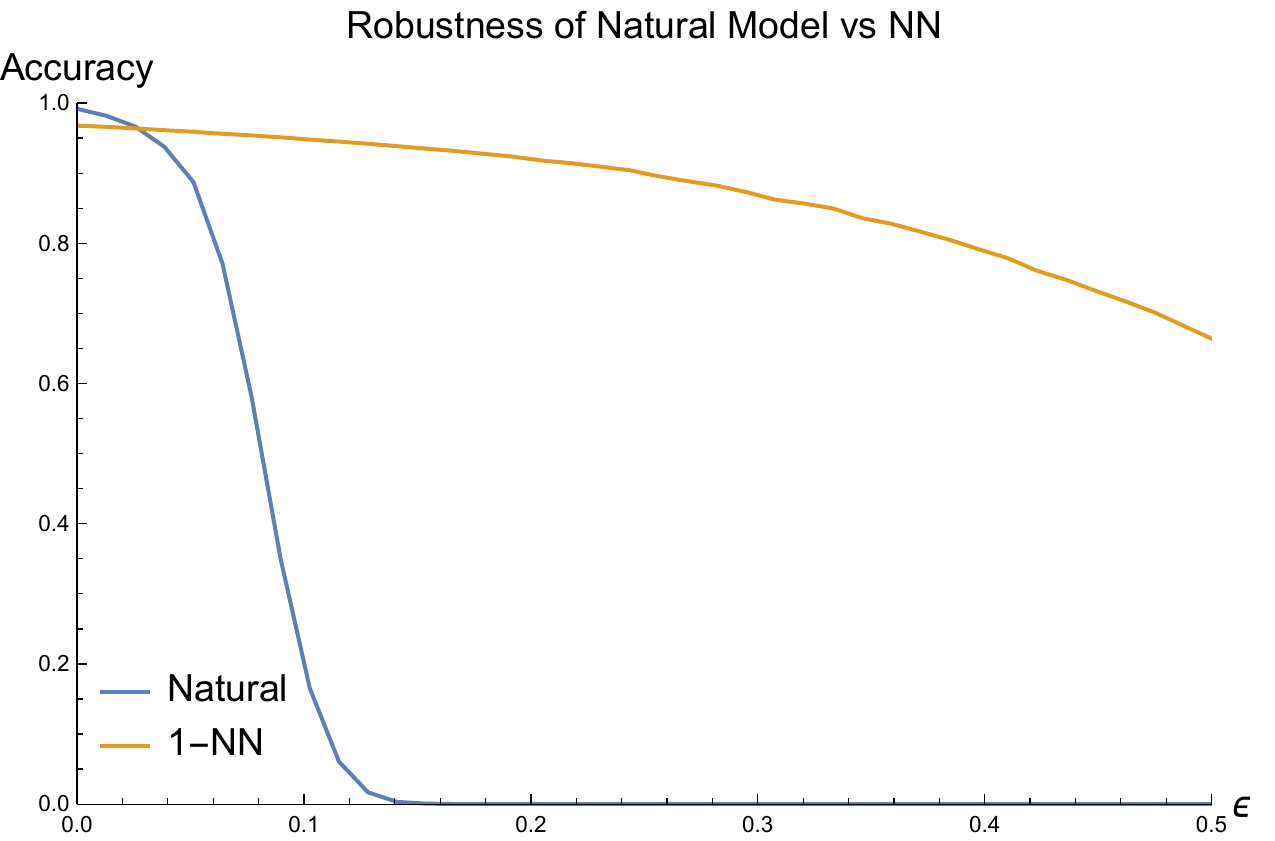}
\end{subfigure}
\begin{subfigure}{0.31\textwidth}
\includegraphics[width=0.99\linewidth]{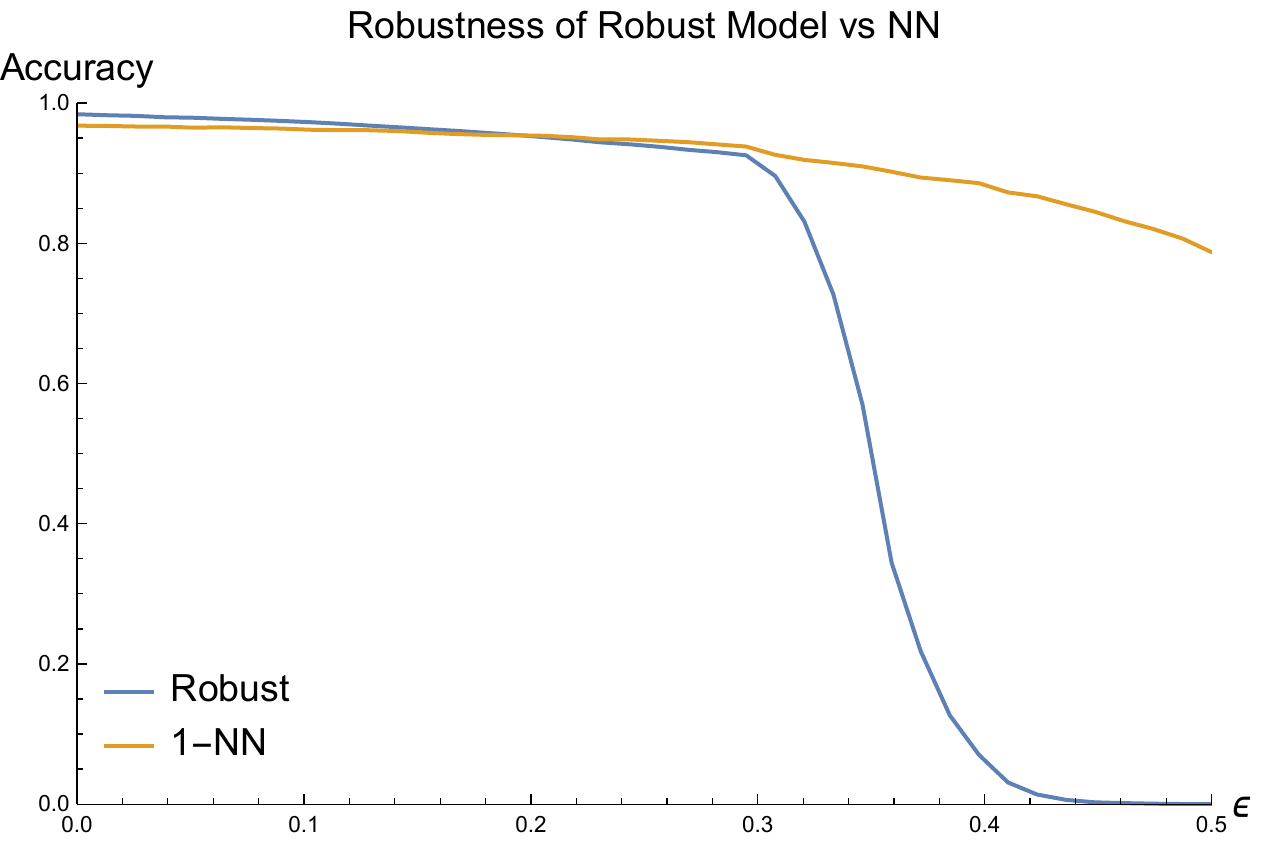}
\end{subfigure}
\begin{subfigure}{0.31\textwidth}
\includegraphics[width=0.99\linewidth]{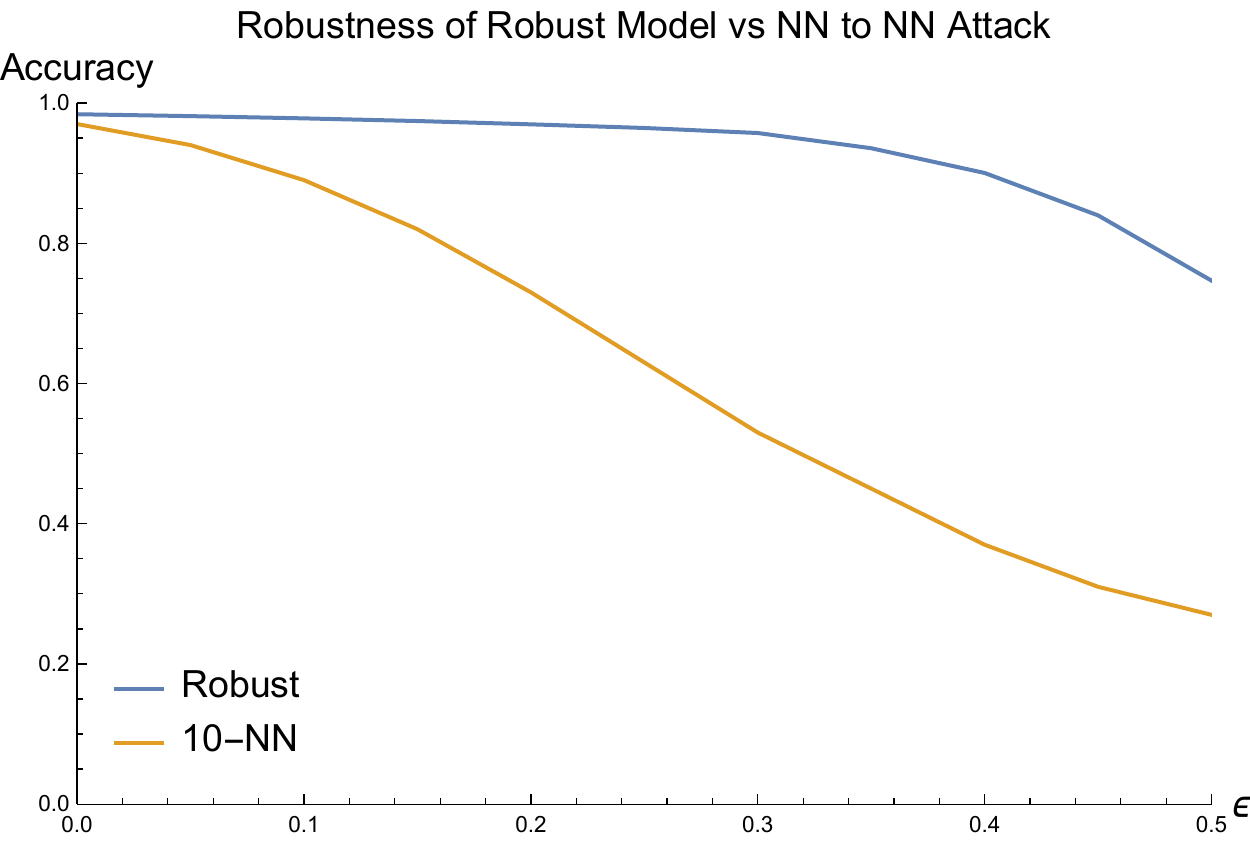}
\end{subfigure}
\caption{Robustness of nearest neighbors on MNIST. \textbf{Left}: Performance on $\l_{\infty}$ BIM attack against a naturally trained model. \textbf{Center}: The same for the adversarially trained convolutional models of \cite{Madry17}. \textbf{Right}: Performance of the robust model and nearest neighbors on examples generated by a custom attack on nearest neighbors.}
\label{fig:nnexpbim}
\end{center}
\end{figure} 

A closer investigation shows strong qualitative differences between the BIM adversarial examples and the examples generated for nearest neighbors. The top row of Figure \ref{fig:mnistexps} shows four samples from the MNIST test set. The second and third rows show adversarial examples generated from those four samples for nearest neighbors and the robust model respectively. We observe an immediate qualitative difference between rows two and three: the adversarial examples for the nearest neighbors classifier begin to look like numbers from the target class! It can reasonably be argued that the fact that the classifications of the robust model do not change is as much of an error as being fooled by a standard adversarial example. For example the rightmost image of row two in Figure \ref{fig:mnistexps} would be classified as an 8 by most people, while the robust model is confident this image is a 0 with confidence 0.91. The confidence value of the robust model should decrease significantly for this image. This provides evidence that nearest neighbors is doing a better job of the learning the \emph{human} decision boundary between numbers.

\begin{figure}[h!]
\centering
\includegraphics[width=.9\textwidth]{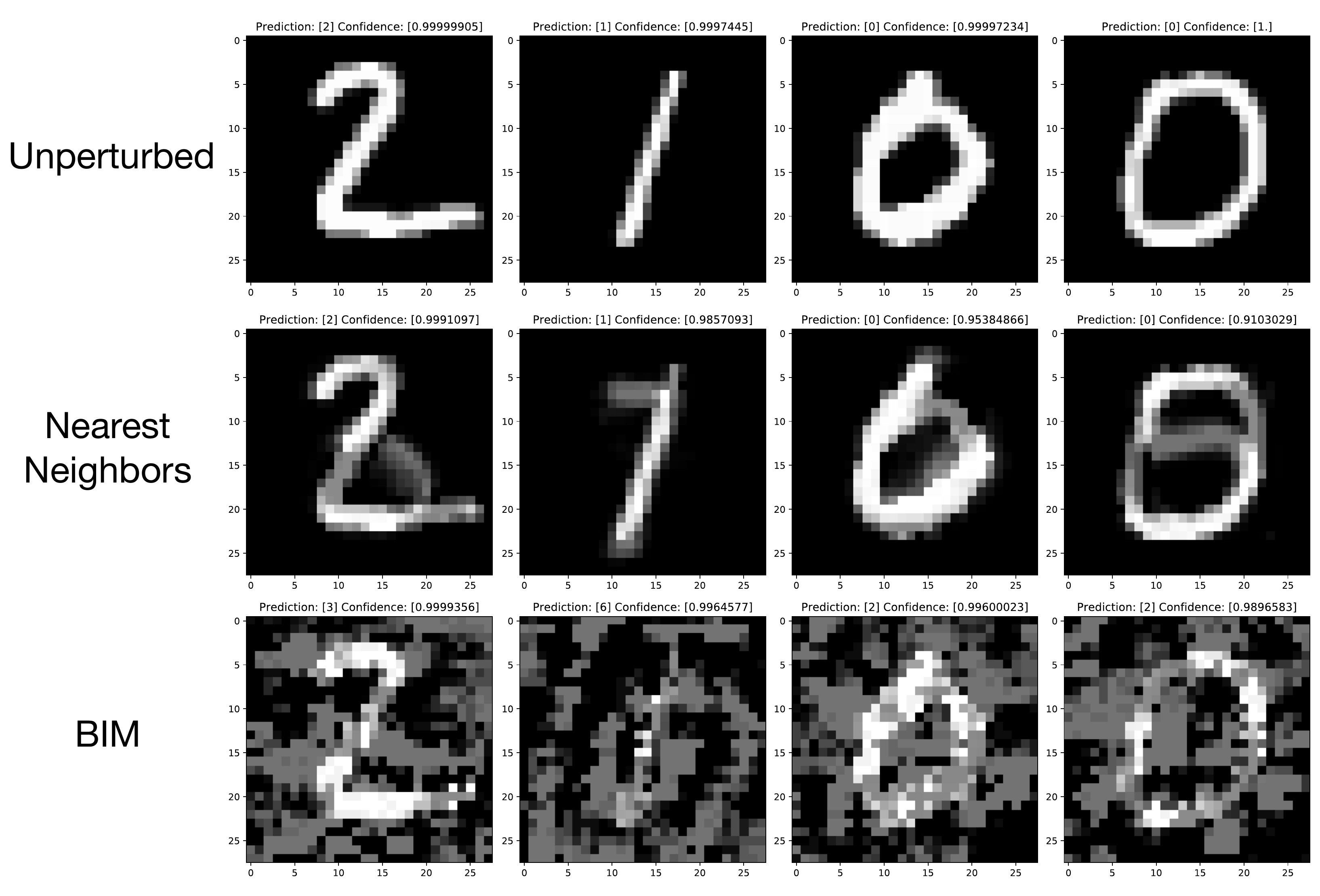}
\caption{Comparison of adversarial examples for nearest neighbor with adversarial examples for \cite{Madry17}. The top row is the original data that the examples were generated from. Each figure is labelled with the predictions from robust neural network. An immediate qualitative difference between adversarial examples for nearest neighbors and those for the robust model is apparent; the nearest neighbor examples are starting to look like numbers from a target class.} 
\label{fig:mnistexps}
\end{figure}

\section{Conclusion}
We have presented a geometric framework for proving robustness guarantees for learning algorithms. Our framework is general and can be used to describe the robustness of any classifier. We have shown that no single model can be simultaneously robust to attacks under all norms, that nearest neighbor classifiers are theoretically more sample efficient than adversarial training, and that robustness requires larger deep ReLU networks. Most importantly, we have highlighted the role of codimension in contributing to adversarial examples and verified our theoretical contributions with experimental results.

We believe that a geometric understanding of the decision boundaries learned by deep networks will lead to both new geometrically inspired attacks and defenses. In Section~\ref{ssec:geomattack} we provided a novel gradient-free geometric attack in support of this claim. Finally we believe future work into the geometric properties of decision boundaries learned by various optimization procedures will provide new techniques for black-box attacks.

\subparagraph*{Acknowledgments}
We thank Horia Mania, Ozan Sener, Sohil Shah, Jonathan Shewchuk, and Tess Smidt for providing valuable comments on an earlier draft of this work. Additionally we thank Tess Smidt for providing the compute resources for this work.

\bibliography{geomae}

\begin{thebibliography}{46}
\providecommand{\natexlab}[1]{#1}
\providecommand{\url}[1]{\texttt{#1}}
\expandafter\ifx\csname urlstyle\endcsname\relax
  \providecommand{\doi}[1]{doi: #1}\else
  \providecommand{\doi}{doi: \begingroup \urlstyle{rm}\Url}\fi

\bibitem[Amenta and Bern(1999)]{Amenta99}
N.~Amenta and M.~W. Bern.
\newblock Surface reconstruction by voronoi filtering.
\newblock \emph{Discrete {\&} Computational Geometry}, 1999.

\bibitem[Amenta et~al.(1998)Amenta, Bern, and Eppstein]{Amenta98}
N.~Amenta, M.~W. Bern, and D.~Eppstein.
\newblock The crust and the beta-skeleton: Combinatorial curve reconstruction.
\newblock \emph{Graphical Models and Image Processing}, 1998.

\bibitem[Amenta et~al.(2002)Amenta, Choi, Dey, and Leekha]{Amenta02}
N.~Amenta, S.~Choi, T.~K. Dey, and N.~Leekha.
\newblock A simple algorithm for homeomorphic surface reconstruction.
\newblock \emph{International Journal of Computational Geometry and
  Applications}, 2002.

\bibitem[Athalye et~al.(2018)Athalye, Carlini, and Wagner]{Athalye18}
A.~Athalye, N.~Carlini, and D.~A. Wagner.
\newblock Obfuscated gradients give a false sense of security: Circumventing
  defenses to adversarial examples.
\newblock In \emph{{ICML}}, 2018.

\bibitem[Blum(1967)]{Blum67}
H.~Blum.
\newblock A transformation for extracting new descriptors of shape.
\newblock \emph{Models for Perception of Speech and Visual Forms}, 1967.

\bibitem[Boissonnat and Ghosh(2014)]{Boissonnat14}
J.~Boissonnat and A.~Ghosh.
\newblock Manifold reconstruction using tangential delaunay complexes.
\newblock \emph{Discrete {\&} Computational Geometry}, 51, 2014.

\bibitem[Brendel et~al.(2018)Brendel, Rauber, and Bethge]{Brendel18}
W.~Brendel, J.~Rauber, and M.~Bethge.
\newblock Decision-based adversarial attacks: Reliable attacks against
  black-box machine learning models.
\newblock In \emph{{ICLR}}, 2018.

\bibitem[Cheng et~al.(2000)Cheng, Dey, Edelsbrunner, Facello, and
  Teng]{Cheng00}
S.~Cheng, T.~K. Dey, H.~Edelsbrunner, M.~A. Facello, and S.~Teng.
\newblock Sliver exudation.
\newblock \emph{Journal of the {ACM}}, 47, 2000.

\bibitem[Cheng et~al.(2005)Cheng, Dey, and Ramos]{Cheng05}
S.~Cheng, T.~K. Dey, and E.~A. Ramos.
\newblock Manifold reconstruction from point samples.
\newblock In \emph{Proceedings of the Symposium on Discrete Algorithms
  ({SODA})}, 2005.

\bibitem[Cheng et~al.(2012)Cheng, Dey, and Shewchuk]{Cheng12}
S.-W. Cheng, T.~K. Dey, and J.~R. Shewchuk.
\newblock \emph{Delaunay {M}esh {G}eneration}.
\newblock CRC Press, Boca Raton, Florida, Dec. 2012.

\bibitem[Codevilla et~al.(2018)Codevilla, M{\"{u}}ller, Dosovitskiy,
  L{\'{o}}pez, and Koltun]{Codevilla18}
F.~Codevilla, M.~M{\"{u}}ller, A.~Dosovitskiy, A.~L{\'{o}}pez, and V.~Koltun.
\newblock End-to-end driving via conditional imitation learning.
\newblock In \emph{{ICRA}}, 2018.

\bibitem[Cortes and Vapnik(1995)]{Cortes95}
C.~Cortes and V.~Vapnik.
\newblock Support-vector networks.
\newblock \emph{Machine Learning}, 20, 1995.

\bibitem[Dey(2007)]{Dey07}
T.~K. Dey.
\newblock \emph{Curve and {S}urface {R}econstruction: {A}lgorithms with
  {M}athematical {A}nalysis}.
\newblock Cambridge University Press, 2007.

\bibitem[Dey and Goswami({2004})]{Dey04}
T.~K. Dey and S.~Goswami.
\newblock Provable surface reconstruction from noisy samples.
\newblock In \emph{Proceedings of the Symposium on Computational Geometry
  (SoCG)}, {2004}.

\bibitem[Dey and Kumar(1999)]{Dey99}
T.~K. Dey and P.~Kumar.
\newblock A simple provable algorithm for curve reconstruction.
\newblock In \emph{Proceedings of the Symposium on Discrete Algorithms (SODA)},
  1999.

\bibitem[Dey et~al.(2005)Dey, Giesen, Ramos, and Sadri]{Dey05}
T.~K. Dey, J.~Giesen, E.~A. Ramos, and B.~Sadri.
\newblock Critical points of the distance to an epsilon-sampling of a surface
  and flow-complex-based surface reconstruction.
\newblock In \emph{Proceedings of the Symposium on Computational Geometry
  ({SoCG})}, 2005.

\bibitem[Edelsbrunner and Shah(1997)]{Edelsbrunner97}
H.~Edelsbrunner and N.~R. Shah.
\newblock Triangulating {T}opological {S}paces.
\newblock \emph{International Journal of Computational Geometry and
  Applications}, Aug. 1997.

\bibitem[Elsayed et~al.(2018)Elsayed, Krishnan, Mobahi, Regan, and
  Bengio]{Elsayed18}
G.~F. Elsayed, D.~Krishnan, H.~Mobahi, K.~Regan, and S.~Bengio.
\newblock Large margin deep networks for classification.
\newblock \emph{CoRR}, abs/1803.05598, 2018.
\newblock URL \url{http://arxiv.org/abs/1803.05598}.

\bibitem[Esteva et~al.(2017)Esteva, Kuprel, Novoa, Ko, Swetter, Blau, and
  Thrun]{Esteva17}
A.~Esteva, B.~Kuprel, R.~A. Novoa, J.~Ko, S.~M. Swetter, H.~M. Blau, and
  S.~Thrun.
\newblock Dermatologist-level classification of skin cancer with deep neural
  networks.
\newblock \emph{Nature}, 2017.

\bibitem[Franceschi et~al.(2018)Franceschi, Fawzi, and Fawzi]{Franceschi18}
J.~Franceschi, A.~Fawzi, and O.~Fawzi.
\newblock Robustness of classifiers to uniform lp and gaussian noise.
\newblock In \emph{{AISTATS}}, 2018.

\bibitem[Gilmer et~al.(2018)Gilmer, Metz, Faghri, Schoenholz, Raghu,
  Wattenberg, and Goodfellow]{Gilmer18}
J.~Gilmer, L.~Metz, F.~Faghri, S.~S. Schoenholz, M.~Raghu, M.~Wattenberg, and
  I.~J. Goodfellow.
\newblock Adversarial spheres.
\newblock \emph{CoRR}, abs/1801.02774, 2018.
\newblock URL \url{http://arxiv.org/abs/1801.02774}.

\bibitem[Goodfellow et~al.(2014)Goodfellow, Shlens, and Szegedy]{Goodfellow14}
I.~J. Goodfellow, J.~Shlens, and C.~Szegedy.
\newblock Explaining and harnessing adversarial examples.
\newblock In \emph{{ICLR}}, 2014.

\bibitem[Khoury and Shewchuk(2016)]{Khoury16}
M.~Khoury and J.~R. Shewchuk.
\newblock Fixed points of the restricted delaunay triangulation operator.
\newblock In \emph{Proceedings of the Symposium on Computational Geometry
  ({SoCG})}, 2016.

\bibitem[Kingma and Ba(2015)]{Kingma15}
D.~Kingma and J.~Ba.
\newblock Adam: A method for stochastic optimization.
\newblock In \emph{{ICLR}}, 2015.

\bibitem[Krizhevsky et~al.(2012)Krizhevsky, Sutskever, and
  Hinton]{Krizhevsky12}
A.~Krizhevsky, I.~Sutskever, and G.~E. Hinton.
\newblock Imagenet classification with deep convolutional neural networks.
\newblock In \emph{{NIPS}}, 2012.

\bibitem[Kurakin et~al.(2016)Kurakin, Goodfellow, and Bengio]{Kurakin16}
A.~Kurakin, I.~Goodfellow, and S.~Bengio.
\newblock Adversarial examples in the physical world.
\newblock In \emph{{ICLR} Workshop Track}, 2016.

\bibitem[Levine et~al.(2015)Levine, Wagener, and Abbeel]{Levine15}
S.~Levine, N.~Wagener, and P.~Abbeel.
\newblock Learning contact-rich manipulation skills with guided policy search.
\newblock In \emph{{ICRA}}, 2015.

\bibitem[Liang et~al.(2017)Liang, Wang, Lei, Liao, and Li]{Liang17}
X.~Liang, X.~Wang, Z.~Lei, S.~Liao, and S.~Z. Li.
\newblock Soft-margin softmax for deep classification.
\newblock In \emph{{ICONIP}}, 2017.

\bibitem[Liu et~al.(2016)Liu, Wen, Yu, and Yang]{Liu16}
W.~Liu, Y.~Wen, Z.~Yu, and M.~Yang.
\newblock Large-margin softmax loss for convolutional neural networks.
\newblock In \emph{{ICML}}, 2016.

\bibitem[Madry et~al.(2018)Madry, Makelov, Schmidt, Tsipras, and
  Vladu]{Madry17}
A.~Madry, A.~Makelov, L.~Schmidt, D.~Tsipras, and A.~Vladu.
\newblock Towards deep learning models resistant to adversarial attacks.
\newblock In \emph{{ICLR}}, 2018.

\bibitem[Papernot et~al.(2017)Papernot, McDaniel, Goodfellow, Jha, Celik, and
  Swami]{Papernot17}
N.~Papernot, P.~McDaniel, I.~Goodfellow, S.~Jha, Z.~B. Celik, and A.~Swami.
\newblock Practical black-box attacks against machine learning.
\newblock In \emph{Proceedings of the Asia Conference on Computer and
  Communications Security}. ACM, 2017.

\bibitem[Papernot et~al.(2018)Papernot, Faghri, Carlini, Goodfellow, Feinman,
  Kurakin, Xie, Sharma, Brown, Roy, Matyasko, Behzadan, Hambardzumyan, Zhang,
  Juang, Li, Sheatsley, Garg, Uesato, Gierke, Dong, Berthelot, Hendricks,
  Rauber, and Long]{Papernot18}
N.~Papernot, F.~Faghri, N.~Carlini, I.~Goodfellow, R.~Feinman, A.~Kurakin,
  C.~Xie, Y.~Sharma, T.~Brown, A.~Roy, A.~Matyasko, V.~Behzadan,
  K.~Hambardzumyan, Z.~Zhang, Y.-L. Juang, Z.~Li, R.~Sheatsley, A.~Garg,
  J.~Uesato, W.~Gierke, Y.~Dong, D.~Berthelot, P.~Hendricks, J.~Rauber, and
  R.~Long.
\newblock Technical report on the cleverhans v2.1.0 adversarial examples
  library.
\newblock \emph{arXiv preprint arXiv:1610.00768}, 2018.

\bibitem[Raghu et~al.(2017)Raghu, Poole, Kleinberg, Ganguli, and
  Sohl{-}Dickstein]{Raghu17}
M.~Raghu, B.~Poole, J.~M. Kleinberg, S.~Ganguli, and J.~Sohl{-}Dickstein.
\newblock On the expressive power of deep neural networks.
\newblock In \emph{{ICML}}, 2017.

\bibitem[Raghunathan et~al.(2018)Raghunathan, Steinhardt, and
  Liang]{Raghunathan18}
A.~Raghunathan, J.~Steinhardt, and P.~Liang.
\newblock Certified defenses against adversarial examples.
\newblock In \emph{{ICLR}}, 2018.

\bibitem[Schmidt et~al.(2018)Schmidt, Santurkar, Tsipras, Talwar, and
  Madry]{Schmidt18}
L.~Schmidt, S.~Santurkar, D.~Tsipras, K.~Talwar, and A.~Madry.
\newblock Adversarially robust generalization requires more data.
\newblock In \emph{{NIPS}}, 2018.

\bibitem[Schott et~al.(2018)Schott, Rauber, Bethge, and Brendel]{Schott18}
L.~Schott, J.~Rauber, M.~Bethge, and W.~Brendel.
\newblock Towards the first adversarially robust neural network model on
  {MNIST}.
\newblock \emph{CoRR}, abs/1805.09190, 2018.
\newblock URL \url{https://arxiv.org/abs/1805.09190}.

\bibitem[Shawe{-}Taylor and Cristianini(2004)]{Taylor04}
J.~Shawe{-}Taylor and N.~Cristianini.
\newblock \emph{Kernel Methods for Pattern Analysis}.
\newblock Cambridge University Press, 2004.

\bibitem[Sinha et~al.({2018})Sinha, Namkoong, and Duchi]{Sinha18}
A.~Sinha, H.~Namkoong, and J.~Duchi.
\newblock Certifying some distributional robustness with principled adversarial
  training.
\newblock In \emph{{ICLR}}, {2018}.

\bibitem[Soudry et~al.(2018)Soudry, Hoffer, and Srebro]{Soudry18}
D.~Soudry, E.~Hoffer, and N.~Srebro.
\newblock The implicit bias of gradient descent on separable data.
\newblock In \emph{{ICLR}}, 2018.

\bibitem[Sun et~al.({2016})Sun, Chen, Wang, Liu, and Liu]{Sun16}
S.~Sun, W.~Chen, L.~Wang, X.~Liu, and T.-Y. Liu.
\newblock On the depth of deep neural networks: A theoretical view.
\newblock In \emph{{AAAI}}, {2016}.

\bibitem[Szegedy et~al.(2013)Szegedy, Zaremba, Sutskever, Bruna, Erhan,
  Goodfellow, and Fergus]{Szegedy13}
C.~Szegedy, W.~Zaremba, I.~Sutskever, J.~Bruna, D.~Erhan, I.~J. Goodfellow, and
  R.~Fergus.
\newblock Intriguing properties of neural networks.
\newblock \emph{CoRR}, abs/1312.6199, 2013.
\newblock URL \url{http://arxiv.org/abs/1312.6199}.

\bibitem[Wang et~al.(2018)Wang, Jha, and Chaudhuri]{Wang18}
Y.~Wang, S.~Jha, and K.~Chaudhuri.
\newblock Analyzing the robustness of nearest neighbors to adversarial
  examples.
\newblock In \emph{{ICML}}, 2018.

\bibitem[Wilson et~al.(2017)Wilson, Roelofs, Stern, Srebro, and
  Recht]{Wilson17}
A.~C. Wilson, R.~Roelofs, M.~Stern, N.~Srebro, and B.~Recht.
\newblock The marginal value of adaptive gradient methods in machine learning.
\newblock In \emph{{NIPS}}, 2017.

\bibitem[Wong and Kolter(2018)]{Wong18a}
E.~Wong and J.~Z. Kolter.
\newblock Provable defenses against adversarial examples via the convex outer
  adversarial polytope.
\newblock In \emph{{ICML}}, 2018.

\bibitem[Wu et~al.(2016)Wu, Schuster, Chen, Le, Norouzi, Macherey, Krikun, Cao,
  Gao, Macherey, Klingner, Shah, Johnson, Liu, Kaiser, Gouws, Kato, Kudo,
  Kazawa, Stevens, Kurian, Patil, Wang, Young, Smith, Riesa, Rudnick, Vinyals,
  Corrado, Hughes, and Dean]{Wu16}
Y.~Wu, M.~Schuster, Z.~Chen, Q.~V. Le, M.~Norouzi, W.~Macherey, M.~Krikun,
  Y.~Cao, Q.~Gao, K.~Macherey, J.~Klingner, A.~Shah, M.~Johnson, X.~Liu,
  L.~Kaiser, S.~Gouws, Y.~Kato, T.~Kudo, H.~Kazawa, K.~Stevens, G.~Kurian,
  N.~Patil, W.~Wang, C.~Young, J.~Smith, J.~Riesa, A.~Rudnick, O.~Vinyals,
  G.~Corrado, M.~Hughes, and J.~Dean.
\newblock Google's neural machine translation system: Bridging the gap between
  human and machine translation.
\newblock \emph{CoRR}, abs/1609.08144, 2016.
\newblock URL \url{http://arxiv.org/abs/1609.08144}.

\bibitem[Zheng et~al.(2018)Zheng, Chen, and Ren]{Zheng18}
T.~Zheng, C.~Chen, and K.~Ren.
\newblock Distributionally adversarial attack.
\newblock \emph{CoRR}, abs/1808.05537, 2018.
\newblock URL \url{http://arxiv.org/abs/1808.05537}.

\end{thebibliography}

\newpage
\appendix

\section{Auxiliary Results}
\label{sec:proofs}

\begin{lemma}
\label{lem:medialaxisseparate}
Let $\mathcal{M}_1, \mathcal{M}_2 \subset \R^d$ be $k$-dimensional manifolds such that $\mathcal{M} \cap \mathcal{M}_2 = \emptyset$. Let $\Lambda_p$ be their decision axis for any $p > 0$ and let $\gamma : [0,1] \rightarrow \R^d$ be any path such that $\gamma(0) \in \mathcal{M}_1$ and $\gamma(1) \in \mathcal{M}_2$. Then $\gamma \cap M \neq \emptyset$, that is $\gamma$ must cross the decision axis.
\end{lemma}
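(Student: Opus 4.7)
The natural approach is an intermediate value theorem argument on the signed difference of distances. Define $f : [0,1] \to \mathbb{R}$ by
\[
f(t) = d_p(\gamma(t), \mathcal{M}_1) - d_p(\gamma(t), \mathcal{M}_2),
\]
where $d_p(x, S) = \inf_{y \in S} \|x - y\|_p$. Since $x \mapsto d_p(x, S)$ is $1$-Lipschitz in the $\|\cdot\|_p$ metric for any set $S$, and $\gamma$ is continuous, $f$ is continuous on $[0,1]$.

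The plan is to check the signs at the endpoints. Because $\mathcal{M}_1 \cap \mathcal{M}_2 = \emptyset$ and (implicitly) each class manifold is closed, we have $d_p(\gamma(0), \mathcal{M}_2) > 0$ while $d_p(\gamma(0), \mathcal{M}_1) = 0$, so $f(0) < 0$; symmetrically $f(1) > 0$. By the intermediate value theorem there exists $t^* \in (0,1)$ with $f(t^*) = 0$, i.e.\
\[
d_p(\gamma(t^*), \mathcal{M}_1) = d_p(\gamma(t^*), \mathcal{M}_2) =: r.
\]
Note $r > 0$, since $\gamma(t^*)$ cannot lie in both disjoint manifolds simultaneously.

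It then remains to translate this equidistance into membership in $\Lambda_p$. Set $c = \gamma(t^*)$ and consider the $\|\cdot\|_p$-ball $B(c,r)$. By definition of $r$, the interior $B(c,r)$ does not meet $\mathcal{M} = \mathcal{M}_1 \cup \mathcal{M}_2$, since any interior point would give a strictly smaller distance to one of the $\mathcal{M}_i$. Assuming the $\mathcal{M}_i$ are closed (so that infima of distance are attained, as in the discussion preceding the lemma statement on reach and nearest points), there exist points $y_1 \in \mathcal{M}_1$ and $y_2 \in \mathcal{M}_2$ with $\|c - y_i\|_p = r$, so $\partial B(c,r)$ meets both $\mathcal{M}_1$ and $\mathcal{M}_2$. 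Hence $c \in \Lambda_p$, and since $c = \gamma(t^*)$, the path $\gamma$ intersects $\Lambda_p$.

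The only delicate point is ensuring the infima defining $d_p(\cdot, \mathcal{M}_i)$ are attained, which requires some form of closedness of the $\mathcal{M}_i$; for the compact class manifolds considered throughout the paper this is immediate, and otherwise one can pass to closures without affecting $d_p$. The IVT step itself is the heart of the argument and presents no real obstacle.
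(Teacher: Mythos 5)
Your proposal is correct and follows essentially the same route as the paper's own proof: define the signed difference of distances to the two manifolds, check the signs at the endpoints, and apply the intermediate value theorem. You additionally spell out the step translating equidistance into membership in $\Lambda_p$ (via attainment of the infima on closed manifolds), which the paper asserts without comment, but this is a refinement of the same argument rather than a different one.
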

\begin{proof}
Define $f_1, f_2: [0,1] \rightarrow \R$ as $f_1(t) = d(\gamma(t), \mathcal{M}_1)$ and $f_2(t) = d(\gamma(t), \mathcal{M}_2)$. Consider the function $g(t) = f_1(t) - f_2(t)$. Since $\mathcal{M}_1 \cap \mathcal{M}_2 = \emptyset$ and $\gamma$ starts on $\mathcal{M}_1$ and terminates on $\mathcal{M}_2$ the function $g(0) < 0$ and $g(1) > 0$. Then, since $g$ is continuous, the Intermediate Value Theorem implies that there exists $t_1 \in [0,1]$ such that $g(t_1) = 0$. Thus $d(\gamma(t_1), \mathcal{M}_1) = d(\gamma(t_1), \mathcal{M}_2)$, which implies that $\gamma(t_1)$ is on the decision axis $\Lambda$. 
\end{proof}

\begin{theorem}
\label{thm:accupperbound}
Let $f$ be any classifier on $\mathcal{M} = \mathcal{M}_{1} \cup \mathcal{M}_{2}$. The maximum accuracy achievable, assuming a uniform distribution, on $\mathcal{M}^{\epsilon}$ is 
\begin{equation}
1 - \frac{1}{2}\frac{\vol(\mathcal{M}_1^{_\epsilon} \cap \mathcal{M}_2^{\epsilon})}{\vol(\mathcal{M}_1^{_\epsilon} \cup \mathcal{M}_2^{\epsilon})}.
\end{equation}
\end{theorem}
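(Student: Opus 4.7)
The plan is to partition the tubular neighborhood $\mathcal{M}^\epsilon = \mathcal{M}_1^\epsilon \cup \mathcal{M}_2^\epsilon$ into the three pairwise disjoint pieces
$A_1 = \mathcal{M}_1^\epsilon \setminus \mathcal{M}_2^\epsilon$, $A_2 = \mathcal{M}_2^\epsilon \setminus \mathcal{M}_1^\epsilon$, and $I = \mathcal{M}_1^\epsilon \cap \mathcal{M}_2^\epsilon$, analyze the best achievable per-piece accuracy, and then add the contributions weighted by volume.

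First I would establish the per-piece ground truth. A point $x \in A_1$ lies within $\epsilon$ of $\mathcal{M}_1$ but not $\mathcal{M}_2$, so the only label consistent with the $\epsilon$-robustness definition of Section~\ref{sec:geom} is $1$; symmetrically on $A_2$. A point $x \in I$, however, is within $\epsilon$ of both class manifolds, so both labels are legitimate. Treating the label assignment on $I$ as symmetric between the two classes (equivalently, a fair coin flip, which is the natural uniform convention on the intersection), any deterministic classifier $f$ is correct with probability exactly $\tfrac{1}{2}$ on $I$ irrespective of its output, while it can be correct with probability $1$ on $A_1$ and on $A_2$ by returning the unambiguous label.

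Next I would assemble the accuracy. With the query point drawn uniformly from $\mathcal{M}^\epsilon$,
\begin{equation*}
\mathrm{acc}(f) \;\le\; \frac{\vol(A_1) + \vol(A_2) + \tfrac{1}{2}\vol(I)}{\vol(\mathcal{M}^\epsilon)}.
\end{equation*}
Since $A_1$, $A_2$, $I$ partition $\mathcal{M}^\epsilon$, we have $\vol(A_1) + \vol(A_2) + \vol(I) = \vol(\mathcal{M}_1^\epsilon \cup \mathcal{M}_2^\epsilon)$, and the bound collapses to
\begin{equation*}
\mathrm{acc}(f) \;\le\; 1 - \frac{1}{2}\frac{\vol(\mathcal{M}_1^\epsilon \cap \mathcal{M}_2^\epsilon)}{\vol(\mathcal{M}_1^\epsilon \cup \mathcal{M}_2^\epsilon)}.
\end{equation*}
Equality is attained by any classifier that outputs label $i$ on $A_i$; the behavior on $I$ is irrelevant to the maximum. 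To finish, I would observe that measurability of $A_1$, $A_2$, and $I$ follows from the fact that $x \mapsto \inf_{y \in \mathcal{M}_i} \|x-y\|_p$ is continuous, so each tube is closed and the volumes are well defined.

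The main obstacle, and the only delicate point, is that the stated bound implicitly fixes a labeling convention on the intersection $I$. Without specifying how the ground-truth label is assigned to points belonging to both $\mathcal{M}_1^\epsilon$ and $\mathcal{M}_2^\epsilon$, the coefficient $\tfrac{1}{2}$ cannot be derived; the right framing is either a uniformly random label on $I$ or, equivalently, accuracy averaged over the two extremal deterministic labelings. I would therefore make the symmetric label assumption explicit at the start of the proof, after which every remaining step is routine volume bookkeeping.
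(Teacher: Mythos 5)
Your proof follows essentially the same route as the paper's: partition $\mathcal{M}^{\epsilon}$ into the two set differences and the intersection, observe that a classifier can be perfect off the intersection but correct only with probability $\tfrac{1}{2}$ on it, and sum the volume-weighted contributions. Your write-up is somewhat more careful than the paper's in making explicit the symmetric labeling convention on $\mathcal{M}_1^{\epsilon} \cap \mathcal{M}_2^{\epsilon}$ (which the paper leaves implicit) and the measurability of the pieces, but the underlying argument is identical.
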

\begin{proof}
It is clearly optimal to classify points in $\vol(\mathcal{M}_1^{_\epsilon} \setminus \mathcal{M}_2^{\epsilon})$ as class $1$ and to classify points in $\vol(\mathcal{M}_2^{_\epsilon} \setminus \mathcal{M}_1^{\epsilon})$ as class $2$. Such a classifier can only be wrong when points lie in this intersection. For points in this intersection, the probability of a misclassification is $\frac{1}{2}$ for any classification that $f$ makes. Thus, the probability of misclassification is $$\frac{1}{2}\frac{\vol(\mathcal{M}_1^{_\epsilon} \cap \mathcal{M}_2^{\epsilon})}{\vol(\mathcal{M}_1^{_\epsilon} \cup \mathcal{M}_2^{\epsilon})}.$$
\end{proof}

\begin{cor}
\label{cor:medialaxisdb}
For $\epsilon < \rch_p{(\Lambda_p; \mathcal{M})}$ there exists a decision boundary that correctly classifies $\mathcal{M}^{\epsilon}$. 
\end{cor}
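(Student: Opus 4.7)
The plan is to exhibit the decision axis $\Lambda_p$ itself as a decision boundary that works. Concretely, define the classifier $f(x) = \argmin_{i} \inf_{v \in \mathcal{M}_i}\|x - v\|_{p}$, i.e.\ assign $x$ the label of its nearest class manifold. The decision boundary of $f$ is contained in $\Lambda_p$, since ties in the $\argmin$ occur precisely at points equidistant (under $\|\cdot\|_p$) from two or more distinct class manifolds. The verification then splits into two parts: (i) $f$ is unambiguous on $\mathcal{M}^{\epsilon}$, and (ii) the label it produces for any $x \in \mathcal{M}_i^{\epsilon}$ is indeed $i$.

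For (i), the hypothesis $\epsilon < \rch_p(\Lambda_p; \mathcal{M}) = \inf_{u \in \Lambda_p,\, v \in \mathcal{M}} \|u - v\|_p$ immediately yields $\Lambda_p \cap \mathcal{M}^{\epsilon} = \emptyset$, so every point of $\mathcal{M}^{\epsilon}$ has a strictly unique nearest class manifold and $f$ is well-defined there. For (ii), I would argue by contradiction: suppose $x \in \mathcal{M}_i^{\epsilon}$ but $\inf_{v\in\mathcal{M}_j}\|x-v\|_p \leq \inf_{v\in\mathcal{M}_i}\|x-v\|_p \leq \epsilon$ for some $j \neq i$. Choose $a \in \mathcal{M}_i$ and $b \in \mathcal{M}_j$ realizing these distances (using closedness of the class manifolds, or passing to a limit of approximate minimizers if needed), and form the piecewise-linear path $\gamma$ consisting of the segments $a \to x$ and $x \to b$. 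Every point on the first segment lies within $\epsilon$ of $a \in \mathcal{M}_i \subset \mathcal{M}$, and every point on the second lies within $\epsilon$ of $b \in \mathcal{M}_j \subset \mathcal{M}$, so $\gamma \subset \mathcal{M}^{\epsilon}$. By Lemma~\ref{lem:medialaxisseparate}, applied to $\mathcal{M}_i$ and $\mathcal{M}_j$ (which must be disjoint, since if they met then points of $\Lambda_p$ would accumulate at the intersection, forcing $\rch_p(\Lambda_p;\mathcal{M}) = 0$ and contradicting the hypothesis), the path $\gamma$ meets $\Lambda_p$ at some point $y$. But then $y \in \Lambda_p \cap \mathcal{M}^{\epsilon}$, contradicting (i).

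The main obstacle is really just bookkeeping: handling strict versus non-strict inequalities carefully, deriving disjointness of the class manifolds from the reach hypothesis, and justifying the existence of closest points on each $\mathcal{M}_i$ (or side-stepping it via sequences of approximate minimizers whose limit still lies in the closed set $\mathcal{M}$). The genuinely geometric content is already packaged inside Lemma~\ref{lem:medialaxisseparate}, which turns the decision axis from a definition into a bona fide separator; the corollary is then the short observation that the ``forbidden zone'' $\mathcal{M}^{\epsilon}$ sits entirely inside the complement of $\Lambda_p$ and is partitioned cleanly by nearest-class-manifold regions.
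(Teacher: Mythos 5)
Your proposal is correct and follows essentially the same route as the paper: the paper's proof is the one-line observation that $\epsilon < \rch_p{(\Lambda_p; \mathcal{M})}$ forces $\mathcal{M}^{\epsilon} \cap \Lambda_p = \emptyset$, so $\Lambda_p$ itself serves as the decision boundary. Your write-up simply supplies the verification the paper leaves implicit (well-definedness of the nearest-class-manifold classifier off $\Lambda_p$, and the path-crossing argument via Lemma~\ref{lem:medialaxisseparate} showing the assigned label is the right one), with only the routine caveat that in the multi-class setting one should apply the intermediate-value argument to $d(\cdot,\mathcal{M}_i) - \min_{l\neq i} d(\cdot,\mathcal{M}_l)$ so that the crossing point genuinely lies on $\Lambda_p$ of the full collection.
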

\begin{proof}
For $\epsilon < \rch_p{\Lambda_p}$, $\mathcal{M}^{\epsilon} \cap \Lambda_p = \emptyset$ and so $\Lambda_p$ is one such decision boundary.
\end{proof}

\section{Additional Theoretical Results}
\label{sec:addtheory}
A finite sample $X$ of $\mathcal{M}$ is said to exhibit Hausdorff noise up to $\tau$ if $X \subset \mathcal{M}^{\tau}$. That is every sample lies in a $\tau$-tubular neighborhood of $\mathcal{M}$, not necessarily on $\mathcal{M}$. We can show a similar result to Theorem~\ref{thm:sampling} for $f_{\nn}$ under moderate amounts of Hausdorff noise.

\begin{theorem}
\label{thm:nnclassifiernoise}
Let $X$ be a finite set sampled from $\mathcal{M}$ such that $X \subset \mathcal{M}^{\tau}$ for some $\tau < \rch_p{\Lambda_p}$; that is $X$ lies near $\mathcal{M}$, in a $\tau$-tubular neighborhood. If $X$ is a $\delta$-cover with $\delta \leq 2(\rch_p{\Lambda_p} - \epsilon) - \tau$, then $f_{\nn}$ correctly classifies $\mathcal{M}^{\epsilon}$.  
\end{theorem}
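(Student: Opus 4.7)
The plan is to mimic the argument for part (1) of Theorem~\ref{thm:sampling}, accounting for the Hausdorff noise by inserting one extra triangle inequality at each end. Let $q \in \mathcal{M}^{\epsilon}$ and suppose without loss of generality that $q \in \mathcal{M}_i^{\epsilon}$; I want to show that the nearest sample in $X$ to $q$ is labeled $i$, which certifies $f_{\nn}(q) = i$.

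First I would lower-bound the distance from $q$ to any sample of an incorrect class. The argument of Theorem~\ref{thm:sampling} already gives $d(q,\mathcal{M}_j) \geq 2\rch_p{\Lambda_p} - \epsilon$ for every $j \neq i$: any path from $\mathcal{M}_i$ to $\mathcal{M}_j$ must cross $\Lambda_p$ (Lemma~\ref{lem:medialaxisseparate}), and every point of $\Lambda_p$ is at distance at least $\rch_p{\Lambda_p}$ from each class manifold. Since any sample $X_k$ with label $j$ lies in $\mathcal{M}_j^{\tau}$, there is some $y \in \mathcal{M}_j$ with $\|X_k - y\|_p \leq \tau$, so by the reverse triangle inequality
\begin{equation*}
\|q - X_k\|_p \;\geq\; d(q, \mathcal{M}_j) - \tau \;\geq\; 2\rch_p{\Lambda_p} - \epsilon - \tau.
\end{equation*}

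Next I would upper-bound the distance from $q$ to some sample of the correct class. Let $x^{*} \in \mathcal{M}_i$ be a nearest point on $\mathcal{M}_i$ to $q$, so $\|q - x^{*}\|_p \leq \epsilon$. Because $X$ is a $\delta$-cover, there is a sample $X_k$ with $\|x^{*} - X_k\|_p \leq \delta$; interpreting the cover class-wise (as in Theorem~\ref{thm:sampling}) this sample can be taken to carry label $i$. Then
\begin{equation*}
\|q - X_k\|_p \;\leq\; \|q - x^{*}\|_p + \|x^{*} - X_k\|_p \;\leq\; \epsilon + \delta.
\end{equation*}

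Finally I would combine the two estimates. For the nearest neighbor classifier to output $i$ at $q$, it suffices that the nearest same-class sample is closer than every wrong-class sample, i.e.\ $\epsilon + \delta \leq 2\rch_p{\Lambda_p} - \epsilon - \tau$, which rearranges to $\delta \leq 2(\rch_p{\Lambda_p} - \epsilon) - \tau$. This is exactly the hypothesis, and the condition $\tau < \rch_p{\Lambda_p}$ ensures the right-hand side is strictly smaller than the noise-free bound of Theorem~\ref{thm:sampling} but still positive when $\epsilon < \rch_p{\Lambda_p} - \tau/2$. The main delicate point is the proper bookkeeping of the two separate triangle inequalities (one on the correct-class side to absorb $\delta$ and the $\epsilon$ slack, one on the wrong-class side to absorb $\tau$); the underlying geometric fact, $d(q,\mathcal{M}_j) \geq 2\rch_p{\Lambda_p} - \epsilon$, is inherited unchanged from Theorem~\ref{thm:sampling}.
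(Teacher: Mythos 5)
Your proposal is correct and follows essentially the same argument as the paper: lower-bound the distance to wrong-class samples by $2\rch_p{\Lambda_p} - \epsilon - \tau$, upper-bound the distance to a correct-class sample by $\epsilon + \delta$ via the cover condition, and compare. The only difference is that you spell out the triangle inequalities and the appeal to Lemma~\ref{lem:medialaxisseparate} that the paper leaves implicit.
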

\begin{proof}
Let $q \in \mathcal{M}_{i}^{\epsilon}$. The distance from $q$ to any sampled in $\mathcal{M}_{j}^{\epsilon}$ for $j \neq i$ is lower bounded as $d(q, \mathcal{M}_{j}^{\tau}) \geq 2\rch_p{\Lambda_p} - \epsilon - \tau$. It is then both necessary and sufficient that there exists a sample $x \in \mathcal{M}_{i}^{\tau}$ such that $d(q, x) \leq 2\rch_p{\Lambda_p} - \epsilon -\tau$. The distance from $q$ to the nearest sample in $\mathcal{M}_{i}^{\tau}$ is upper bounded by the $\delta$-cover condition as $d(q, x) \leq \epsilon + \delta$. It suffices that 
\begin{equation*}
d(q, x) \leq \epsilon + \delta \leq 2\rch_p{\Lambda_p} - \epsilon - \tau \leq d(q, \mathcal{M}_{j}^{\tau}),
\end{equation*}
which implies that $\delta \leq 2(\rch_p{\Lambda_p} - \epsilon) - \tau$. 
\end{proof}

\begin{theorem}
\label{thm:medialaxisproximity}
Let $z \in \mathcal{D}_{f_{\nn}}$ be a point on the decision boundary of $f_{\nn}$ for a $\delta$-cover $X$ with $\delta < 1$. Let $\sigma \subset \mathcal{D}_{f_{\nn}}$ be a linear facet of $\mathcal{D}_{f_{\nn}}$ and note that $\sigma$ is a Voronoi facet, let $\sigma^{*} = pq$ be the dual Delaunay edge of $\sigma$ such that $p \in \mathcal{M}_{1}$ and $q\in \mathcal{M}_{2}$. Define $d(z, \mathcal{M}_1) = \omega_1 \rch_2{\Lambda_2}$ and $d(z, \mathcal{M}_{2}) = \omega_2 \rch_2{\Lambda_2}$, with $\omega_1 < \omega_2 < 1$. Then there exists a decision axis point $m \in \Lambda$ such that $d(z, m) \leq \frac{\delta^2 + (\omega_2^2 - \omega_1^2) + 2\delta \omega_2}{1 + (\omega^2 - \omega_1^2)} \omega_2 \rch_2{\Lambda_2}$.
\end{theorem}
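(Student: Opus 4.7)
The plan is to construct $m$ explicitly by starting at $z$ and sliding along a short straight segment toward the $\mathcal{M}_2$-side until the distances to the two class manifolds equalize; an intermediate value argument then produces the desired $m \in \Lambda$, and an explicit algebraic computation yields the bound.

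I would first extract the elementary consequences of $z$ lying on the Voronoi facet $\sigma$. Write $r := d(z,p) = d(z,q)$ and $R := \rch_2{\Lambda_2}$. Since $p \in \mathcal{M}_1$ and $q \in \mathcal{M}_2$, $r \geq \omega_2 R$. Applying the $\delta$-cover hypothesis to the point of $\mathcal{M}_1$ realizing $d(z,\mathcal{M}_1)$ produces a sample within distance $\omega_1 R + \delta$ of $z$, so minimality of the Voronoi cell of $p$ gives $r \leq \omega_1 R + \delta$. Hence $\omega_2 R \leq r \leq \omega_1 R + \delta$, which in particular forces $(\omega_2 - \omega_1) R \leq \delta$; these inequalities will drive every estimate that follows.

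To exhibit $m$ I consider the segment $\gamma(t) = z + tu$ for $t \in [0,r]$ with $u = (q - z)/r$ the unit direction toward the sample $q$, and I study $F(t) := d(\gamma(t),\mathcal{M}_1) - d(\gamma(t),\mathcal{M}_2)$. Then $F(0) = (\omega_1 - \omega_2)R < 0$, while $\gamma(r) = q \in \mathcal{M}_2$ gives $F(r) = d(q,\mathcal{M}_1) > 0$ because the class manifolds are disjoint. By continuity, there is some $t^* \in (0,r)$ with $F(t^*) = 0$; setting $m := \gamma(t^*)$, the reasoning of Lemma~\ref{lem:medialaxisseparate} (equidistance of $m$ to $\mathcal{M}_1$ and $\mathcal{M}_2$) places $m$ on $\Lambda$. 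The goal is now to upper bound $t^* = d(z,m)$.

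For the quantitative step I work in the plane through $z, p, q$, placing $z$ at the origin so that $\|p\| = \|q\| = r$. A direct expansion gives $\|\gamma(t) - q\|^2 = (r - t)^2$ and $\|\gamma(t) - p\|^2 = r^2 - 2t\langle p, q\rangle/r + t^2$, where $\langle p, q\rangle = r^2 - \tfrac{1}{2}\|p - q\|^2$ is fixed by the Voronoi geometry. I would then trade $p, q$ for the manifold nearest points $p^*, q^*$ using the $\delta$-cover inequalities $\|p - p^*\| \leq \delta$ and $\|q - q^*\| \leq \delta$, substituting $\|p^*\| = \omega_1 R$ and $\|q^*\| = \omega_2 R$, and equating the resulting squared upper and lower bounds for $d(\gamma(t^*),\mathcal{M}_1) = d(\gamma(t^*),\mathcal{M}_2)$. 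Saturating the bound on $r$ at $r = \omega_1 R + \delta$ and expanding $(\omega_1 R + \delta)^2 - (\omega_2 R)^2$ produces, after squaring, a linear equation in $t^*$ whose numerator collects $\delta^2 + (\omega_2^2 - \omega_1^2) + 2\delta\omega_2$ from the square of the offset plus mixed terms, and whose denominator collects $1 + (\omega_2^2 - \omega_1^2)$ from the $t^2$ and cross-term contributions; the overall scaling by $\omega_2 R$ enters through the choice of direction $u$.

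The main obstacle is the bookkeeping in the last step. Two delicate points must be handled together: certifying that $p^*$ and $q^*$ remain the true nearest manifold points of $\gamma(t^*)$, which uses $\omega_2 < 1$ to keep $\gamma(t^*)$ inside the reach-$R$ tube where normal coordinates to $\mathcal{M}$ are single-valued, and pessimizing the hidden angle $\angle(p^*, z, q^*)$ so that the denominator actually collects exactly $1 + (\omega_2^2 - \omega_1^2)$ rather than a strictly smaller but angle-dependent quantity. Proving that this worst-case angle is the one realized by the bound (and is compatible with the Voronoi constraint $\langle p, q\rangle = r^2 - \tfrac{1}{2}\|p - q\|^2$) is what makes the calculation non-trivial.
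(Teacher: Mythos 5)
Your high-level plan --- slide from $z$ toward the $\mathcal{M}_2$ side, invoke the intermediate value theorem on $F(t)=d(\gamma(t),\mathcal{M}_1)-d(\gamma(t),\mathcal{M}_2)$ to land on $\Lambda$, then do law-of-cosines bookkeeping with the $\delta$-cover --- has the same skeleton as the paper's proof, and your preliminary inequalities $\omega_2\rch_2{\Lambda_2}\le d(z,p)\le \omega_1\rch_2{\Lambda_2}+\delta$ are correct. But two of your choices break the quantitative step. First, you aim the segment at the \emph{sample} $q$, whereas the paper walks toward $z_2$, the point of $\mathcal{M}_2$ realizing $d(z,\mathcal{M}_2)=\omega_2\rch_2{\Lambda_2}$. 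The prefactor $\omega_2\rch_2{\Lambda_2}$ in the stated bound is literally the length of the segment $zz_2$ (the crossing parameter $t^*\in[0,1]$ multiplies it); it does not ``enter through the choice of direction $u$'' when $u$ points at $q$, since $\|z-q\|_2$ is only bounded by $\omega_1\rch_2{\Lambda_2}+\delta$ and the segment $zq$ meets $\mathcal{M}_2$ at a different point and angle, so your bookkeeping would not collect the stated constants.

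Second, and more seriously, your mechanism for lower-bounding $d(\gamma(t),\mathcal{M}_1)$ is flawed: you propose that $\omega_2<1$ keeps $\gamma(t^*)$ ``inside the reach-$R$ tube where normal coordinates to $\mathcal{M}$ are single-valued,'' but $R=\rch_2{\Lambda_2}$ is the reach of the \emph{decision axis}, which controls separation between the two class manifolds and says nothing about self-approach of $\mathcal{M}_1$; uniqueness of nearest points on $\mathcal{M}_1$ is governed by the medial axis $\Xi$ of $\mathcal{M}_1$ (a different object in this paper, cf.\ Theorem~\ref{thm:manifoldvolumelowerbound}), on which no hypothesis is made here. Without ruling out that $\mathcal{M}_1$ curves back toward $\gamma(t)$, the available $1$-Lipschitz estimates only give $F(t)\ge-\delta$ and never certify the sign change at the claimed $t^*$. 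The paper closes this gap differently: it uses the $\delta$-cover together with the emptiness of the Delaunay circumball $B(z,d(z,p))$ to confine $\mathcal{M}_1\cap B(z,\omega_2\rch_2{\Lambda_2})$ to a ball $B(z_1,g)$ with $g\le\delta$ around the nearest point $z_1$, and then tracks the first $t$ at which the ball $B(\gamma(t),\|\gamma(t)-z_2\|_2)$, kept tangent to $\mathcal{M}_2$ at $z_2$, clears that small ball. You would need to import that localization (or an equivalent) before your algebra can produce the stated numerator $\delta^2+(\omega_2^2-\omega_1^2)+2\delta\omega_2$ and denominator $1+(\omega_2^2-\omega_1^2)$.
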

\begin{proof}
If $z \in \Lambda_2$ then the result holds, so suppose that $z \not \in \Lambda_2$.

The decision boundary $\mathcal{D}_{f_{\nn}}$ is the union of a subset of $(d-1)$-dimensional Voronoi cells (along with their lower dimensional faces) of the Voronoi diagram $\Vor{X}$ of $X$ with the following property. For every Voronoi $(d-1)$-cell $\sigma \in \mathcal{D}_{f_{\nn}}$, its dual Delaunay edge $\sigma^{*} = pq$ has endpoints $p, q \in X$ such that $p \in \mathcal{M}_1$ and $q \in \mathcal{M}_{2}$. That is, $p$ and $q$ have different class labels. In particular $pq$ crosses $\Lambda_2$. For every point $z \in \sigma$, $d(z, p) = d(z, q) \leq \min_{i} d(z, X_i)$; that is, $p, q$ minimize the distance from $z$ to any sample point in $X$. In the interior of $\sigma$ this inequality is strict, while on the boundary of $\sigma$ it may be realized by more points than just $p$ and $q$. (See Appendix~\ref{sec:vordel} for a brief review of Voronoi diagrams and Delaunay triangulations.)

Let $\sigma \in \mathcal{D}_{f_{\nn}}$ be a Voronoi $(d-1)$-cell that contains $z$ and let $\sigma^{*} = pq$ be $\sigma$'s dual Delaunay edge. Imagine growing a ball $B$ centered at $z$ by increasing the radius $r$ starting from $0$. Due to the properties of Voronoi cells outlined above, the fact that $z \not \in \Lambda_2$, and the fact that $X \subset \mathcal{M}$, the following three events occur in order as we increase $r$. First $B$ intersects the manifold to which $z$ is closest, without loss of generality $\mathcal{M}_1$. Second $B$ intersects $\mathcal{M}_{2}$. Notice that at this point $B$ has not intersected any sample points in $X$, since $p$ and $q$ are on $\mathcal{M}_{1}$ and $\mathcal{M}_{2}$ respectively and are the closest samples to $z$. Third $B$ intersects $p$ and $q$, when $r = d(z, p)$. Let $r_1, r_2, r_3$ denote the value of the radius at these three event points respectively. Similarly let $B_1, B_2, B_3$ denote the balls centered at $z$ with radii $r_1, r_2, r_3$ respectively. Let $z_1 \in B_1 \cap \mathcal{M}_1$ and let $z_2 \in \mathcal{M}_2 \cap B_{2}$. Since $\mathcal{M}_1$ is the closer of the two manifolds to $z$, the line segment $zz_2$ must intersect $\Lambda_2$. Let $\gamma: [0,1] \rightarrow \R^d$ parameterize the line segment $zz_2$, where $\gamma(0) = z$, $\gamma(1) = z_2$, and $\|z - \gamma(t) \|_{2} = r_2 t$. We will show that there exists a decision axis point $m \in \gamma$ that is close to $z$. 

The ball $B_2$ is tangent to $\mathcal{M}_{2}$ at $z_2$ but contains some portion of $\mathcal{M}_1$. Our approach will be to move the center of $B_2$ along $\gamma$ from $z$ to $z_2$ while maintaining tangency at $z_2$. That is we consider the balls $B_t = B(\gamma(t), \|\gamma(t) - z_2\|_{2})$ as $t$ increase from $0$ to $1$. For some $t^*$, $B_{t^*} \cap \mathcal{M}_{1} = \emptyset$ which means that we have crossed the decision axis. We will prove that $t^{*}$ must be small which implies that $\|z - m\|_{2} \leq \|z - \gamma(t^{*})\|_{2} \leq r_2 t^{*}$ is small.

We begin by considering the triangle $\triangle z_1zz_2$. Using the law of cosines we derive an expression for the angle $\angle{z_1zz_2}$ as 

\begin{align*}
\|z_1 - z_2\|_{2}^2 &= r_1^2 + r_2^2 - 2r_1r_2 \cos{\angle{z_1zz_2}}\\ 
\cos{\angle{z_1zz_2}} &= \frac{r_1^2 + r_2^2 - \|z_1 - z_2\|_{2}^2}{2r_1 r_2}.
\end{align*}

As $t$ increases the event $B_{t} \cap \mathcal{M}_{1} = \emptyset$ occurs when the distances from $\gamma(t)$ to any point $x \in B_2 \cap \mathcal{M}_{1}$ is greater than $r_2(1-t)$. Due to the $\delta$-cover condition at $z_1$ and the fact that $B_2 \subset B_3$ where $B_3$ is the event where a ball centered at $z$ intersects a sample point, every such $x$ must lie in a ball $B(z_1, g)$ for $g \leq \delta$. Thus the event $B_t \cap \mathcal{M}_1 = \emptyset$ occurs for the minimum $t$ such that 

\begin{align*}
\|z_1 - \gamma(t)\|_{2} - g &\geq r_2(1-t)\\
 \|z_1 - \gamma(t)\|_{2}^2 &\geq r_2^2(1-t)^2 + 2g r_2(1-t) + g^2.
\end{align*}

First we derive an expression for $\|z_1 - \gamma(t)\|_{2}$ again using the law of cosines and substituting the expression for $\angle{z_1zz_2}$.

\begin{align*}
\|z_1 - \gamma(t)\|_{2}^2 &= r_1^2 + r_2^2 t^2 - 2r_1r_2 t \cos{\angle{z_1zz_2}}\\
                          &= r_1^2 + r_2^2 t^2 - t (r_1^2 + r_2^2 - \|z_1 - z_2\|_{2}^2)\\
                          &= (1 - t)r_1^2 + (t-1)t r_2^2 + t \|z_1 - z_2\|_{2}^2.
\end{align*}

So then $\|z_1 - \gamma(t)\|_{2}^2 \geq r_2^2(1-t)^2 + 2g r_2(1-t) + g^2$ holds if and only if

\begin{align*}
(1 - t)r_2^2 + (t-1)t r_2^2 + t \|z_1 - z_2\|_{2}^2 &\geq r_2^2(1-t)^2 + 2g r_2(1-t) + g^2\\
t &\leq \frac{g^2 - r_1^2 + 2gr_2 + r_2^2}{\|z_1 - z_2\|^2 - r_1^2 + 2gr_2 + r_2^2}\\
  &\leq \frac{g^2 - r_1^2 + 2gr_2 + r_2^2}{\|z_1 - z_2\|^2 - r_1^2 + r_2^2}\\
  &\leq \frac{\delta^2 + (\omega_2^2 - \omega_1^2) + 2\delta \omega_2}{1 + (\omega^2 - \omega_1^2)}
\end{align*}
\end{proof}

\section{Additional Experiments}
We present additional experiments to support our theoretical predictions. We reproduce the results of Section~\ref{sec:exp} using different optimization algorithms (Section~\ref{ssec:sgdexps}) and attack methods (Section~\ref{ssec:fgsmexps}). These additional experiments are consistent with our conclusions in Section \ref{sec:exp}. 

\subsection{Reproducing Results using SGD}
\label{ssec:sgdexps}
In Section~\ref{ssec:codim} we showed that increasing the codimension reduces the robustness of the decision boundaries learned by Adam on {\sc Circles}. In Figure~\ref{fig:codimexpsgd} we reproduce this result using SGD. Again we see that as we increase the codimension the robustness decreases. SGD presents with much less variances than Adam, which we attribute to implicit regularization that has been observed for SGD (\cite{Soudry18})

\begin{figure}[h!]
\begin{center}
\begin{subfigure}{0.49\textwidth}
\includegraphics[width=0.98\linewidth]{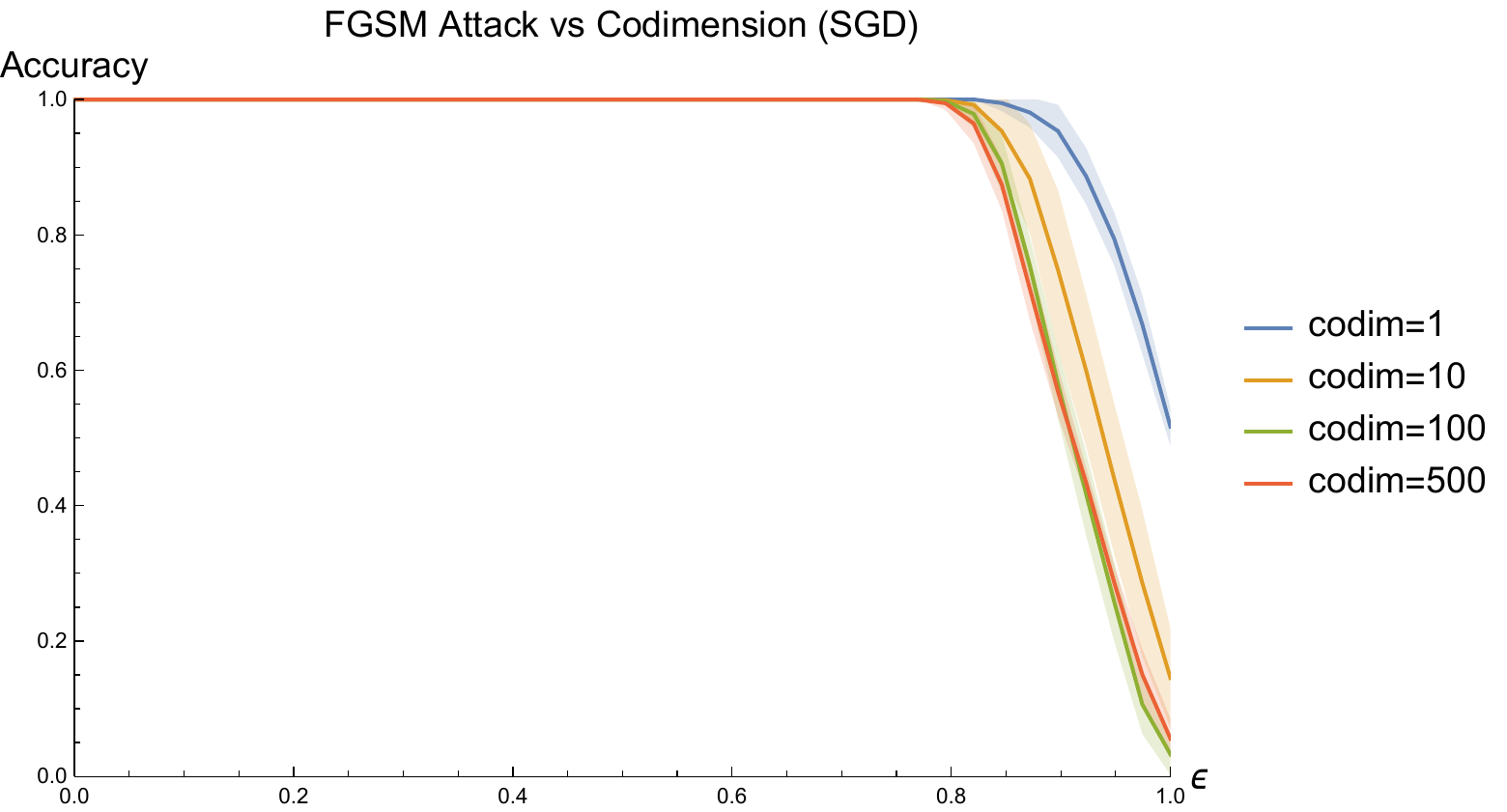}
\end{subfigure}
\begin{subfigure}{0.49\textwidth}
\includegraphics[width=0.98\linewidth]{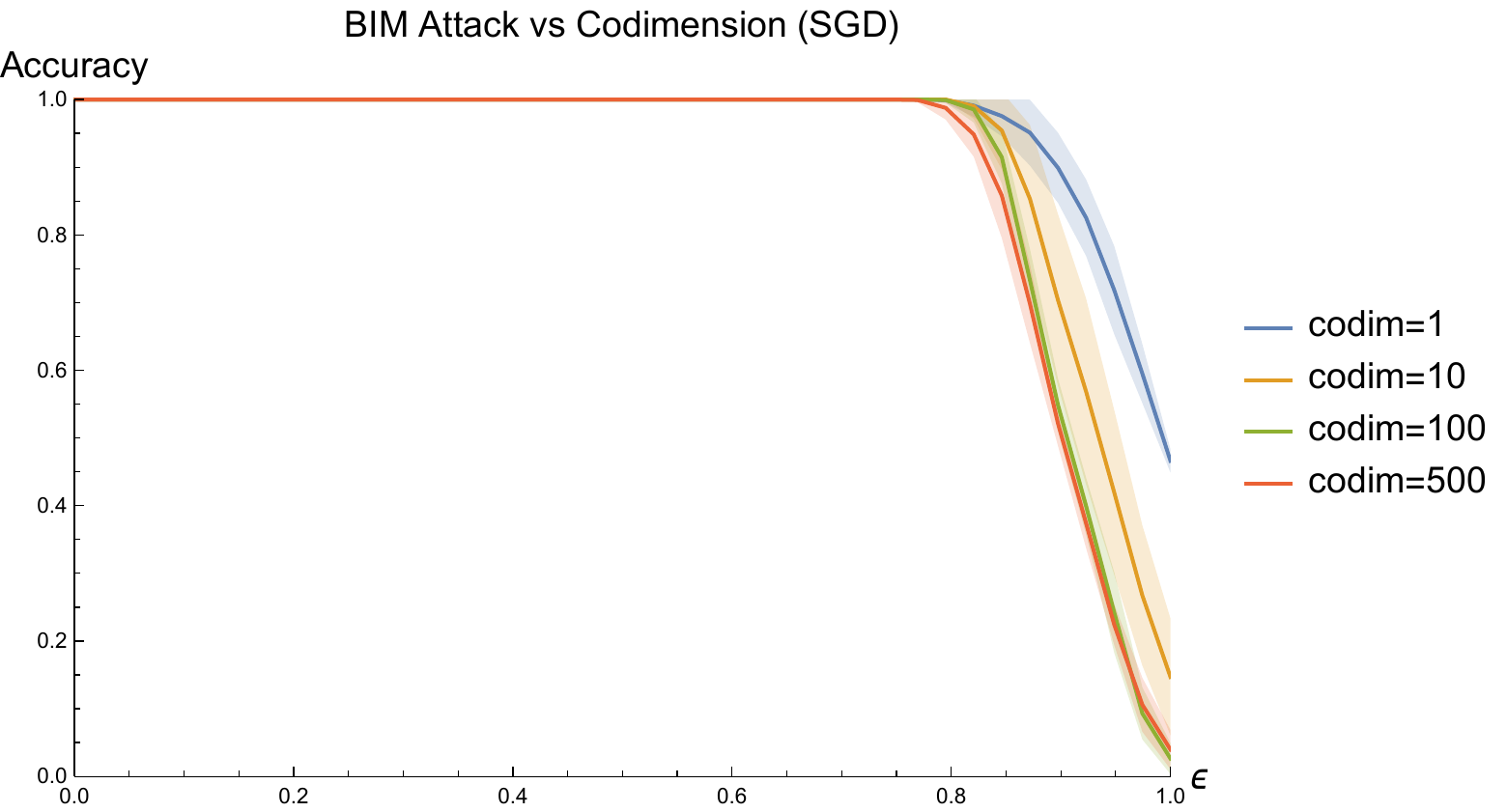}
\end{subfigure}
\caption{As in the case of training with Adam, we see a steady decrease in robustness on the {\sc Circles} dataset as the codimension increases when training with SGD.}
\label{fig:codimexpsgd}
\end{center}
\end{figure} 

Next we consider the adversarial training procedure of \cite{Madry17} using SGD instead of Adam. We note that the authors of \cite{Madry17} use Adam in their own experiments. Figure~\ref{fig:advtrainsgd} shows that the result is consist with the result in Section~\ref{ssec:codim}. Again SGD presents with lower variance. 

\begin{figure}[h!]
\begin{center}
\includegraphics[width=0.49\linewidth]{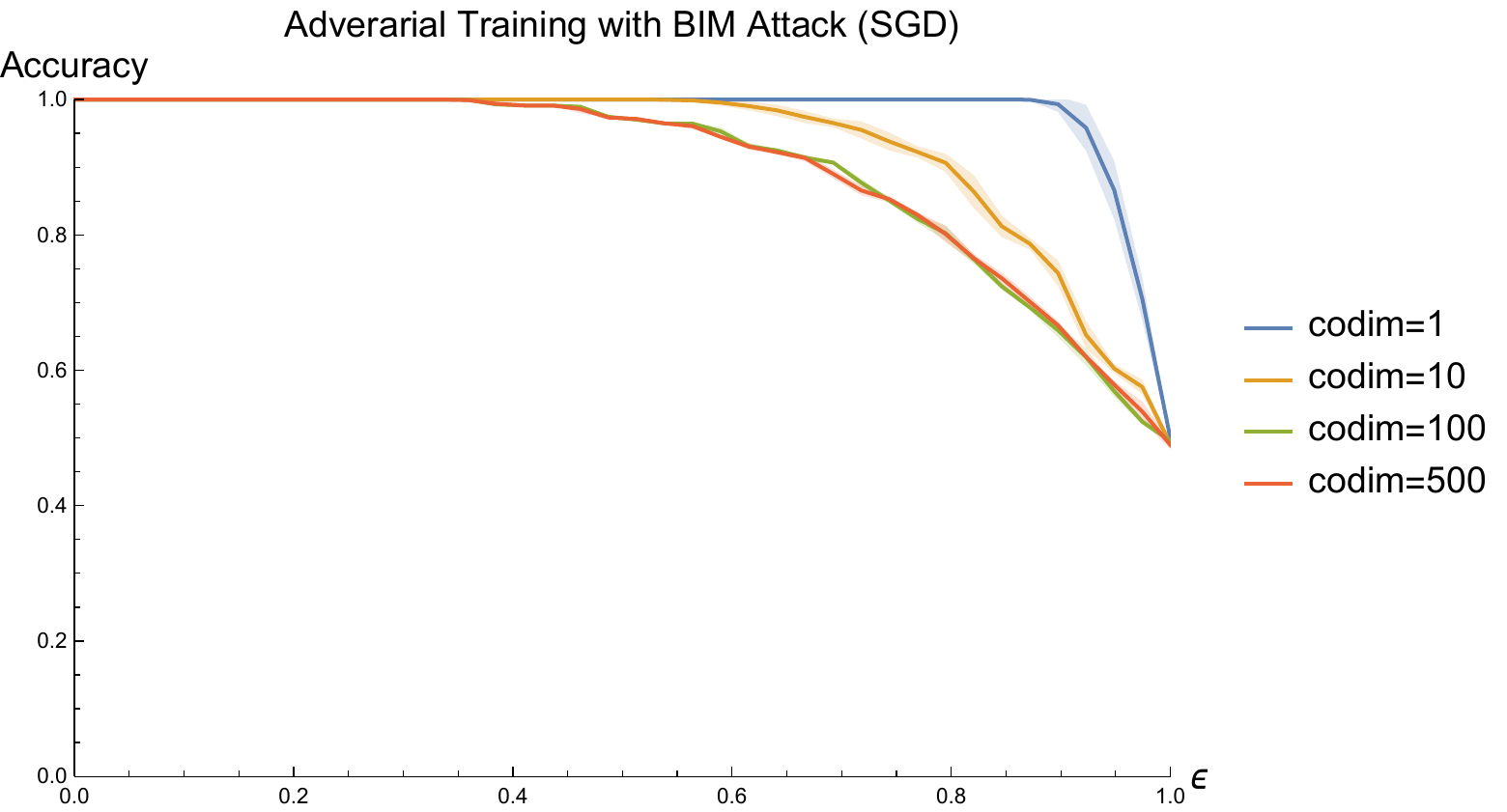}
\caption{Adverarial training with a PGD adversary, as in Figure~\ref{fig:codimexp}, using SGD. Similarly we see a drop in robustness as the codimension increases.}
\label{fig:advtrainsgd}
\end{center}
\end{figure} 

\subsection{Reproducing Results using FGSM}
\label{ssec:fgsmexps}
In Section~\ref{ssec:codim} we evaluated the robustness of nearest neighbors against BIM attacks under the $\|\cdot\|_{\infty}$ on MNIST. In Figure~\ref{fig:nnexpfgsm} we evaluate the robustness of nearest neighbors against FGSM attacks under the $\|\cdot\|_{\infty}$ on MNIST. We use the naturally pretrained (natural) and adversarially pretrained (robust) convolutional models provided by \cite{Madry17}\footnote{\url{https://github.com/MadryLab/mnist_challenge}}. Figure \ref{fig:nnexpfgsm} (Left) shows that nearest neighbors is substantially more robust to FGSM attacks than the naturally trained model. Figure \ref{fig:nnexpfgsm} (Right) shows that nearest neighbors is comparable to the robust model up to $\epsilon = 0.3$, which is the value for which the robust model was trained. After $\epsilon = 0.3$, nearest neighbors is substantially more robust to FGSM attacks than the robust model. At $\epsilon = 0.5$, nearest neighbors maintains accuracy of $78\%$ to adversarial perturbations that cause the accuracy of the robust model to drop to $39\%$. 

\label{ssec:fgsmexp}
\begin{figure}[h!]
\begin{center}
\begin{subfigure}{0.49\textwidth}
\includegraphics[width=0.99\linewidth]{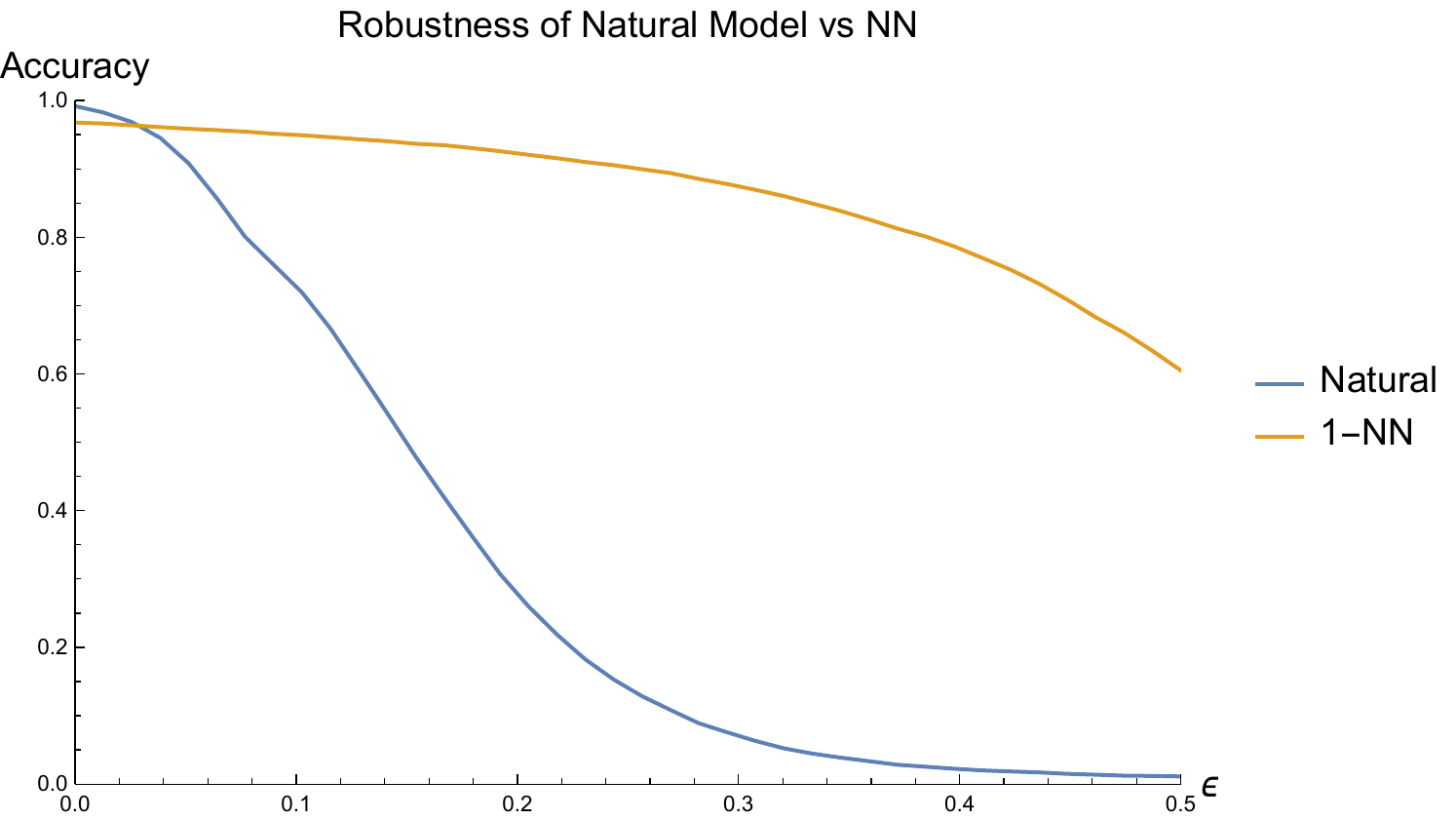}
\end{subfigure}
\begin{subfigure}{0.49\textwidth}
\includegraphics[width=0.99\linewidth]{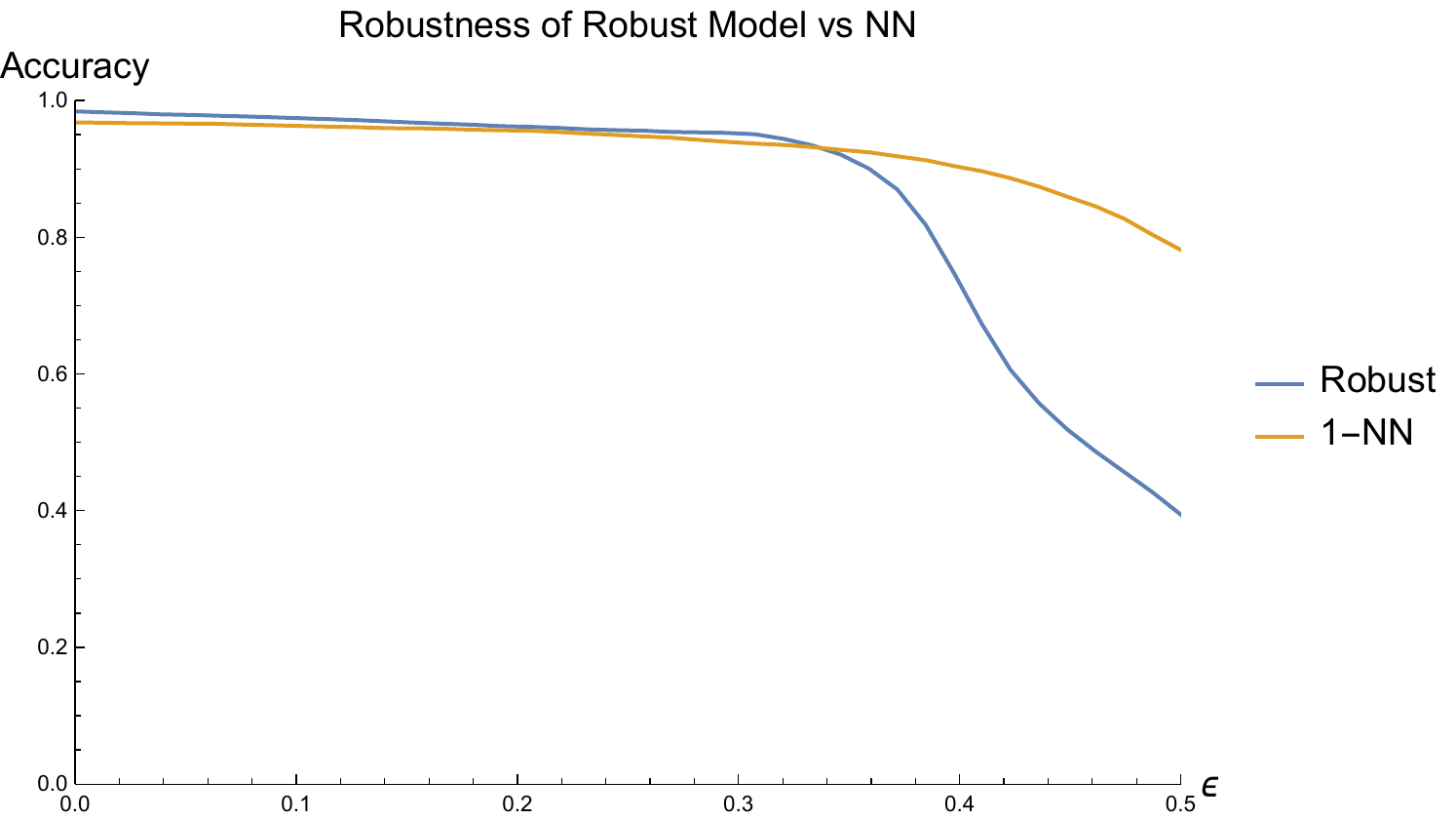}
\end{subfigure}
\caption{Robustness of nearest neighbors against the naturally trained (left) and adversarially trained (right) convolutional models of \cite{Madry17} against FGSM attacks under the $\|\cdot\|_{\infty}$ norm on MNIST.}
\label{fig:nnexpfgsm}
\end{center}
\end{figure} 

\section{The Madry Defense Suffers from Obfuscated Gradients}
\label{sec:madryobf}
\cite{Athalye18} identified the problem of ``obfuscated gradients'', a type of a gradient masking (\cite{Papernot17}) that many proposed defenses employed to defend against adversarial examples. They identified three different types of obfuscated gradients: shattered gradients, stochastic gradeints, and exploding/vanishing gradients. They examined nine recently proposed defenses, concluded that seven suffered from at least one type of obfuscated gradient, and showed how to circumvent each type of obfuscated gradient and thus each defense that employed obfuscated gradients.

Regarding the work of \cite{Madry17}, \cite{Athalye18} stated ``We believe this approach does not cause obfuscated gradients''. They note that ``our experiments with optimization based attacks do succeed with some probability''. In this section we provide evidence that the defense of \cite{Madry17} \emph{does} suffer from obfuscated gradients, specifically shattered gradients. Shattered gradients occur when a defence causes the gradient field to be ``nonexistent or incorrect'' (\cite{Athalye18}). Specifically we provide evidence that the defense of \cite{Madry17} works by shattering the gradient field of the loss function around the data manifolds. 

In Figure \ref{fig:gradfield} (Left) we show the normalized gradient field of the loss function for a network trained on a 2-dimensional version of our {\sc Planes} dataset using the adversarial training procedure of \cite{Madry17} with a PGD adversary. While the gradients have meaningful directions, Figure~\ref{fig:gradfield} (Left) shows that magnitude of the gradient field is nearly $0$ everywhere around the data manifolds, which are at $y = 0$ and $y = 2$. The only notable gradients are near the decision axis which is at $y = 1$.

\begin{figure}[h!]
\begin{center}
\begin{subfigure}{0.41\textwidth}
\includegraphics[width=0.98\linewidth]{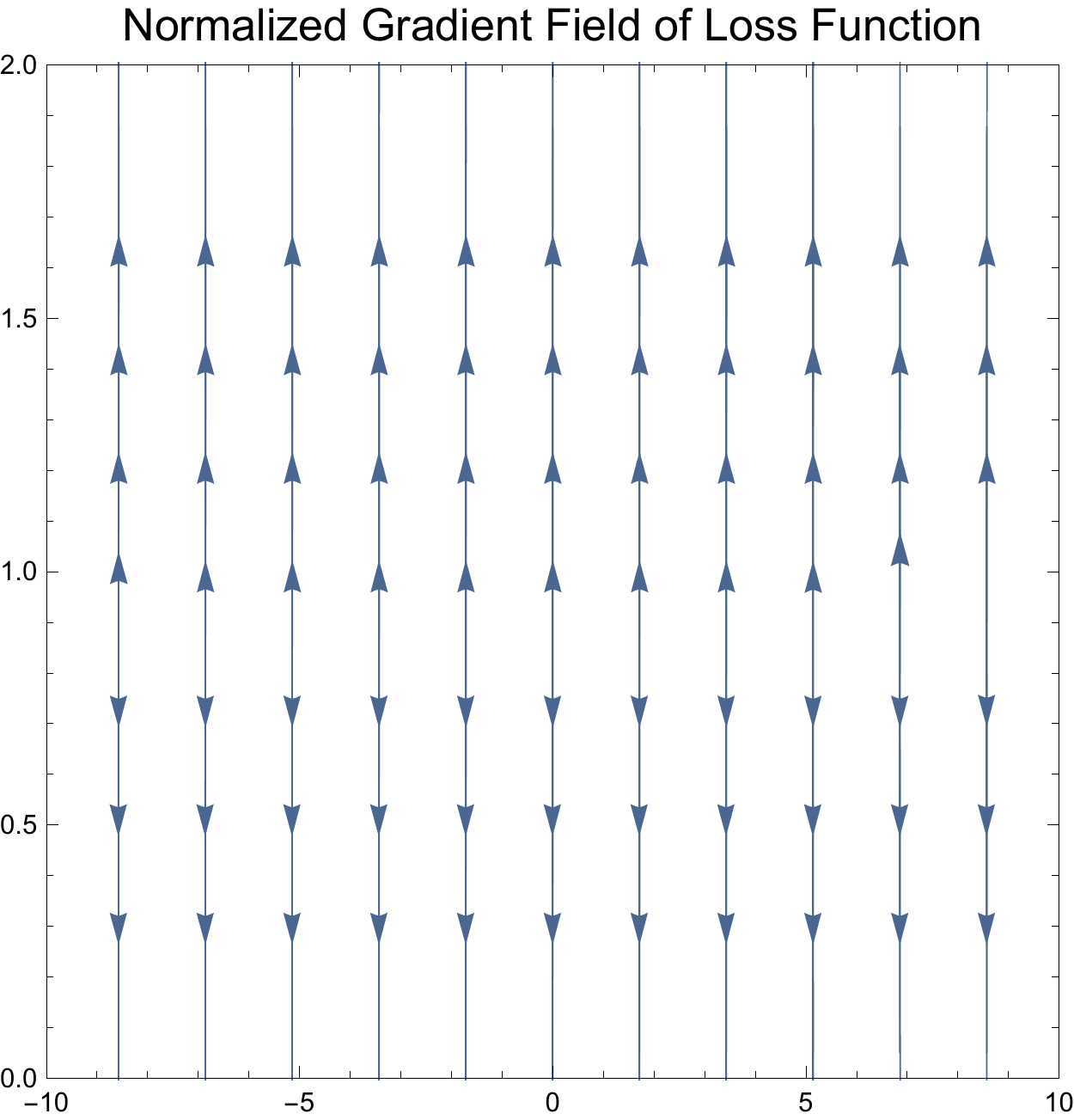}
\end{subfigure}
\begin{subfigure}{0.49\textwidth}
\includegraphics[width=0.98\linewidth]{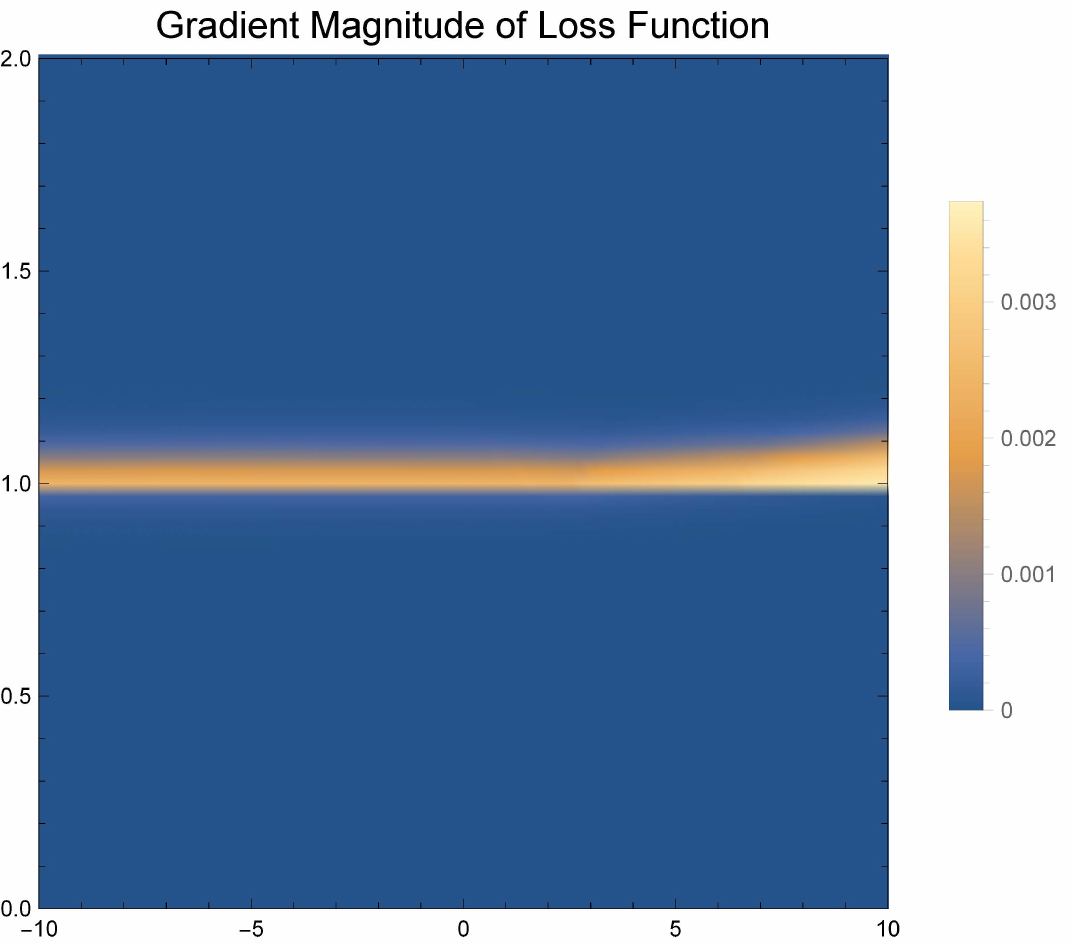}
\end{subfigure}
\caption{(Left) The normalized gradient field of the loss for an adversarially trained network. (Right) The magnitudes of the gradient. Notice that the gradients are largely $0$ except at the decision axis $y=1$.}
\label{fig:gradfield}
\end{center}
\end{figure} 

One criteria that \cite{Athalye18} propose for identifying obfuscated gradients is whether one-step attacks perform better than iterative attacks. The reason this criteria is useful for identifying obfuscated gradients is because one-step attacks like FGSM first normalize the gradient, ignoring its magnitude, then take as large of a step as allowed in the direction of the normalized gradient. So long as the gradient \emph{on the manifold} points towards the decision boundary, FGSM will be effective at finding an adversarial example. 

In Figure \ref{fig:advexamples} we show the adversarial examples generated using PGD (left), FGSM (center), and BIM (right) for $\epsilon = 1$ starting at the test set for the {\sc Planes} dataset. FGSM produces adversarial examples at the decision axis $y = 1$, exactly where we would expect. Notice that all of the adversarial perturbation is normal to the data manifold, suggesting that the gradient on the manifold points towards the decision boundary. However the adversarial examples produced by PGD lie closer to the manifold from which the example was generated. 

PGD splits the total perturbation between both the normal and the tangent spaces of the data manifold, as shown by the arrows in Figure~\ref{fig:advexamples}. This suggests that, when trained adversarially, the network learned a gradient field that has small but correct gradients on the data manifold, but gradients that curve in the tangent directions immediately \emph{off the manifold}. 

Lastly notice that BIM, another iterative method, also produces adversarial examples that are near the decision axis. \cite{Athalye18} cite success with iterative based optimization procedures as evidence against obfuscated gradients. However BIM also ignores the magnitude of the gradient, as it simply applies FGSM iteratively. The network has learned a gradient field that is overfit to the particulars of the PGD attack. BIM successfully navigates this gradient field, while PGD does not. While the network is robust to PGD attacks at test time, it is less robust to FGSM and BIM attacks.  

\begin{figure}[h!]
\begin{center}
\begin{subfigure}{0.31\textwidth}
\includegraphics[width=\linewidth]{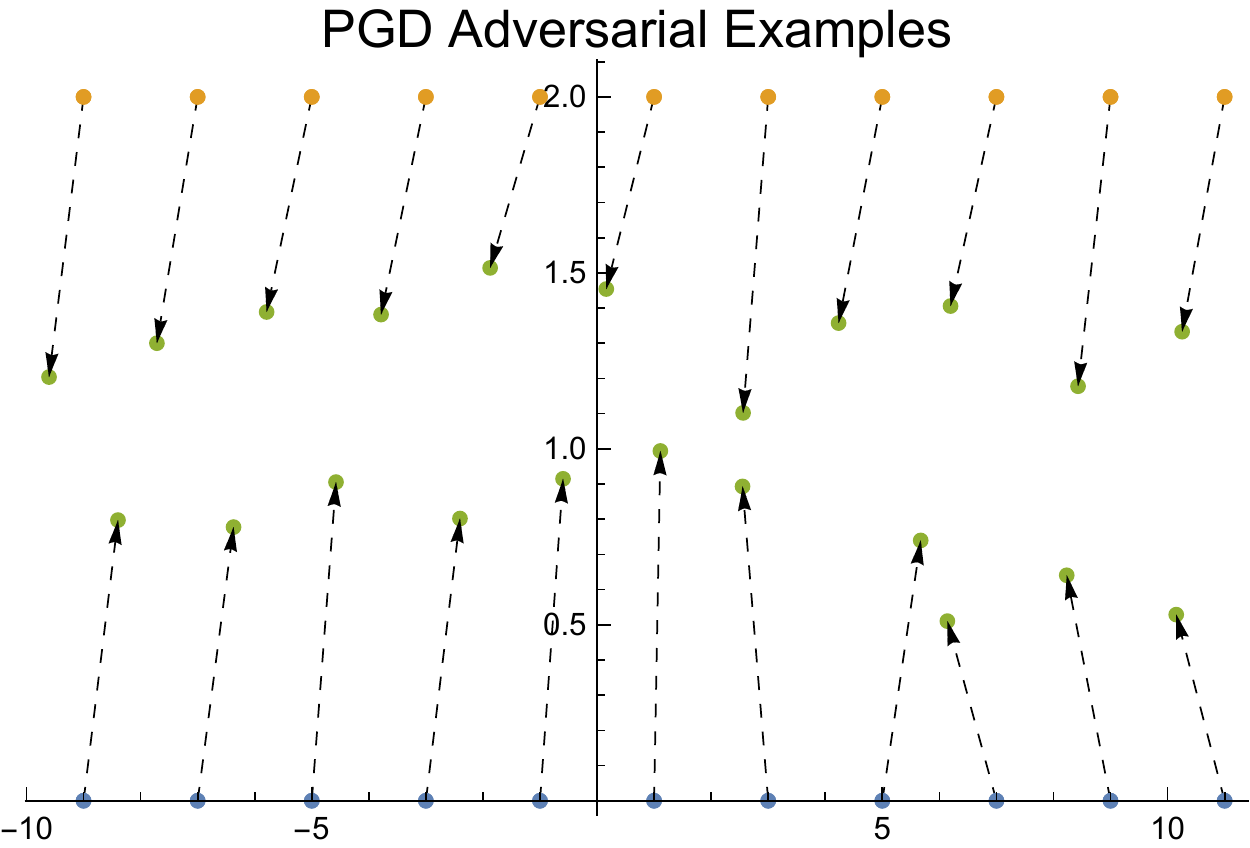}
\end{subfigure}
\begin{subfigure}{0.31\textwidth}
\includegraphics[width=\linewidth]{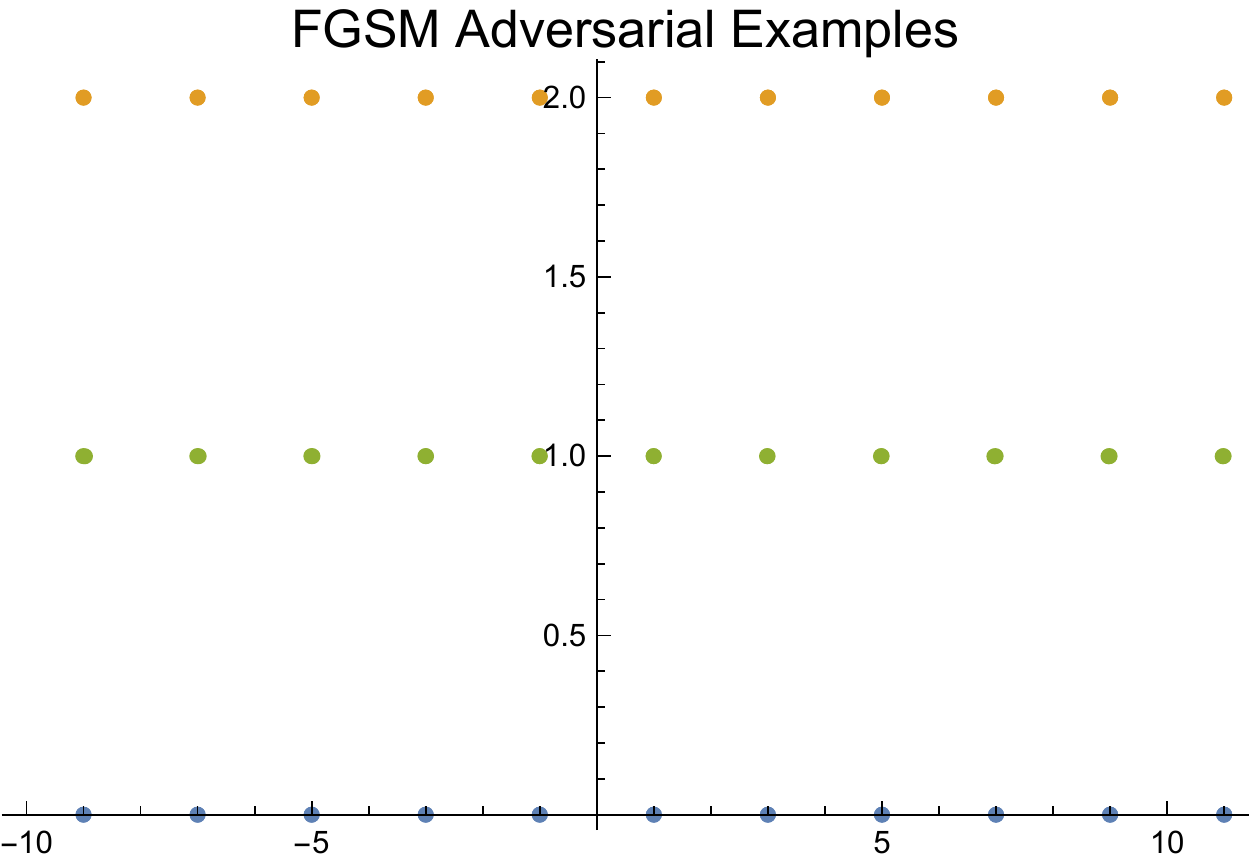}
\end{subfigure}
\begin{subfigure}{0.31\textwidth}
\includegraphics[width=\linewidth]{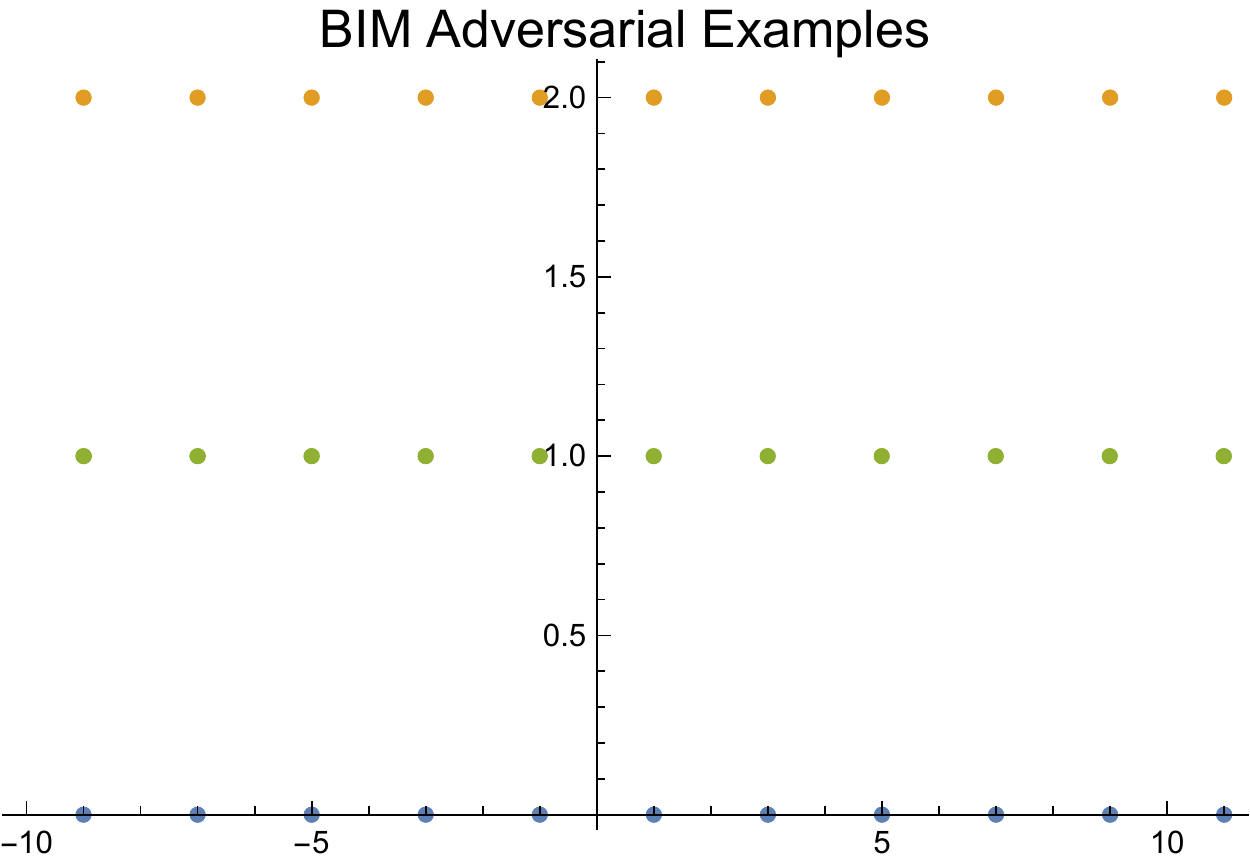}
\end{subfigure}
\caption{Adverarial examples generated using PGD (left), FGSM (center), and BIM (right). While the network is robust to PGD attacks, FGSM and BIM attacks are more effective because they ignore the magnitude of the gradient. For PGD we draw arrows from the test sample to the adversarial example generated from that point to aid the reader.}
\label{fig:advexamples}
\end{center}
\end{figure} 

\section{Implementation Details}
\label{sec:impdets}
For the iterative attacks BIM and PGD, we set the number of iterations to $30$ with a step size of $\epsilon_{\text{step}} = 0.05$ per iteration. 

Our controlled experiments on synthetic data consider a fully connected network with 1 hidden layer, 100 hidden units, and ReLU activations. This model architecture is more than capable of representing a nearly perfect robust decision boundary for both {\sc Circles} and {\sc Planes}, the latter of which is linearly separable. We set the learning rate for Adam as $\alpha = 0.1$, which we found to work best for our datasets. The parameters for the exponential decay of the first and second moment estimates were set to $\beta_1 = 0.9$ and $\beta_2 = 0.999$. We set the learning rate for SGD as $\alpha = 0.1$ and decrease the learning rate by a factor of $10$ every $100$ epochs. We train all of our models for $250$ epochs, following \cite{Wilson17}. We train using a cross-entropy loss.

All of our experiments are implemented using PyTorch. When comparing against a published result we use publicly available repositories, if able. For the robust loss of \cite{Wong18a}, we use the code provided by the authors\footnote{\url{https://github.com/locuslab/convex_adversarial}}.The provided implementation\footnote{\url{https://github.com/MadryLab/mnist_challenge}} of the adversarial training procedure of \cite{Madry17} considers a PGD adversary with $\|\cdot\|_{\infty}$-perturbations. We reimplemented their adversarial training procedure for $\|\cdot\|_{2}$-perturbations following their implementation and using the PGD attack implemented in the cleverhans library (\cite{Papernot18}).

The models of \cite{Madry17} consist of two convolutional layers with 32 and 64 filters respectively, each followed by $2 \times 2$ max pooling. After the two convolutional layers, there are two fully connected layers each with $1024$ hidden units. 

\section{Volume Arguments for $d$-Spheres}
\label{sec:volumespheres}
Let $S \subset \R^{d+1}$ be a unit $d$-sphere embedded in $\R^{d+1}$. The volume of $S^{\epsilon}$ is given by 
\begin{equation}
\vol{S^{\epsilon}} = \frac{\pi^{d/2} ((1 +\epsilon)^d - (1-\epsilon)^d)}{\Gamma(1 + \frac{d}{2})},
\end{equation} 
where $\Gamma$ denotes the gamma function. Let $X \subset S$ be a finite sample of size $n$ of $S$. The set $X^{\epsilon}$ is the set of all $\epsilon$ perturbations of points in $X$ under the norm $\| \cdot \|_{2}$. How well does $X^{\epsilon}$ approximate $S^{\epsilon}$ as a function of $n, d$ and $\epsilon$? 

To answer this question we upper bound the ratio $\vol{X^{\epsilon}} / \vol{S^{\epsilon}}$ by generously assuming that the balls $B(X_i, \epsilon)$ are disjoint. The resulting upper bound is

\begin{equation}
\label{equ:spheresvolumeupperbound}
\frac{\vol{X^{\epsilon}}}{\vol{S^{\epsilon}}} \leq \frac{n \vol{B_{\epsilon}}}{\vol{S^{\epsilon}}} = \frac{n \epsilon^d}{(1+\epsilon)^d - (1-\epsilon)^d}.
\end{equation}

In Figure \ref{fig:volume} we show three different views of this bound. In Figure \ref{fig:volume} (Left) we set $n = 10^{12}$ and plot four different values of $\epsilon$; in each case the percentage of volume of $S^{\epsilon}$ that is covered by $X^{\epsilon}$ quickly approaches $0$. Similarly, in Figure \ref{fig:volume} (Center), if we fix $\epsilon = 1$ and plot four different values of $n$, in each case we have the same result. Finally in Figure \ref{fig:volume} (Right) we plot a lower bound on number of samples necessary to cover $S^{\epsilon}$ by $X^{\epsilon}$ for four different values of $\epsilon$; in each case the number of samples necessary grows exponentially with the dimension. 

\begin{figure}[h!]
\begin{center}
\begin{subfigure}{0.32\textwidth}
\includegraphics[width=0.98\linewidth]{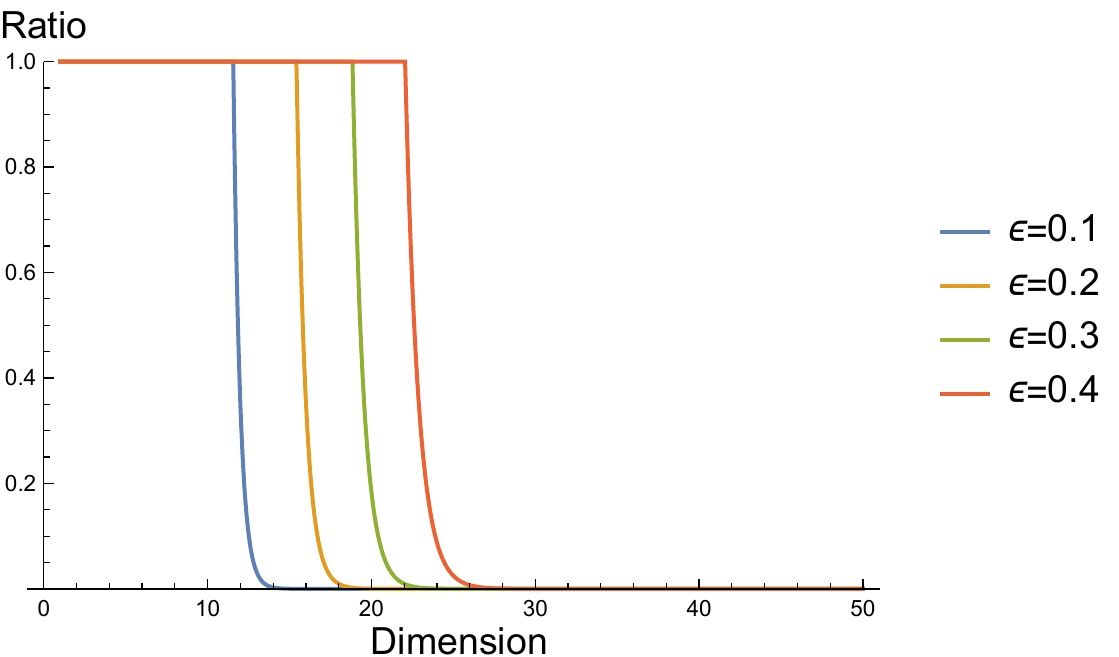}
\end{subfigure}
\begin{subfigure}{0.32\textwidth}
\includegraphics[width=0.98\linewidth]{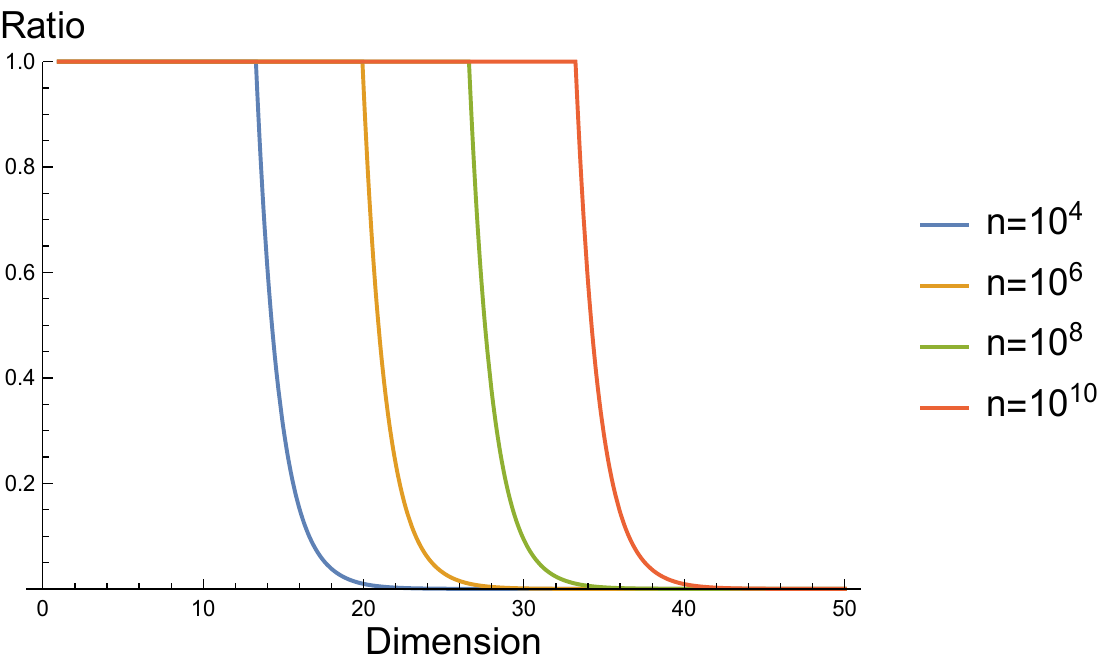}
\end{subfigure}
\begin{subfigure}{0.32\textwidth}
\includegraphics[width=0.98\linewidth]{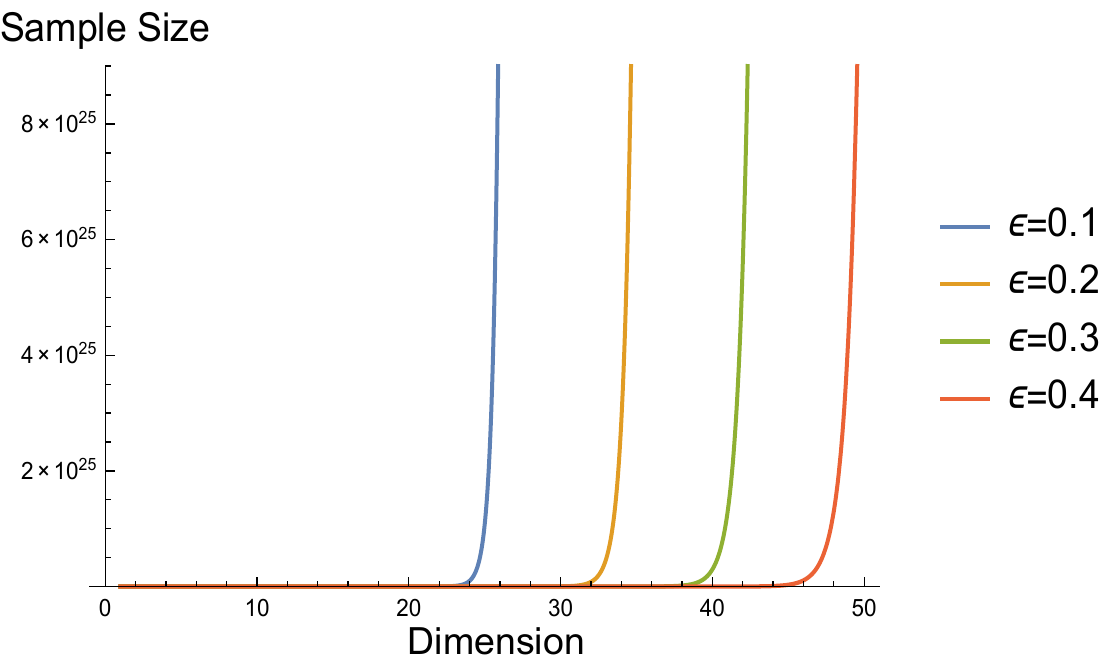}
\end{subfigure}
\caption{Three different perspectives on our upper bound in Equation \ref{equ:spheresvolumeupperbound}. (Left, Center) In each case the percentage of $S^{\epsilon}$ covered by $X^{\epsilon}$ goes to $0$. (Right) The number of points necessary to cover $S^{\epsilon}$ by $X^{\epsilon}$ grows exponentially with the dimension.}
\label{fig:volume}
\end{center}
\end{figure} 

\section{Voronoi Diagrams and Delaunay Triangulations}
\label{sec:vordel}
Let $X \subset \R^d$ be a finite set of $n$ points. The \emph{Voronoi diagram} of $X$, denoted $\Vor{X}$, under the metric $d(\cdot, \cdot)$ is a subdivision of $\R^d$ into $n$ cells where each cell is defined as
\begin{equation}
\Vor{v} = \{x \in \R^d: d(x, v) \leq d(x, u), \forall u \in X \backslash \{v\}\}. 
\end{equation}
In words, the Voronoi cell $\Vor{v}$ of $v \in X$ is the set of all points in $\R^d$ that are closer to $v$ than any other sample point $u \neq v$ in $X$. The Voronoi diagram is then defined as the set of all Voronoi cells, $\Vor{X} = \{\Vor{v}: v \in X\}$. When $d(\cdot, \cdot)$ is induced by the norm $\|\cdot \|_{2}$, the Voronoi cells are convex. See Figure \ref{fig:vordel}. 
 
\begin{figure}[h!]
\begin{center}
\includegraphics[width=0.7\linewidth]{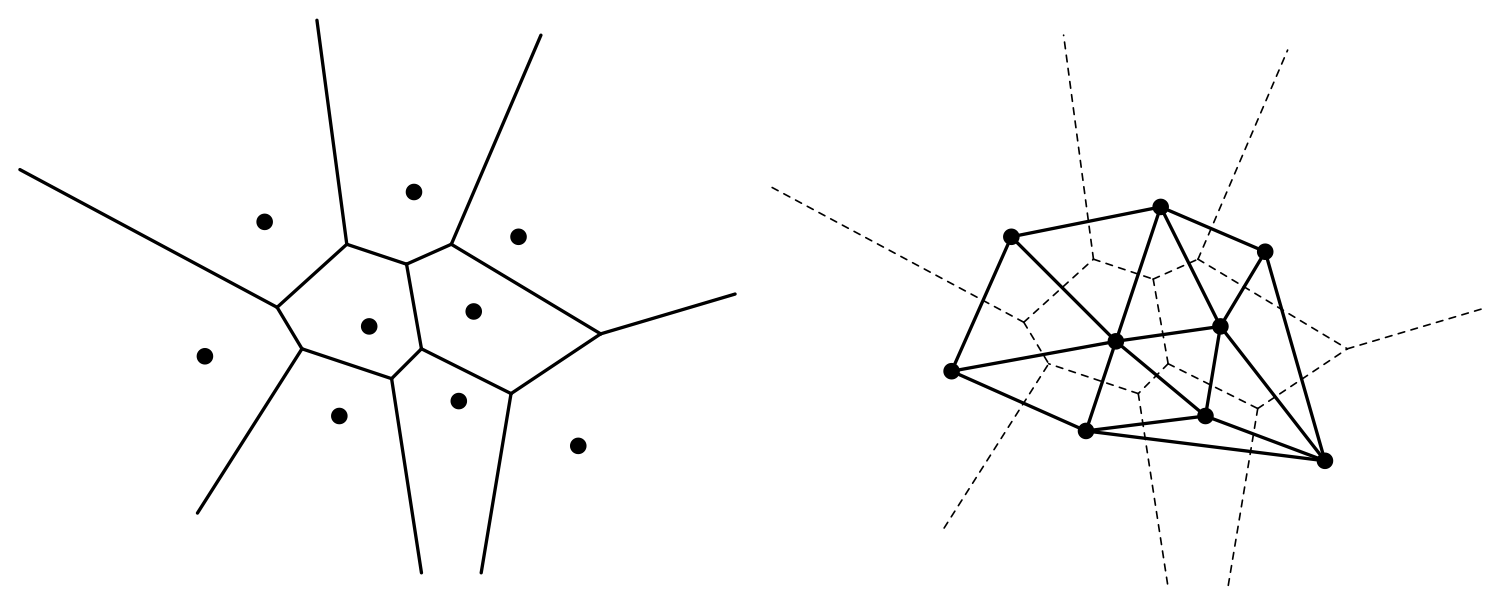}
\end{center}
\caption{The Voronoi diagram of a set of points in $\R^2$ (left) and its dual the Delaunay triangulation (right).}
\label{fig:vordel}
\end{figure}

The \emph{Delaunay triangulation} of $X$, denoted $\Del{X}$ is a triangulation of the convex hull of $X$ into $d$-simplices. Every $d$-simplex $\tau \in \Del{X}$, as well as every lower-dimensional face of $\tau$, has the defining property that there exists an empty circumscribing ball $B$ such that the vertices of $\tau$ lie on the boundary of $B$ and the interior of $B$ is free from any points in $X$. See Figure \ref{fig:vordel}. This \emph{empty circumscribing ball} property of Delaunay triangulations implies many desirable properties that are useful in mesh generation (\cite{Cheng12}) and manifold reconstruction (\cite{Edelsbrunner97}). The Delaunay triangulation of a point set always exists, but is not unique in general.

There exists a well known \emph{duality} between the Voronoi diagram and the Delaunay triangulation of $X$. For every $j$-dimensional face $\sigma \in \Vor{X}$ there exist a dual $(d - j)$-dimensional simplex denoted $\sigma^{*} \in \Del{X}$ whose $d - j + 1$ vertices are the $d-j+1$ vertices of $X$ whose Voronoi cells intersect at $\sigma$. In particular, every $d$-cell of $\Vor{X}$ is dual to the vertex of $\Del{X}$ that generates that cell, and every $(d-1)$-face of $\Vor{X}$ is dual to an edge of $\Del{X}$.

A nearest neighbor classifier $f_{nn}$ given a query point $q$ simply returns the class of the point in $X$ that generated the Voronoi cell in which $q$ lies. Thus the decision boundary of $f_{nn}$ is the union of $(d-1)$ and lower dimensional Voronoi faces. Furthermore, when $X$ is a dense sample of a manifold $\mathcal{M}$, the Voronoi cells are well known to be elongated in the directions normal to $\mathcal{M}$ \cite{Dey07}. This fact underlies many of our results. 

\newpage
\section{Visualization of Decision Boundaries}
In Figure \ref{fig:visdb} we provide visualizations of the decision boundaries learned by (a-d) our fully connected network architecture with cross entropy loss for various optimization procedures and various training lengths, and (e) a nearest neighbor classifier for $\|\cdot\|_{2}$ on the training set. Specifically we train on the {\sc Circles} dataset, embedded in $\R^3$. The training set is entirely contained in the $xy$-plane. We then visualize cross sections of the decision boundary for various values of $z \in [-5, 5]$. We color points labeled as in the same class as the outer circle with the color blue and points labeled as in the same class as the inner circle as orange. Figure~\ref{fig:visdb} shows the cross sections of the decision boundaries, averaged over 20 retrainings. The visualization shows how various optimization algorithms learn decision boundaries that extend into the normal directions where no data is provided.  
\begin{figure}[h!]
\begin{center}
\begin{subfigure}{\textwidth}
\includegraphics[width=0.98\linewidth]{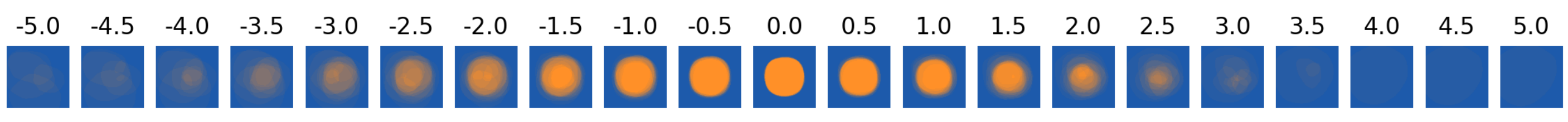}
\caption{Decision boundary learned by running SGD for 25 epochs, averaged over 20 trainings.}
\end{subfigure}
\begin{subfigure}{\textwidth}
\includegraphics[width=0.98\linewidth]{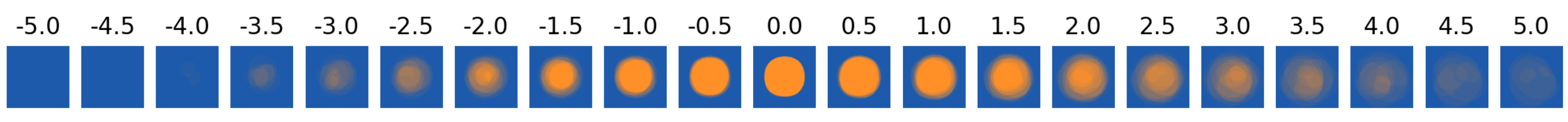}
\caption{Decision boundary learned by running SGD for 250 epochs, averaged over 20 trainings.}
\end{subfigure}
\begin{subfigure}{\textwidth}
\includegraphics[width=0.98\linewidth]{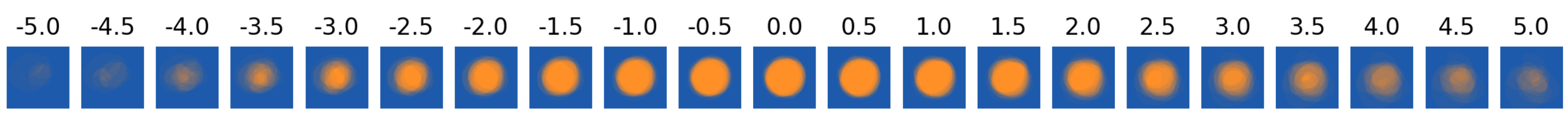}
\caption{Decision boundary learned by running Adam for 25 epochs, averaged over 20 trainings.}
\end{subfigure}
\begin{subfigure}{\textwidth}
\includegraphics[width=0.98\linewidth]{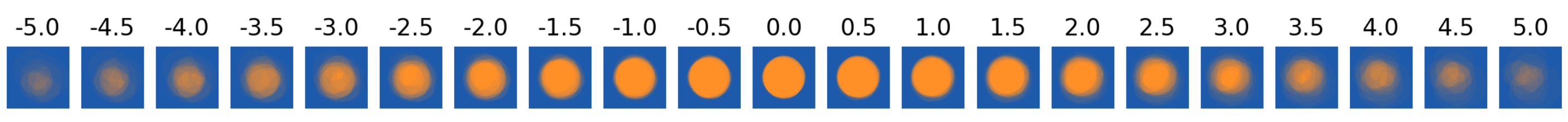}
\caption{Decision boundary learned by running Adam for 250 epochs, averaged over 20 trainings.}
\end{subfigure}
\begin{subfigure}{\textwidth}
\includegraphics[width=0.98\linewidth]{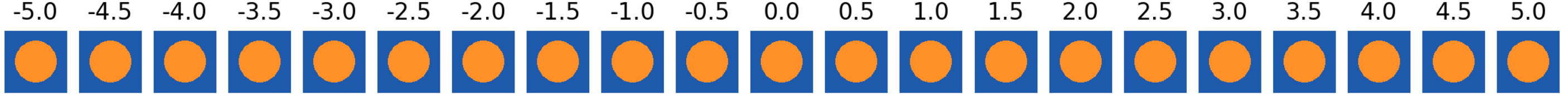}
\caption{Decision boundary of a nearest neighbor classifier for the $\|\cdot\|_{2}$ norm.}
\end{subfigure}
\caption{The training set lies entirely in the $xy$-plane, shown here at $z=0$. We visualize cross sections of the decision boundary for $z \in [-5,5]$ for various optimization algorithms training for different lengths of time. The results show how various optimization algorithm learn decision boundaries that extend into the normal directions in which there is no data provided. We average the decision boundary over 20 retrainings, so faded results indicate how frequently a point was labeled a specific class.}
\label{fig:visdb}
\end{center}
\end{figure} 

\end{document}